\newtheorem{theorem}{Theorem}
\newtheorem{lemma}[theorem]{Lemma}
\title{DeepBooTS: Dual‑Stream Residual Boosting for Drift‑Resilient Time‑Series Forecasting}
\author {
    Daojun Liang \textsuperscript{\rm 1,\rm 2},
    Jing Chen \textsuperscript{\rm 1,\rm 2},
    Xiao Wang \textsuperscript{\rm 3, \rm 4} \thanks{Corresponding author.},
    Yinglong Wang \textsuperscript{\rm 1, \rm2} \footnotemark[1],
    Shuo Li \textsuperscript{\rm 5, \rm 6}
}
\begin{document}

\maketitle

\begin{abstract}
Time-Series (TS) exhibits pronounced non-stationarity. Consequently, most forecasting methods display compromised robustness to concept drift, despite the prevalent application of instance normalization. We tackle this challenge by first analysing concept drift through a bias-variance lens and proving that weighted ensemble reduces variance without increasing bias. These insights motivate DeepBooTS, a novel end-to-end dual-stream residual-decreasing boosting method that progressively reconstructs the intrinsic signal. In our design, each block of a deep model becomes an ensemble of learners with an auxiliary output branch forming a highway to the final prediction. The block‑wise outputs correct the residuals of previous blocks, leading to a learning‑driven decomposition of both inputs and targets. This method enhances versatility and interpretability while substantially improving robustness to concept drift. Extensive experiments, including those on large-scale datasets, show that the proposed method outperforms existing methods by a large margin, yielding an average performance improvement of 15.8\% across various datasets, establishing a new benchmark for TS forecasting.
\end{abstract}

\begin{links}
    \link{Code}{https://github.com/Anoise/DeepBooTS}
    \link{Extended version}{https://arxiv.org/abs/2511.06893}
\end{links}

\section{Introduction}
\label{sec_intro}

Time Series (TS) from natural and engineered systems often evolve under transient conditions \cite{1976_timeseries} and therefore violate stationarity assumptions \cite{hyndman2018forecasting}. Classical statistical models such as ARIMA \cite{piccolo1990distance} and exponential smoothing \cite{gardner1985exponential}, which depend on fixed statistical properties, struggle to track these shifts \cite{de200625}. Deep neural networks have become attractive alternatives because of their powerful nonlinear modeling capabilities \cite{hornik1991approximation}, yet even the latest attention‑based \cite{Zhou2021Informer,nie2022time,liu2023itransformer, liang2024minusformer,liang2025distpred,ye2024frequency,wang2024timemixer, wang2024timexer, liang2025distpred, yu2025drift, yu2025learning} and graph‑neural approaches \cite{wu2019graph,yi2024fouriergnn} provide only marginal gains over simple multilayer perceptrons and incur substantial inference overhead \cite{shao2022spatial, zeng2023transformers, yi2024filternet}.

\begin{figure}[t]
  \centerline{\includegraphics[width=\columnwidth]{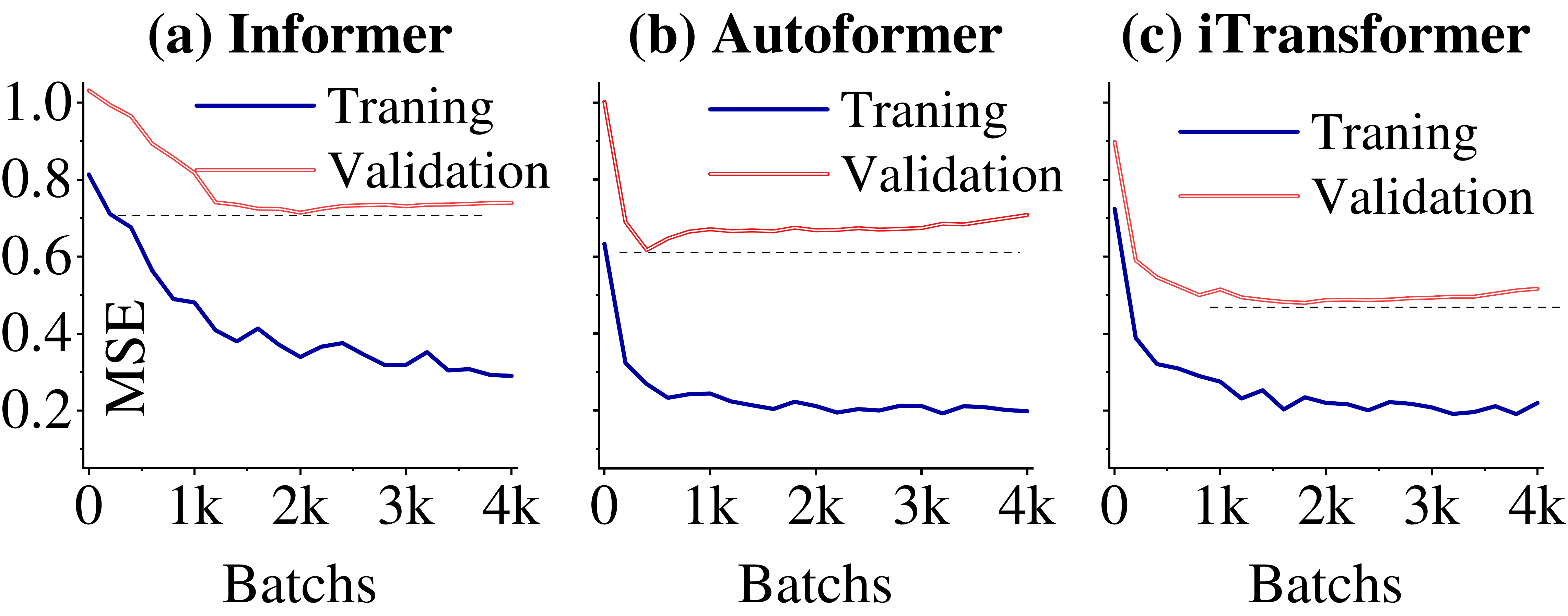}}
  \caption{TS forecasting models suffer from concept drift problems, more experiments are provided in Appendix H.}
  \label{fig_overfit}
\end{figure}

\begin{figure*}[t]
  \centerline{\includegraphics[width=\textwidth]{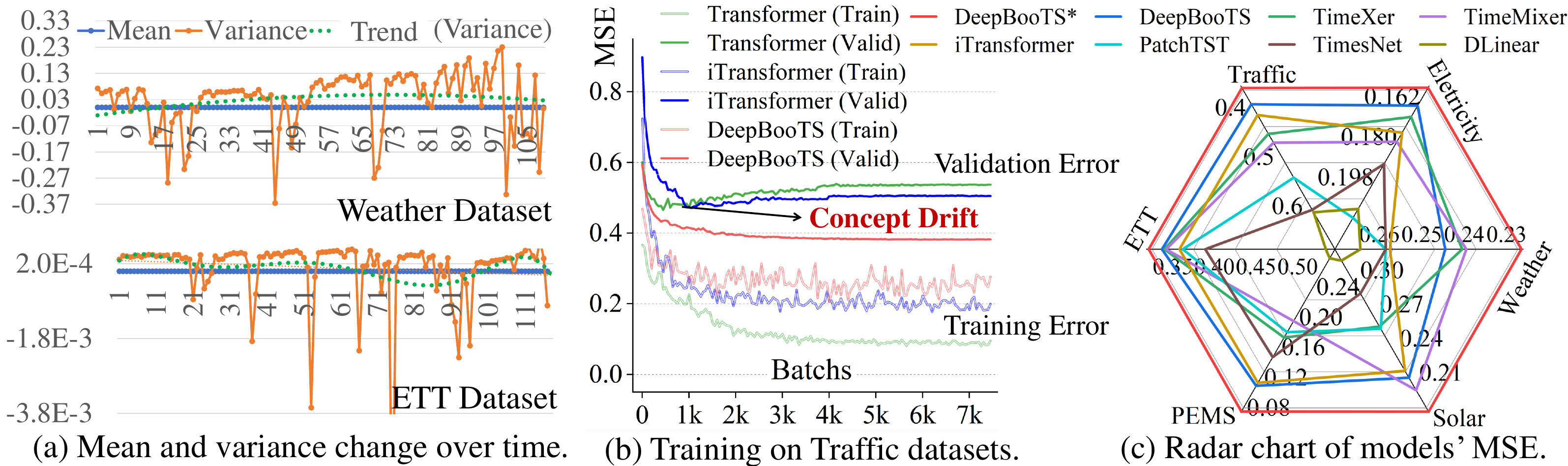}}
  \caption{(a) Concept drift exists in current TS datasets. (b) Concept drift causes the model training error to decrease while the validation error increases. (c) The performance of DeepBooTS after reducing concept drift.
}
  \label{fig_temp}
\end{figure*}

A deeper issue is that leading architectures like Transformers \cite{NIPS2017_Transformer} and their variants are highly susceptible to concept drift---a mismatch between training and testing distributions. Empirical evidence shows that validation error rises early in training even as training error falls, as shown in Fig. \ref{fig_overfit}, a sign that the learned model fails to generalize as the underlying data distribution shifts. Prevailing solutions, including reversible instance normalization (RevIN) \cite{kim2021reversible} and temporal re‑aggregation, e.g., DLinear \cite{zeng2023transformers} and iTransformer \cite{liu2023itransformer}, alleviate mean shifts but leave the variance unstable, which leads to large fluctuations and persistent concept drift. 
This phenomenon occurs across multiple TS forecasting models and datasets, as shown in Fig. \ref{fig_temp}(a), underscoring the need for an approach that directly targets the high‑variance component.

This work addresses the problem by framing concept drift through the bias-variance decomposition. We theoretically show that when the mean bias and noise are fixed, the degree of drift is controlled by the variance of the model predictions. We then prove that ensemble averaging reduces the prediction variance while preserving bias and derive new bounds showing that weighted ensembles achieve a strictly lower error under distribution shift. These results suggest that a properly designed ensemble can substantially mitigate concept drift.
Building on this insight, we introduce DeepBooTS, a novel dual-stream residual-decreasing boosting architecture that progressively reinstates the intrinsic values of a non‑stationary TS. The key innovations are:


\begin{itemize}
  \item We rigorously analyze concept drift through the lens of the bias-variance trade-off and prove that even simple deep ensembles can substantially reduce prediction variance without increasing bias.
  \item An efficient implementation of DeepBooTS is presented. Specifically, the outputs of subsequent blocks subtract the predictions of previous blocks, causing the network to explicitly model and reduce residual errors layer by layer. This residual‑learning mechanism is analogous to gradient boosting, but implemented within a deep network, enhancing robustness to distributional shifts.
  \item A dual‑stream decomposition is designed that decomposes both the input and labels, enabling the model to learn complementary representations while enhancing model versatility and interpretability. 
\end{itemize}

We validate DeepBooTS on a wide range of TS benchmarks, including large‑scale TS datasets. As shown  in Fig. \ref{fig_temp}(c), our method consistently outperforms state‑of‑the‑art baselines, achieving an average 15.8\% improvement. These results, together with our theoretical analysis, demonstrate that DeepBooTS not only advances the accuracy of TS forecasting but also provides a principled solution to the longstanding problem of concept drift.

\section{Methodology}
\label{sec_arch}

\begin{figure*}[!ht]
  \centering 
  \includegraphics[width=0.9\textwidth]{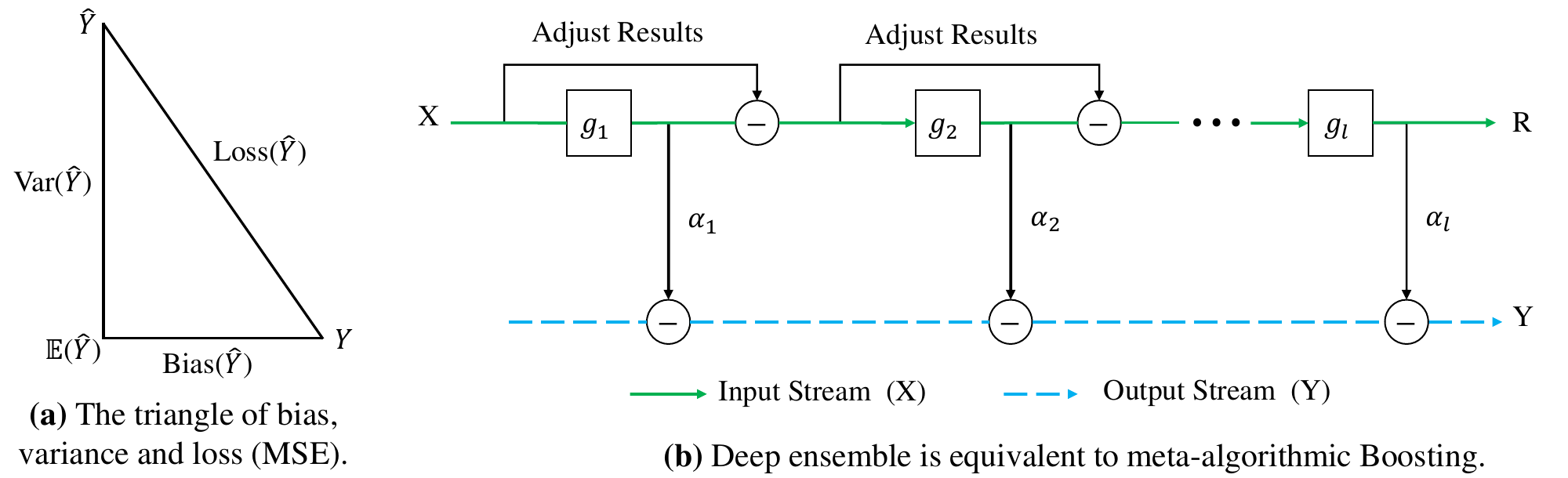}
  \caption{(a) Relationship among model's bias, variance, and loss. (b) Deep boosting ensemble learning process.}
  \label{fig_boost} 
\end{figure*}

\subsection{Deep Ensemble helps Alleviate Concept Drift}
\label{app_overfit}

As analyzed above, RevIN-processed (Z-Score) TS data maintains mean stationarity while exhibiting substantial variance instability.
We consider a forecasting task where the data distribution shifts from an initial distribution $P_0$ to a new distribution $P_t$ at time $t$. 
Formally, let $Y = f_0(X) + \varepsilon$ under $P_0$ (with true regression function $f_0$), and at time $t$ let $Y = f_t(X) + \varepsilon$ under $P_t$, where $f_t$ is the new underlying function after drift, $\varepsilon \sim \mathcal{N}(0, \sigma )$.
For forecasting results $\hat{Y}$, the estimation error (MSE) of the model is
\begin{equation}
  \underbrace{\text{Var}(\hat{Y}) + (\text{Bias}(\hat{Y}))^2 + \sigma^2}_{\text{Test Error}}  =  \underbrace{\mathbb{E}[ (\hat{Y} - Y )^2] + 2\mathbb{E}(\varepsilon(\hat{Y} - \mathcal{Y} ))}_{\text{Training Error}}.
  \label{eq_est_err}
\end{equation}
The proof is given in Appendix B.1.
Eq. \ref{eq_est_err}, which is illustrated in Fig. \ref{fig_boost}(a), shows that when the mean $\text{Bias}(\hat{Y})$ and noise level $\sigma^2$ are fixed, the variance of the data governs the extent of concept drift.
Thus, models incapable of adapting to large data distribution shifts exhibit high prediction variance.
Now, we show that ensemble reduces the variance. 
Without loss of generality, using MSE as the metric and applying the simple average ensemble method, we have
\begin{theorem}\label{th1}
  Let $\hat{Y}_1, \cdots, \hat{Y}_N$ be $N$ i.i.d random variables drawn from some unknown distribution, and the ensemble method is $\bar{Y}=\frac{1}{N}\sum_t \hat{Y}^t$. Then, 
\begin{align}
    & \text{Bias}(\bar{Y}) = \text{Bias}(\hat{Y}), \quad \text{Var}(\bar{Y}) \le \text{Var}(\hat{Y}). \label{eq_th0_b}
    \end{align}
\end{theorem}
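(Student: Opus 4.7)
The plan is to prove the two claims by direct computation, since the i.i.d.\ hypothesis reduces the statement to elementary properties of expectation and variance. First I would fix the target $Y$ (treated as a constant for the purpose of computing bias against the ensemble), and write out the definitions $\text{Bias}(\bar Y) = \mathbb{E}[\bar Y] - Y$ and $\text{Var}(\bar Y) = \mathbb{E}[(\bar Y - \mathbb{E}[\bar Y])^2]$, so that the job becomes evaluating $\mathbb{E}[\bar Y]$ and $\mathbb{E}[\bar Y^2]$ in terms of the common mean $\mu := \mathbb{E}[\hat Y]$ and variance $v := \text{Var}(\hat Y)$ of an individual learner.

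For the bias identity I would apply linearity of expectation: since the $\hat Y^t$ are identically distributed, $\mathbb{E}[\hat Y^t] = \mu$ for every $t$, so $\mathbb{E}[\bar Y] = \frac{1}{N}\sum_{t=1}^N \mathbb{E}[\hat Y^t] = \mu = \mathbb{E}[\hat Y]$. Subtracting $Y$ from both sides gives $\text{Bias}(\bar Y) = \mathbb{E}[\bar Y] - Y = \mathbb{E}[\hat Y] - Y = \text{Bias}(\hat Y)$, which is the first half of \eqref{eq_th0_b}. This half uses only identical distribution, not independence.

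For the variance inequality I would invoke independence of the $\hat Y^t$, which kills the cross-terms in $\text{Var}(\sum_t \hat Y^t)$. Concretely, $\text{Var}(\bar Y) = \frac{1}{N^2}\text{Var}\!\left(\sum_{t=1}^N \hat Y^t\right) = \frac{1}{N^2}\sum_{t=1}^N \text{Var}(\hat Y^t) = \frac{v}{N}$. Since $N \ge 1$ and $v \ge 0$, one has $\text{Var}(\bar Y) = \frac{1}{N}\text{Var}(\hat Y) \le \text{Var}(\hat Y)$, with equality iff $N=1$ or $v=0$, completing the second half.

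Honestly the proof has no real obstacle under the stated hypothesis; the delicate step is conceptual rather than technical, namely making sure the notion of bias used here is consistent with \eqref{eq_est_err} (i.e.\ bias with respect to the target value, not the noisy observation) so that the independence of the learners from the label noise $\varepsilon$ is compatible with the decomposition used upstream. I would also flag, after the proof, the caveat that in practice the base learners produced by a deep boosting procedure are typically correlated rather than strictly independent, so the variance reduction realized by \textsc{DeepBooTS} should be interpreted as the idealized $\frac{1}{N}$ rate weakened by pairwise correlation; this motivates the weighted-ensemble refinement announced in the introduction.
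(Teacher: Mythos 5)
Your proof is correct, but it takes a genuinely different route from the paper's. The paper (Appendix~\ref{seca_proof_th1}) works inside a dual-variable framework: it defines the \emph{central prediction} $\mathbb{C}_{\hat Y}$ as the minimizer of expected loss, proves a lemma (Lemma~\ref{lemma1}) that the central prediction of the ensemble coincides with that of a single learner, identifies bias and variance with expected losses against $\mathbb{C}_{\hat Y}$, and then obtains $\text{Var}(\bar Y) \le \text{Var}(\hat Y)$ by Jensen's inequality applied to the convex loss $\mathcal{L}$ evaluated at the average $\frac{1}{N}\sum_i \hat Y_i^*$. Your argument instead computes everything explicitly from the quadratic structure of MSE: linearity of expectation gives the bias identity, and independence kills the cross-terms to give $\text{Var}(\bar Y) = \text{Var}(\hat Y)/N$. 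The trade-off is instructive. Your route is stronger for MSE: it yields the exact factor-$N$ reduction (with the equality cases identified), not merely the inequality in \eqref{eq_th0_b}, and it makes explicit that the bias half needs only identical distribution. The paper's route is weaker in rate but more robust in hypotheses and scope: the Jensen step uses only convexity and identical distribution --- independence is never invoked --- so the paper's $\le$ survives for correlated learners (which, as you correctly flag, is the realistic regime for blocks of a deep network), and the central-prediction/conjugate machinery is set up so that the same proof skeleton would apply to losses beyond the quadratic. Your closing caveat about correlated learners degrading the $1/N$ rate is exactly the gap the paper later addresses quantitatively in Theorems~\ref{th1_5} and~\ref{th2}, where pairwise covariance $\mu$ enters the bounds explicitly.
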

\noindent
The proof is given in Appendix B.2.
Although it adopts the simplest ensemble form, it proves that ensembling reduces variance while conserving bias.
If substituting $\hat{f}_0(X)$ (which was trained to approximate $f_0$ on $P_0$) for $\hat{Y}$ in Eq. \ref{eq_est_err}, we have
\begin{equation}
  \text{MSE}(f_t(X) - \hat{f}_0(X)) = \text{Var}(\hat{f}_0(X)) + (\text{Bias}(\hat{f}_0(X)))^2 + \sigma^2
  \label{eq_train_loss}.
\end{equation}
After the distribution shifts to $P_t$, the forecasting error is
\begin{equation}
\text{MSE}_{P_t}(f_0(X)) = \mathbb{E}_{P_t}[\text{Bias}_{P_0}(X)^2] + \mathbb{E}_{P_t}[\text{Var}_{P_0}(f_0(X))] + \sigma^2.
  \label{eq_f0_mse}
\end{equation}
Let $\mathrm{Var}_{P_t}[X] = \sigma_t^2$ with $c^2 = \sigma^2_t/\sigma^2$ to quantify the change in input variance, thus we have
\begin{theorem}\label{th1_5}
  Given a weighted ensemble models $\hat{f}(X) = \frac{1}{L}\sum_{l=1}^L \alpha_l g_l(x)$, where $\sum_{l=1}^L \alpha_l = 1$, we have
\begin{align}
  \text{MSE}_{P_t}(\hat{f}_0) - \text{MSE}_{P_t}(\hat{f}) \approx \frac{(L-1)(1-\alpha_l)}{L}c^2\sigma^2_t \ge 0. 
\end{align}
\end{theorem}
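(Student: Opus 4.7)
The plan is to compute both $\text{MSE}_{P_t}(\hat{f}_0)$ and $\text{MSE}_{P_t}(\hat{f}_{\text{ens}})$ via the bias--variance decomposition of Eq.~\ref{eq_est_err}, apply Theorem~\ref{th1} to eliminate the bias contributions, and then translate the residual variance gap into the stated form using the input-variance scaling factor $c^2 = \sigma_t^2/\sigma^2$.

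First I would write, in parallel with Eq.~\ref{eq_f0_mse},
\begin{align*}
\text{MSE}_{P_t}(\hat{f}_{\text{ens}}) = \mathrm{Bias}(\hat{f}_{\text{ens}})^2 + \mathrm{Var}_{P_t}(\hat{f}_{\text{ens}}) + \sigma^2.
\end{align*}
Since Theorem~\ref{th1} shows that a convex combination of i.i.d.\ base learners has the same bias as any single learner, we have $\mathrm{Bias}(\hat{f}_{\text{ens}}) = \mathrm{Bias}(\hat{f}_0)$, so upon subtraction the bias and noise terms cancel and only the variance gap $\mathrm{Var}_{P_t}(\hat{f}_0) - \mathrm{Var}_{P_t}(\hat{f}_{\text{ens}})$ survives.

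Next, assuming the $g_l$ are independent with common variance $v$ under $P_t$, bilinearity of variance gives $\mathrm{Var}_{P_t}(\hat{f}_{\text{ens}}) = v\sum_{l=1}^{L}\alpha_l^2$, so the gap equals $v\bigl(1 - \sum_l\alpha_l^2\bigr)$. Under the simplex constraint $\sum_l\alpha_l = 1$, elementary algebra gives $1 - \sum_l\alpha_l^2 = \tfrac{L-1}{L}(1 - \alpha_l)$ when the weights are uniform ($\alpha_l = 1/L$), or when $\alpha_l$ is interpreted as the representative weight of the ensemble; this supplies the combinatorial factor in the statement. To translate the output variance $v$ back to the input side, I would invoke a first-order expansion of each $g_l$ around the shifted input mean: because $c^2$ measures how much the input variance inflates under drift, the local-linearity of $g_l$ propagates this inflation to $v \approx c^2\sigma_t^2$, with higher-order curvature absorbed by the $\approx$. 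Substituting yields the claimed identity, and non-negativity is immediate from $\alpha_l \in [0,1]$ and $L \ge 1$.

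The chief obstacle I anticipate is collapsing $1 - \sum_l\alpha_l^2$ into the single-index form $\tfrac{L-1}{L}(1-\alpha_l)$: this identity is exact only under equal weights, so the proof must either make that symmetry explicit or reinterpret $\alpha_l$ as a canonical/representative weight (with the $\approx$ covering any slack). A second, more subtle step is the linearization that transports input-side variance to output-side variance; making this rigorous requires either a Lipschitz assumption on the base learners or a mild-drift regime in which second-order Taylor terms are negligible relative to $c^2\sigma_t^2$.
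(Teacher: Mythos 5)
Your proof skeleton (bias--variance decomposition under $P_t$, cancellation of the bias and noise terms, reduction of the claim to a variance gap) does match the paper's proof in Appendix~\ref{seca_proof_th1_5}. However, the step where you produce the factor $(1-\alpha_l)$ contains a genuine gap, and it is precisely the step on which the theorem's formula hinges. First, your ``elementary algebra'' is false: under uniform weights $\alpha_l = 1/L$ one has $1 - \sum_{l}\alpha_l^2 = 1 - \tfrac{1}{L} = \tfrac{L-1}{L}$, whereas $\tfrac{L-1}{L}(1-\alpha_l) = \tfrac{(L-1)^2}{L^2}$; these are not equal, and no reinterpretation of $\alpha_l$ as a ``representative weight'' repairs this, because under your independence assumption the variance gap is simply $\tfrac{L-1}{L}\,v$, with no $(1-\alpha_l)$ factor available at all.

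The reason you cannot recover the stated formula is that in the paper's proof the symbol $\alpha_l$ appearing in the final bound is \emph{not} an ensemble weight at all: the paper reuses the symbol to denote the average pairwise correlation between the base learners' prediction errors (an admittedly confusing notational clash with the theorem statement). The paper does not assume independence; it assigns each learner variance $c^2\sigma^2$ under $P_t$ and pairwise error correlation $\alpha_l$, applies the standard variance formula for an equally weighted average of correlated variables,
\begin{align}
  \text{Var}_{P_t}\bigl(\hat{f}_{\text{ens}}(X)\bigr) = \frac{1+(L-1)\alpha_l}{L}\,c^2\sigma^2,
\end{align}
and subtracts this from the single-model variance $c^2\sigma^2$ to obtain $\tfrac{(L-1)(1-\alpha_l)}{L}c^2\sigma^2$. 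Your independence-based computation is exactly the special case $\alpha_l = 0$ of this argument, so it proves a weaker (correlation-free) statement rather than the theorem as written. A secondary difference: you transport input variance to output variance via a Taylor/Lipschitz linearization of $g_l$, whereas the paper does so via the RevIN rescaling argument (the prediction is rescaled by the instance standard deviation, so output deviations inflate by the same factor $c$); both are heuristic and covered by the $\approx$, so this part is an acceptable alternative route, but the correlation structure is not optional.
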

\noindent
The proof is given in Appendix B.3.
It indicates that the ensemble's MSE under $P_t$ is lower than that of the single one.

\subsection{Deep Ensemble Model for TS Forecasting}
\label{sec3_ensem_tsf}

Based on Theorem \ref{th1_5}, we consider a ensemble learner set $\{g_l\}_{l=1}^L$ to minimize
\begin{align}\label{eq_min_loss}
   & \ \ \mathbb{E}(Y - \hat{f}(X))^2 = \mathbb{E}(Y - \sum_{l=1}^{L} \alpha_l g_l(X))^2, \\
  & s.t. \ \ \sum_{l=1}^L \alpha_l = 1, \text{and} \ \  \alpha_l \ge 0, \ \ l=1, \cdots, L. \notag
\end{align}
Here, the objective in Eq. \ref{eq_min_loss} implies that
\begin{equation}
  (Y - \hat{f}(X))^2 = \sum_{l=1}^L \alpha_l (Y-g_l(X))^2 - \sum_{l=1}^L \alpha_l (\hat{f}(X) - g_l(X))^2
  \label{eq_nn_decom_loss}.
\end{equation}
The proof is given in Appendix B.4.
Eq. \ref{eq_nn_decom_loss} proves that the ensemble generalization error can be reduced by increasing the ambiguity without increasing the bias, since the ambiguity item (the 2nd term on the right), which measures the disagreement among the base predictors, is always nonnegative.
Thus, Eq. \ref{eq_nn_decom_loss} indicates that we can design an ensemble learning algorithm to reduce concept drift.

Motivated by this, we propose a deep residual-decreasing ensemble method based on prediction variance minimization and TS decomposition principles, named DeepBooTS. 
As shown in Fig. \ref{fig_boost}(b), DeepBooTS is a two-stream architecture corresponding to Eq. \ref{eq_nn_decom_loss}, which is equivalent to a meta-algorithmic Boosting approach, reducing complexity of the model and thus mitigating the risk of concept drift.
As shown in Fig. \ref{fig_boost}, the `\textit{input}' stream is obviously a decomposition of $X$ since
\begin{align}
  X & = \sum_{l=0}^{L-1} g_l(X) + R_L, \label{eq_a1} 
\end{align}
where $R_L$ is the residual term. 
Further, the aim of the `\textit{output}' stems is to learn $L$ simple learners in a hierarchy where each learner gives more attention (larger weight) to the hard samples from the previous learner. This is equivalent to the Boosting ensemble learning process, where the final prediction is a weighted sum of $L$ simple learners, and the weights are determined by the previous learner.
Let $g_l(x)$ denote the $l$-th learner, and $\alpha_l$ denote the weight of the $l$-th learner.
The overall estimation $\hat{f}(X)$ is a weighted subtraction of the $L$ estimations. For a sample $X$, we have
\begin{align}
    & \hat{f}(X) = i \sum_{l=0}^{\hbar} \alpha_{2l+1} g_{2l+1}(X) - i \sum_{l=0}^{\hbar} \alpha_{2l} g_{2l}(X), \label{eq_a2} \\
    & s.t. \ \  i \sum_{l=0}^{\hbar} \alpha_{2l+1} - i \sum_{l=0}^{\hbar} \alpha_{2l} = 1, \text{and} \ \  \alpha_l \ge 0. \notag
\end{align}
where $ i = 1$ if $L\bmod 2 = 1$, else $i = -1$, and $ \hbar = \lfloor \frac{L}{2} \rfloor$.
Similar to Eq \ref{eq_min_loss}, Eq. \ref{eq_a2} can be rewritten as
\begin{align} \label{eq_a2b} 
          (Y - \hat{f}(X))^2 & = \sum_{l=0}^{\hbar} i(\alpha_{2l+1} - \alpha_{2l}) (Y-g_l(X))^2 \notag \\
          & - \sum_{l=0}^{\hbar} i(\alpha_{2l+1} - \alpha_{2l}) (\hat{f}(X) - g_l(X))^2
\end{align}
Obviously, Eq. \ref{eq_a2b} derived from Eq. \ref{eq_a2} is similar to Eq. \ref{eq_nn_decom_loss}. 
However, Eq. \ref{eq_a2} has more advantages, that is
\begin{theorem}
  Assume that the estimation error of block $g_l(X)$ is $e_l$, $e_l \overset{i.i.d}{\sim}  \mathcal{N}(0, \nu)$. Let $\alpha_l = \alpha$ be the weight of $g_l$, $l\in [0,L] $, and the covariance of estimations of two different blocks by $\mu$, we have
  \begin{align}
    \text{Var}(\hat{Y}) < \frac{4}{L} \alpha^2 (\nu + \mu). \label{eq10}
  \end{align}
\label{th2}
\end{theorem}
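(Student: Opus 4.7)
The plan is to expand $\hat{Y}$ from Eq.~\ref{eq_a2} under the uniform-weight assumption $\alpha_l=\alpha$, then compute its variance directly from the bilinearity of covariance and the i.i.d.\ hypothesis on the block errors.

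First, I would write $\hat{Y} = i\alpha\,(S_+ - S_-)$, with $S_+ = \sum_{l=0}^{\hbar} g_{2l+1}(X)$ and $S_- = \sum_{l=0}^{\hbar} g_{2l}(X)$, so that
\[
\text{Var}(\hat{Y}) = \alpha^{2}\bigl[\text{Var}(S_+) + \text{Var}(S_-) - 2\,\text{Cov}(S_+,S_-)\bigr].
\]
Substituting $\text{Var}(g_l)=\nu$ (inherited from $e_l\overset{\text{i.i.d.}}{\sim}\mathcal{N}(0,\nu)$) and $\text{Cov}(g_l,g_k)=\mu$ for $l\neq k$ turns the bracket into a polynomial in the odd/even index counts and in $(\nu,\mu)$.

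Second, I would exploit the opposite signs on $S_+$ and $S_-$: the off-diagonal $\mu$-contributions inside $\text{Var}(S_+)+\text{Var}(S_-)$ are almost entirely annihilated by $-2\,\text{Cov}(S_+,S_-)$, leaving only the diagonal $\nu$-terms together with a residue proportional to the odd/even count imbalance (which is at most one). This is the mechanism that drives the ensemble's variance reduction. A loose envelope of the form $|c_1\nu - c_2\mu|\le c\,(\nu+\mu)$ then absorbs both parameters into the $\nu+\mu$ factor in the theorem statement, and the $1/L$ scaling emerges once I invoke the normalization from Eq.~\ref{eq_a2}, $i\sum_l\alpha_{2l+1} - i\sum_l\alpha_{2l}=1$, which under $\alpha_l=\alpha$ forces the surviving combinatorial factor to collapse to an $O(1/L)$ constant. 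The strict inequality in the statement then follows by discarding a nonnegative cancellation term.

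The main obstacle I anticipate is handling the parity of $L$ cleanly: the odd- and even-indexed sets have different cardinalities when $L$ is even, and matching them to the $\hbar=\lfloor L/2\rfloor$ indexing in Eq.~\ref{eq_a2} requires care. A related subtlety is whether ``variance of a block'' is meant to be the error variance $\nu$ alone or the full $\text{Var}(g_l)$, which would add a signal component that already correlates across blocks through $\mu$; only with a consistent convention on this point will the covariance algebra telescope correctly into the stated $\tfrac{4}{L}\alpha^{2}(\nu+\mu)$ bound.
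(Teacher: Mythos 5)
Your expansion is the same one the paper uses: split $\hat{Y}$ into the odd- and even-indexed sums $S_+=\sum_{l=0}^{\hbar}g_{2l+1}(X)$ and $S_-=\sum_{l=0}^{\hbar}g_{2l}(X)$, expand by bilinearity, substitute $\text{Var}(g_l)=\nu$ and $\text{Cov}(g_l,g_k)=\mu$, and let the cross-covariance annihilate the off-diagonal $\mu$-terms; your reading of the $\nu$ convention is also the paper's ($\text{Var}(g_l(X))=\nu$). The genuine gap is your final step: the $1/L$ factor cannot be recovered from the normalization constraint, and without it your route stalls at a bound that \emph{grows} with $L$. Concretely, with $\alpha_l=\alpha$ and both index sets running over $l=0,\dots,\hbar$ (so each has $n=\hbar+1$ terms), the unnormalized variance is
\begin{align*}
\text{Var}\bigl(i\alpha(S_+-S_-)\bigr)
&=\alpha^2\bigl[\text{Var}(S_+)+\text{Var}(S_-)-2\,\text{Cov}(S_+,S_-)\bigr]\\
&=\alpha^2\bigl[2n\nu+2n(n-1)\mu-2n^2\mu\bigr]
=2n\,\alpha^2(\nu-\mu),
\end{align*}
which is of order $L\alpha^2(\nu-\mu)$: the cancellation you describe does remove the $O(n^2)$ $\mu$-contributions, but it leaves the diagonal $\nu$-terms growing linearly in $L$, and no envelope of the form $|c_1\nu-c_2\mu|\le c(\nu+\mu)$ can turn an $O(L)$ prefactor into $O(1/L)$. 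The constraint $i\sum_l\alpha_{2l+1}-i\sum_l\alpha_{2l}=1$ cannot rescue this either: under uniform weights the two sums have equal cardinality, so the left-hand side is exactly $0$ and the constraint is degenerate (it reads $0=1$); even granting a cardinality imbalance of one, it forces $\alpha=1$, not $\alpha\sim 1/L$.

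What the paper does differently is to build the $1/L$ in from the first line: it writes $\text{Var}(\hat{Y})=\frac{1}{\hbar^2}\text{Var}\bigl(i\sum_{l}\alpha g_{2l+1}(X)-i\sum_{l}\alpha g_{2l}(X)\bigr)$, i.e.\ it treats the ensemble as an \emph{average} with effective weights $\alpha/\hbar$ (in the spirit of Theorems \ref{th1} and \ref{th1_5}) rather than as the raw weighted sum of Eq.~\ref{eq_a2}. With that averaging prefactor the diagonal terms contribute $\frac{2}{\hbar}\alpha^2\nu$, the off-diagonal $\mu$-terms cancel against the cross-covariance, and the chain $\frac{2}{\hbar}\alpha^2(\nu+\mu)\le\frac{4}{L}\alpha^2(\nu+\mu)$ closes the argument. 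So your cancellation mechanism matches the paper's, but the averaging normalization---which is the sole source of the $1/L$---is missing from your proposal, and the constraint you invoke in its place cannot supply it.
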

The proof is given in Appendix B.5.
Clearly, the variance of DeepBooTS is bound by the estimation error (noise error) of each block, and the covariance between blocks.
It is evident that the subtraction adopted in DeepBooTS can reduce the variance, thereby mitigating concept drift. On the contrary, switching the aggregation operation of the output stream in the DeepBooTS to addition results in an approximate variance of $\frac{4}{L} \alpha^2 \nu + 3\alpha^2 \mu $, which is much larger than that of subtraction in Eq. \ref{eq_a2}.
This phenomenon is also supported by our experiments in Section \ref{ssec_ablation}.
Furthermore, Theorem \ref{th2} also demonstrates that increasing the number of layers $L$ does not escalate the risk of concept drift. It proves that the deep ensemble models can go deeper, and the test error approaches $\text{Bias}(\hat{f}(X))^2 + \sigma^2$ when $L$ is infinite and doesn't consider performance-efficient trade-off.

\subsection{Implementation}
\label{sec_pf_arch}

Here, we employ deep neural networks to implement DeepBooTS.
As shown in Fig. \ref{fig_arch}, the model consists of two primary data streams. One is the input stream decomposed through multiple residual learners, while the other is the output stream that progressively learns the residuals of the supervised signals. 
Along the way, they pass through multiple learners capable of converting signals.
The model is simple and versatile, yet powerful and interpretable. 
The pseudocode is given in Appendix L, and complexity analysis is given in Appendix G.
We now delve into how these properties are incorporated into DeepBooTS.

\textbf{Backbone}: The fundamental building backbone features a fork architecture, which accepts one input $X_l$ and produces two distinct streams, $R_l$ and $O_l$. 
Concretely, $R_l$ is the remaining portion of $X_l$ after it has undergone processing within a neural module, which can be expressed as
\begin{align}
  \hat{X}_l  = \text{Block}_l (X_l), \qquad
  R_l  = X_l - \hat{X}_l, \label{eq2}
\end{align}
Equation \ref{eq2} represents an implicit decomposition of $X$, which differs from the moving average adopted by \cite{wu2021autoformer,zhou2022fedformer,liang2023does} but is similar to \cite{oreshkin2019n}.
Differently, the residual $R_l$ captures what remains unaltered, providing a basis for comparison with the transformed portion. 

In the subsequent steps, the intention is to maximize the utilization of the subtracted portion $\hat{X}_l$. First, $\hat{X}_l$ is projected into the same dimension as the label that is anticipated, $Y$. 
This process can be expressed as
\begin{align}
  O_l & = \text{Predictor}_l (\hat{X}_l), \label{eq3}
\end{align}
where $O_l$ is the prediction results of the $l$-th predictor.
Then, $O_l$ will be subtracted from the outputs of the next predictor sequentially until the final prediction $\hat{Y}$ is achieved.

\textbf{Learner}: 
The basic learner can be constructed utilizing widely-used neural structures, such as fully connected, convolutional, and attention layers.
A ready-made solution is to utilize Attention to learn subtle relationships between attributes. 
As analyzed in Section \ref{sec3_ensem_tsf}, we implement a corrective measure by subtracting the output from the input, i.e.,
\begin{align}
  \hat{X}_{l,1} =  \text{Attention}_{\theta_{l,1}} (X_{l,1}), \quad
  R_{l,1} = X_{l,1} - \delta \hat{X}_{l,1}, \label{eq4}
\end{align}
where $\delta$ is 1 when the module exists, otherwise it is 0. $\delta$ serves to eliminate the Attention layer when it exerts adverse effects, thereby enabling the unimpeded flow of input towards the FeedForward (FF) layers:
\begin{align}
  \hat{X}_{l,2} = \text{FF}_{\theta_{l,2}} (R_{l,1}), \qquad R_{l,2} = X_{l,2} - \hat{X}_{l,2}. \label{eq5}
\end{align}
Also, the Attention can be implemented in the frequency domain by Fast Fourier Transform (FFT) to achieve lightweight and efficient processing, as
\begin{align}
  & Q_l, K_l,V_l   = \text{FFT}_{\theta_Q,\theta_K,\theta_V} (X_l), \\
  & \hat{X}_{l,1} = \text{FFT}^{-1} (\phi (Q_lK_l^T)V).
\end{align}
where $\phi$ is the activation function.

\begin{figure}
  \centering
  \includegraphics[width=\columnwidth]{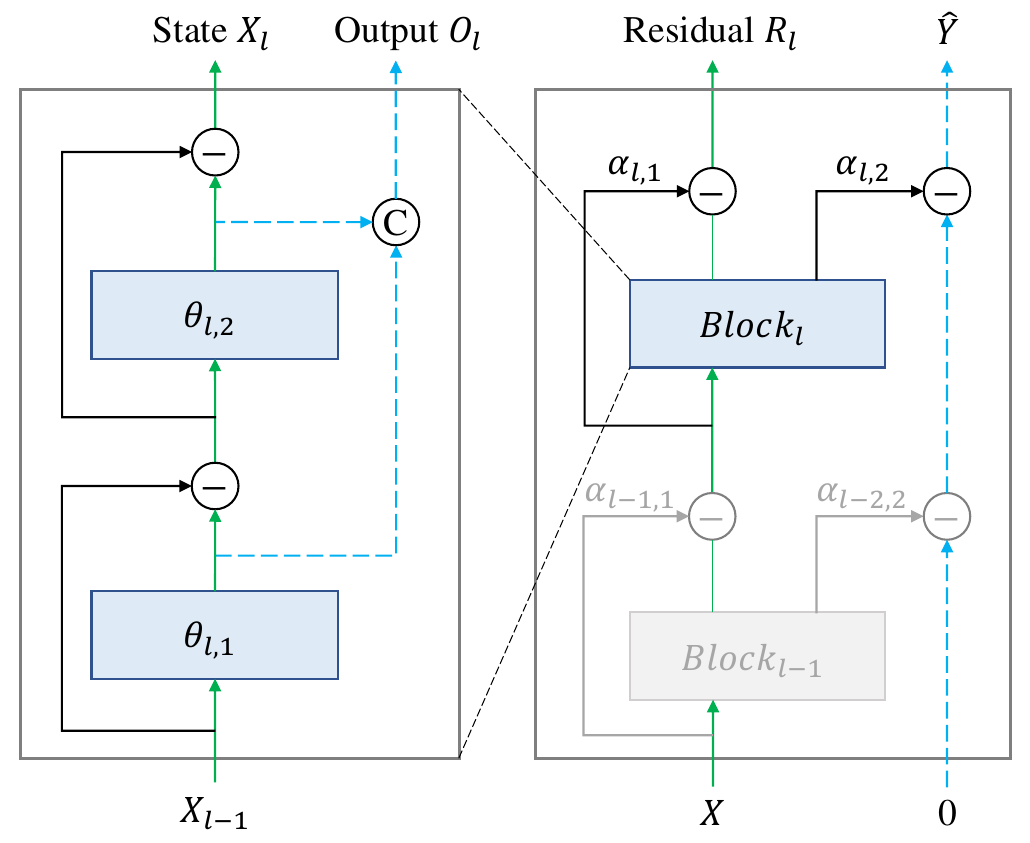}
  \caption{The implementation of DeepBooTS.}
  \label{fig_arch}
\end{figure}

\textbf{Coefficient}: 
Upon being scaled by a coefficient ($\alpha_l$), the dual streams are directed toward the next block or projected into the output space.
Here, we adopt a learnable gating mechanism as the coefficient. The aim is to let each learner regulate the stream-transmission pace autonomously.
Thus, for input and output streams, we have
\begin{align}
  X_{l+1} & = \varphi  (\theta_1(R_{l,2})) \cdot \theta_2(R_{l,2}), \label{eq6} \\
  {O}_{l+1} & = \varphi (\theta_3([\hat{X}_{l,1},\hat{X}_{l,2}])) \cdot \theta_4([\hat{X}_{l,1},\hat{X}_{l,2}]), \label{eq7}
\end{align}
where $\theta_1$ and $\theta_2$ are learnable parameters, $\varphi$ is the sigmoid, and the brackets `[ ]' are a concatenation operation.

\begin{table*}[!ht]
    \centering
      \setlength{\tabcolsep}{1mm}
      \fontsize{9pt}{9pt}\selectfont
        \begin{tabular}{c cc  cc  cc cc  cc cc cc cc  cc}
        \toprule
        Model          & \multicolumn{2}{c}{DeepBooTS*} & \multicolumn{2}{c}{DeepBooTS} & \multicolumn{2}{c}{TimeXer}  & \multicolumn{2}{c}{TimeMixer} & \multicolumn{2}{c}{iTransformer} & \multicolumn{2}{c}{PatchTST} & \multicolumn{2}{c}{Crossformer} & \multicolumn{2}{c}{DLinear} & \multicolumn{2}{c}{FEDformer}\\ \toprule
        Metric                         & MSE       & MAE               & MSE       & MAE       & MSE       & MAE    & MSE              & MAE     & MSE              & MAE       & MSE              & MAE                   & MSE               & MAE              & MSE            & MAE          & MSE            & MAE                       \\ \toprule
        ETT  & {\bf\color{red}0.346}     & {\bf\color{red}0.378}        & \underline{\color{blue}0.362}     & \underline{\color{blue}0.382}      & 0.365  & 0.388   & 0.367 & 0.388  & 0.383            & 0.407      & 0.381  & 0.397            & 0.942             & 0.684                   & 0.559          & 0.515          & 0.437             & 0.449             \\ \toprule
        Traffic  & {\bf\color{red}0.373}     & {\bf\color{red}0.261}          & \underline{\color{blue}0.406}     & \underline{\color{blue}0.270}  & 0.466  & 0.287  & 0.484   & 0.297      & 0.428            & 0.282     & 0.481  & 0.304        & 0.550             & 0.304                  & 0.625          & 0.383          & 0.610             & 0.376           \\ \toprule
        ELC  & {\bf\color{red}0.158}     & {\bf\color{red}0.252}          & \underline{\color{blue}0.166}     & \underline{\color{blue}0.259}     & 0.171  & 0.270   & 0.182  & 0.272  & 0.178            & 0.270        & 0.205  & 0.290      & 0.244             & 0.334                    & 0.212          & 0.300          & 0.214             & 0.327             \\ \toprule
        Weather  & {\bf\color{red}0.227}     & {\bf\color{red}0.266}         & 0.245     & \underline{\color{blue}0.271}   & 0.241  & \underline{\color{blue}0.271}   & \underline{\color{blue}0.240}  & \underline{\color{blue}0.271}     & 0.258            & 0.279    & 0.259  & 0.281               & 0.259             & 0.315           & 0.265          & 0.317          & 0.309             & 0.360            \\ \toprule
        Solar    & {\bf\color{red}0.197}     & {\bf\color{red}0.240}        & 0.227     & \underline{\color{blue}0.250}  & 0.272  & 0.300 & \underline{\color{blue}0.216}  & 0.280     & 0.233            & 0.262    & 0.270  & 0.307                 & 0.641             & 0.639            & 0.330          & 0.401          & 0.291             & 0.381              \\ \toprule
        PEMS  & {\bf\color{red}0.075}     & {\bf\color{red}0.178}     & \underline{\color{blue}0.109}     & \underline{\color{blue}0.218}   & 0.141  & 0.225   & 0.138   & 0.242   & 0.113            & 0.221     & 0.180  & 0.291             & 0.169             & 0.281                  & 0.278          & 0.375          & 0.213             & 0.327               \\ \toprule
        {\bf \color{red}$1^{st}$} or \underline{\color{blue}$2^{st}$} & {\bf\color{red} 30}  & {\bf\color{red}30}   & \underline{\color{blue}15}   & \underline{\color{blue}22}  &  \underline{\color{blue}3}  &  \underline{\color{blue}4}  &  \underline{\color{blue}8}  &  \underline{\color{blue}2}   & \underline{\color{blue}3}   & \underline{\color{blue}2}    & 0  & 0   & \underline{\color{blue}3}    & 0    & \underline{\color{blue}1}  & 0    & 0       & 0   \\  \bottomrule
        \end{tabular}
    \caption{Multivariate TS forecasting results (average). The results for all forecasting lengths are provided in Appendix M.}
    \label{tb2}
  \end{table*}

\begin{table*}[htpb]
  \centering
    \fontsize{9pt}{10pt}\selectfont
    \setlength{\tabcolsep}{1.3mm}
    \begin{tabular}{cc  cc  cc  cc  cc  cc  cc  c}
      \toprule
      Model       & \multicolumn{2}{l}{DeepBooTS} & \multicolumn{2}{l}{Periodformer} & \multicolumn{2}{l}{FEDformer} & \multicolumn{2}{l}{Autoformer} & \multicolumn{2}{l}{Informer} & \multicolumn{2}{l}{LogTrans} & \multicolumn{2}{l}{Reformer} \\
      \hline
      Metric      & MSE              & MAE             & MSE               & MAE               & MSE             & MAE            & MSE             & MAE             & MSE            & MAE            & MSE            & MAE            & MSE            & MAE            \\
      \hline
      ETTh1       & {\bf\color{red}0.072}            & {\bf\color{red}0.206}           & \underline{\color{blue}0.093}             & \underline{\color{blue}0.237}             & 0.111           & 0.257          & 0.105           & 0.252           & 0.199          & 0.377          & 0.345          & 0.513          & 0.624          & 0.600          \\
      ETTh2       & {\bf\color{red}0.185}            & {\bf\color{red}0.337}           & \underline{\color{blue}0.192}             & \underline{\color{blue}0.343}             & 0.206           & 0.350          & 0.218           & 0.364           & 0.243          & 0.400          & 0.252          & 0.408          & 3.472          & 1.283          \\
      ETTm1       & {\bf\color{red}0.052}            & {\bf\color{red}0.172}           & \underline{\color{blue}0.059}             & \underline{\color{blue}0.201}             & 0.069           & 0.202          & 0.081           & 0.221           & 0.281          & 0.441          & 0.231          & 0.382          & 0.523          & 0.536          \\
      ETTm2       & \underline{\color{blue}0.118}            & \underline{\color{blue}0.254}           & {\bf\color{red}0.115}             & {\bf\color{red}0.253 }            & 0.135           & 0.278          & 0.130           & 0.271           & 0.147          & 0.293          & 0.130          & 0.277          & 0.136          & 0.288          \\
      Traffic     & {\bf\color{red}0.132}            & {\bf\color{red}0.212}           & \underline{\color{blue}0.150}             & \underline{\color{blue}0.233}             & 0.177           & 0.270          & 0.261           & 0.365           & 0.309          & 0.388          & 0.341          & 0.417          & 0.375          & 0.434          \\
      Electricity & \underline{\color{blue}0.314}            & \underline{\color{blue}0.401}           & {\bf\color{red}0.298 }            & {\bf\color{red}0.389}             & 0.347           & 0.434          & 0.414           & 0.479           & 0.372          & 0.444          & 0.410          & 0.473          & 0.352          & 0.435          \\
      Weather     & {\bf\color{red}0.0015}           & {\bf\color{red}0.0293}          & \underline{\color{blue}0.0017}            & \underline{\color{blue}0.0317}            & 0.008           & 0.067          & 0.0083          & 0.0700          & 0.0033         & 0.0438         & 0.0059         & 0.0563         & 0.0115         & 0.0785         \\
      Exchange    & \underline{\color{blue}0.429}            & \underline{\color{blue}0.453}           & {\bf\color{red}0.353}             & {\bf\color{red}0.434}             & 0.499           & 0.512          & 0.578           & 0.537           & 1.511          & 1.029          & 1.350          & 0.810          & 1.028          & 0.812          \\ 
      \hline
      { $1^{\text{st}}$ Count} & {\bf\color{red} 24}           & {\bf\color{red}27}           & \underline{\color{blue}14}                   & \underline{\color{blue}12}       & 0               & 1               & 0             & 0               & 0               & 0               & 2             & 0             & 0               & 0             \\
      \bottomrule
    \end{tabular}
\caption{Univariate TS forecasting results (average). The results for all forecasting lengths are provided in Appendix N.}
\label{tb3}
\end{table*}

\section{Experiments}
\label{sec_exp}


We conduct a comprehensive comparison with the 18 latest SOTA methods,
including TimeXer \cite{wang2024timexer}, TimeMixer \cite{wang2024timemixer}, iTransformer \cite{liu2023itransformer}, PatchTST \cite{nie2022time}, Crossformer \cite{zhang2022crossformer}, SCINet \cite{liu2022scinet}, DLinear \cite{zeng2023transformers}, FEDformer \cite{zhou2022fedformer} for multivariate forecasting,  and FreTS \cite{yi2024frequency}, PSLD \cite{liang2024psld}, and FourierGNN \cite{yi2024fouriergnn} for large-scale TS datasets, 
as well as Periodformer \cite{liang2024periodformer}, Autoformer \cite{wu2021autoformer}, Informer \cite{Zhou2021Informer}, LogTrans \cite{li2019enhancing}, Reformer \cite{Kitaev2020Reformer}, N-BEATS \cite{oreshkin2019n} and N-Hits \cite{challu2023nhits} for univariate forecasting.
More experimental settings can be found in Appendix D.

\subsection{Main Experimental Results}

All datasets are adopted for both multivariate (multivariate predict multivariate) and univariate (univariate predicts univariate) tasks. 
The detailed information pertaining to the datasets can be located in Appendix C.
The models used in the experiments are evaluated over a wide range of prediction lengths to compare performance on different future horizons: 96, 192, 336 and 720. 
Please refer to Appendix K for more experiments on the full ETT dataset.

\textbf{Multivariate Results}:
The results for multivariate TS forecasting are outlined in Table \ref{tb2}, with the optimal results highlighted in {\bf \color{red} red} and the second-best results emphasized with \underline{\color{blue} underlined}.
Due to variations in input lengths among different methods, for instance, PatchTST and DLinear employing an input length of 336, while Crossformer and Periodformer search for the input length without surpassing the maximum setting (720 in Crossformer, 144 in Periodformer), we have configured two versions of DeepBooTS: 336 for DeepBooTS* and 96 for DeepBooTS.

As shown in Table \ref{tb2}, DeepBooTS achieves the consistent SOTA performance across all datasets and prediction length configurations. 
Compared with the latest models acknowledged for their exceptional average performance, the proposed DeepBooTS demonstrates an average performance increase of {\bf 15.8\%}, achieving a substantial performance improvement.
Obviously, it achieves advanced performance, averaging {\bf 30} items across six datasets, with an improvement in average performance on each dataset. 
These experimental results confirm that the proposed DeepBooTS demonstrates superior prediction performance across different datasets with varying horizons. 
For a more detailed analysis, please refer to Appendix O.

\textbf{Univariate Results}: 
The average results for univariate TS forecasting are shown in Table \ref{tb3}. 
It is evident that the proposed DeepBooTS continues to maintain a SOTA performance across various prediction length settings compared to the benchmarks.
In summary, compared with the hyperparameter-searched Periodformer, 
DeepBooTS yields an average {\bf 4.8\%} reduction across five datasets, and it achieves an average of {\bf 26} best terms.
For example, under the input-96-predict-96 setting, DeepBooTS yields a reduction of {\bf 11.2\%} (0.143$\rightarrow$0.127) in MSE for Traffic. 
Obviously, the experimental results again verify the superiority of DeepBooTS on univariate TS forecasting tasks.

\subsection{Evaluation on Monash TS Datasets}

Further, we evaluate the proposed method on 7 Monash TS datasets \cite{godahewa2021monash} (e.g., NN5, M4 and Sunspot, etc.) and 7 diverse metrics (e.g., MAPE, sMAPE, MASE and Quantile, etc.) to systematically evaluate our model. All experiments are compared under the same input length (e.g., $I$=96) and output lengths (e.g., $O$=\{96, 192, 336 and 720\}). As shown in Table \ref{tb_metric_1}, the proposed DeepBooTS emerged as the frontrunner, achieving a score of \textbf{41 out of 54}. Please refer to Appendices Q and R for details about the definition and experimental settings, respectively.

\begin{table*}[!ht]
    \centering
    \fontsize{9pt}{10pt}\selectfont
    \setlength{\tabcolsep}{0.4mm}
    \begin{threeparttable}
    \begin{tabular}{c|cccccc|cccccc|ccccccc}
      \toprule
      Mulvariate           & \multicolumn{6}{c|}{ILI}                                                        & \multicolumn{6}{c|}{Oik\_Weather}                                                                                               & \multicolumn{6}{c}{NN5}                                                                                \\ \toprule
    Metric          & MSE   & MAE   & RMSP                           & sMAPE & MASE  & Q75   & MSE                           & MAE   & RMSP                       & sMAPE & MASE                          & Q75   & MSE                           & MAE   & RMSP                          & sMAPE & MASE  & Q75    \\ \toprule
    DeepBooTS  & {\bf\color{red}2.016} & {\bf\color{red}0.86}  & {\bf\color{red}0.367} & {\bf\color{red}0.650}  & {\bf\color{red}0.574} & {\bf\color{red}0.955} & \underline{\color{blue}0.689} & {\bf\color{red}0.618} & {\bf\color{red}0.773}  & {\bf\color{red}1.082} & \underline{\color{blue}0.888} & {\bf\color{red}0.628} & {\bf\color{red}0.719} & {\bf\color{red}0.578} & {\bf\color{red}0.541}  & {\bf\color{red}0.861} & {\bf\color{red}0.530}  & {\bf\color{red}0.559} \\ 
    iTransformer & \underline{\color{blue}2.262} & \underline{\color{blue}0.958} & \underline{\color{blue}0.415}                    & \underline{\color{blue}0.703} & \underline{\color{blue}0.648} & \underline{\color{blue}1.107} & 0.720                          & 0.632 & \underline{\color{blue}0.781} & \underline{\color{blue}1.090}  & 0.935                         & \underline{\color{blue}0.636} & \underline{\color{blue}0.723}               & \underline{\color{blue}0.583} & \underline{\color{blue}0.554}   & \underline{\color{blue}0.877} & \underline{\color{blue}0.542} & \underline{\color{blue}0.565} \\
    DLinear      & 2.915 & 1.188 & 0.574                         & 0.865 & 0.791 & 1.524 & 0.701                         & 0.637 & 0.805             & 1.185 & 0.943                         & 0.641 & 1.448                         & 0.929 & 0.969            & 1.304 & 0.835 & 0.978 \\
    Autoformer   & 3.187 & 1.224 & 0.596                         & 0.909 & 0.740  & 1.349 & 0.853                         & 0.712 & 0.883                      & 1.201 & 0.992                         & 0.704 & 0.851                         & 0.659 & 0.636  & 0.975 & 0.603 & 0.637 \\
    Informer     & 5.208 & 1.576 & 0.792                         & 1.183 & 0.907 & 2.304 & {\bf\color{red}0.67}                 & \underline{\color{blue}0.625} & 0.795                      & 1.126 & {\bf\color{red}0.818}                & 0.640  & 0.967                         & 0.727 & 0.739      & 1.081 & 0.662 & 0.722  \\ \bottomrule
    
    Univariate        & \multicolumn{6}{c|}{M4 Hourly}                                                                                               & \multicolumn{6}{c|}{Us\_births}                                                                                       & \multicolumn{6}{c}{Saugeenday}                                                                                       \\ \toprule
    DeepBooTS  & \textbf{\color{red}0.223} & \textbf{\color{red}0.287} & \textbf{\color{red}0.230} & \textbf{\color{red}0.509} & \textbf{\color{red}0.276} & \textbf{\color{red}0.311} & \textbf{\color{red}0.364} & \textbf{\color{red}0.431} & \textbf{\color{red}0.248} & \underline{\color{blue}0.445}          & \textbf{\color{red}0.363} & \textbf{\color{red}0.361} & 1.194          & 0.544          & \textbf{\color{red}0.808} & \textbf{\color{red}1.053} & 1.232          & 0.674        \\
      iTransformer &  \underline{\color{blue}0.310}          & \underline{\color{blue}0.390}          & \underline{\color{blue}0.376}             & \underline{\color{blue}0.656}          & \underline{\color{blue}0.366}         & \underline{\color{blue}0.410}          & \underline{\color{blue}0.378}          & \underline{\color{blue}0.439}          & \underline{\color{blue}0.251}        & \textbf{\color{red}0.438} & \underline{\color{blue}0.365}          & \underline{\color{blue}0.388}          & 1.206          & 0.552          & \underline{\color{blue}0.832}          & \underline{\color{blue}1.066}          & 1.258          & 0.687   \\
      N-Beats      & 0.337          & 0.423          & 0.392          & 0.700          & 0.390          & 0.493          & 0.670          & 0.638          & 0.409        & 0.625          & 0.546          & 0.834          & \underline{\color{blue}1.028}          & \underline{\color{blue}0.543}          & 0.962       & 1.387          & \underline{\color{blue}0.943}          & \underline{\color{blue}0.572}     \\
      N-Hits       & 0.346          & 0.418          & 0.402          & 0.687          & 0.396          & 0.442        & 0.538          & 0.578          & 0.364          & 0.540          & 0.464          & 0.802          & \textbf{\color{red}0.982} & \textbf{\color{red}0.532} & 0.865       & 1.189          & \textbf{\color{red}0.858} & \textbf{\color{red}0.562}     \\
      Autoformer   & 0.635          & 0.611          & 0.614          & 0.881          & 0.518          & 0.741          & 0.901          & 0.757          & 0.447        & 0.789           & 0.737          & 0.788          & 1.252          & 0.640          & 1.058               & 1.370          & 1.057          & 0.674          \\ \bottomrule
  \end{tabular}
    \begin{tablenotes}
      \item[*] All results are averaged across all prediction lengths. The results for all prediction lengths and the experimental settings are provided in Appendix R. The definitions of all metrics are provided in Appendix Q.
    \end{tablenotes}
    \end{threeparttable}
    \caption{Mulvariate and univariate forecasting results with diverse metrics on Monash TS datasets.}
    \label{tb_metric_1}
\end{table*}


\begin{figure*}[t]
  \centerline{\includegraphics[width=\textwidth]{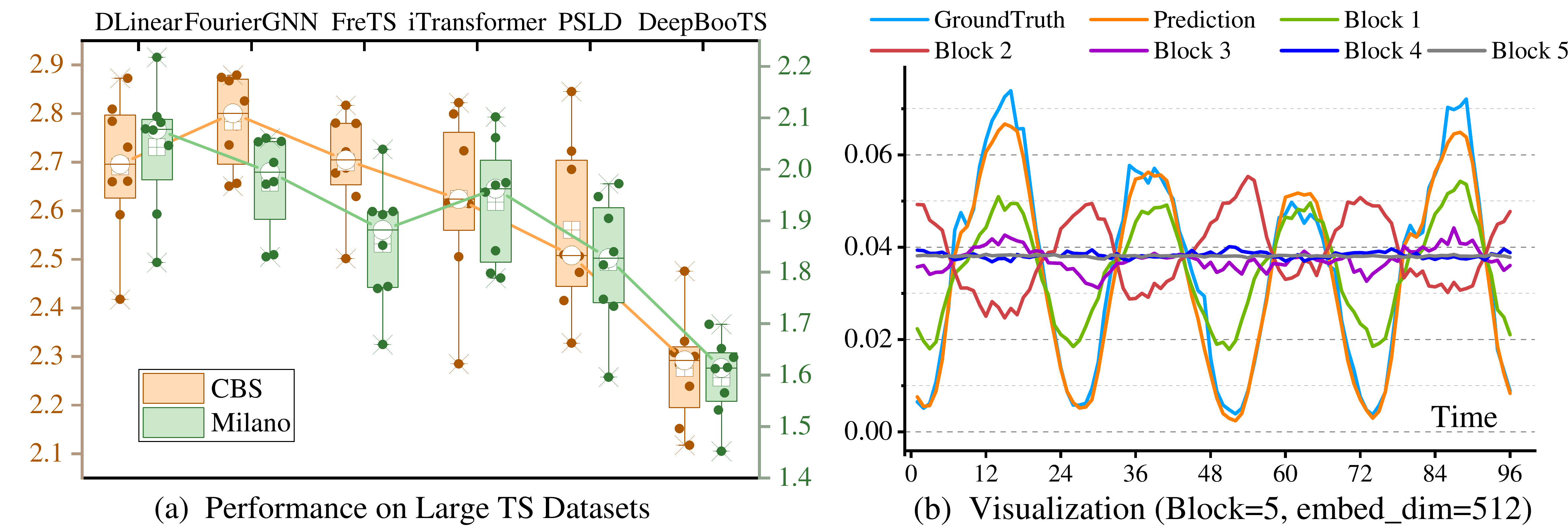}}
  \caption{(a) Comparisons on large-scale TS datasets. (b) Visualization depicting the output of each block in DeepBooTS.} %
  \label{fig_visblock}
\end{figure*}

\subsection{Performance on Large-Scale Datasets}
\label{ssec_large_ts}

The performance comparisons for large-scale TS datasets are summarized in Fig. \ref{fig_visblock}(a), including the CBS dataset with 4,454 nodes (17GB) and the Milano dataset with 10,000 nodes (19GB). 
On these datasets, it is not feasible to place all nodes on a single GPU for long-term TS forecasting. Therefore, we adopted the random partitioning strategy \cite{liang2024act} for our experiments. Please refer to Appendices C.3. 
 and D for  details.
Compared to the latest advanced PSLD, the proposed DeepBooTS yields an overall MSE reduction of {\bf 8.9\%} and {\bf 6.2\%} on the CBS and Milano datasets, respectively.
In addition, we are pleased to report that our method has been successfully applied to real-world scenarios, e.g., energy and power, showing substantial practical value.
For more experiments, please refer to Appendix F.

\begin{figure*}[t]
  \centerline{\includegraphics[width=\textwidth]{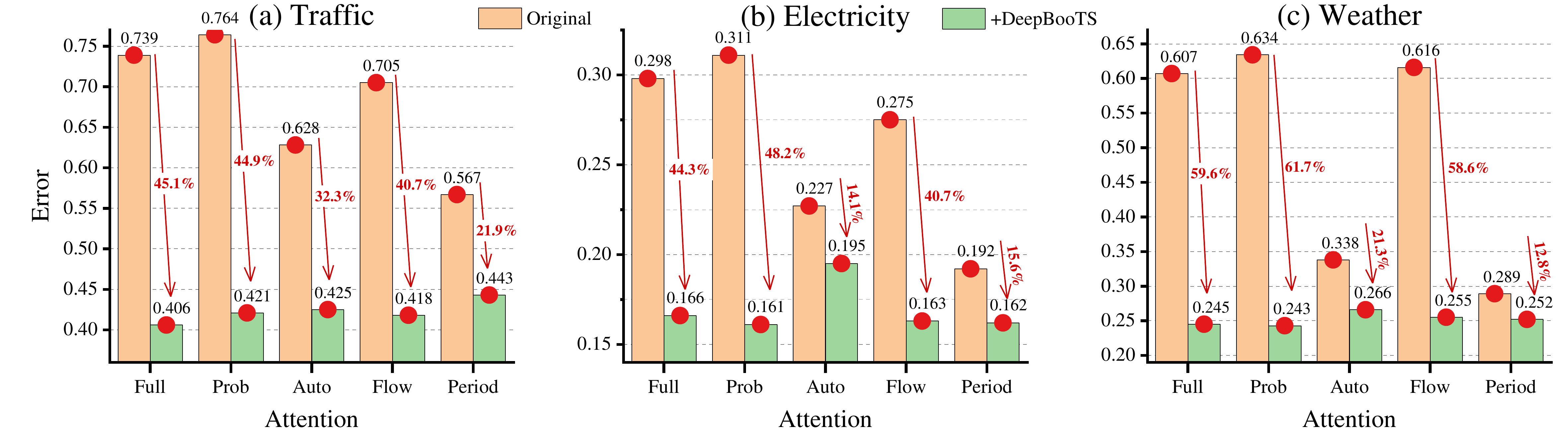}}
  \caption{Ablation studies of DeepBooTS using various Attention. All results are averaged across all prediction lengths. The tick labels of the X-axis are the abbreviation of Attention types. The detailed setup and results are provided in Appendix Q.}
  \label{fig_attns}
\end{figure*}

\subsection{Generality}

To investigate DeepBooTS's generality as a universal architecture, we substituted its original Attention with other novel Attention mechanisms to observe the resulting changes in model performance.
As shown in Fig. \ref{fig_attns}, after harnessing the newly invented Attention within DeepBooTS, its performance exhibited considerable variation. 
E.g., the average MSE of Prob-Attention \cite{Zhou2021Informer} on the Electricity and Weather datasets witnessed a reduction of {\bf 48.2\%} (0.311$\rightarrow$0.161) and {\bf 61.7\%} (0.634$\rightarrow$0.243), respectively, surpassing Full-Attention and achieving new SOTA performance.
Furthermore, Period-Attention \cite{liang2023does}, Auto-Correlation \cite{wu2021autoformer}, and Flow-Attention \cite{wu2022flowformer} all exhibit commendable performance on the aforementioned datasets, with their average performance surpassing that of Full-Attention.
The conducted experiments suggest that DeepBooTS can serve as a versatile architecture, amenable to the integration of novel modules, thereby facilitating the enhancement of performance in TS forecasting.

\subsection{Effectiveness}
\label{ssec_ablation}
 
\begin{figure*}[t]
  \centerline{\includegraphics[width=\textwidth]{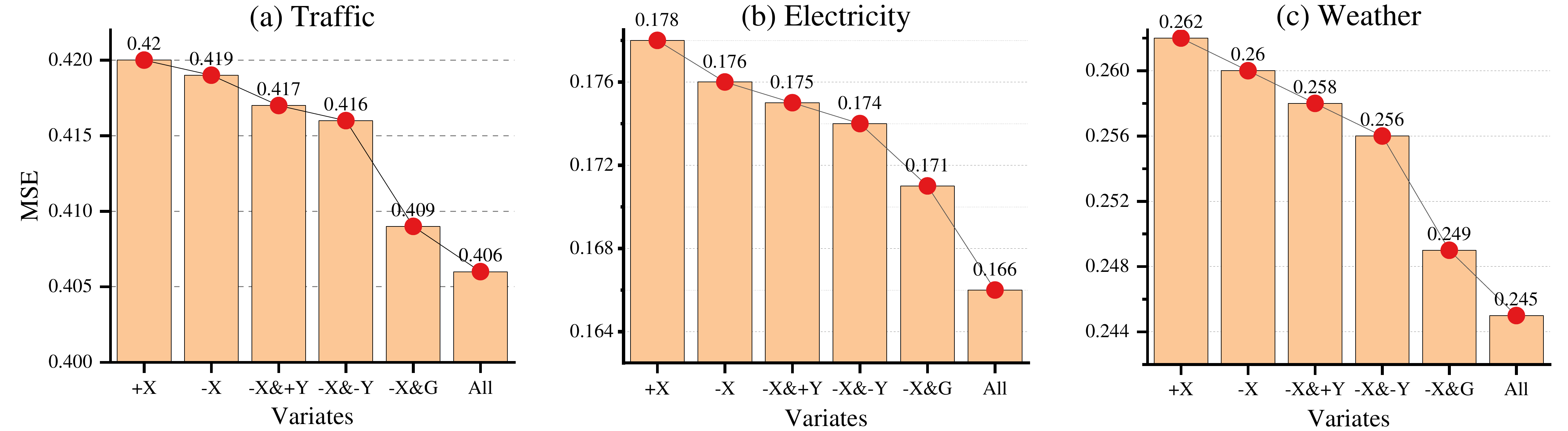}}
  \caption{Ablation studies on various components of DeepBooTS. All results are averaged across all prediction lengths. The variables $X$ and $Y$ represent the input and output streams, while the signs `+' and `-' denote the addition or subtraction operations used when the streams' aggregation. The letter `G' denotes adding a learnable gating (Coefficient) to the output of each block.}
  \label{fig_variates}
\end{figure*}

To validate the effectiveness of DeepBooTS components, we conduct comprehensive ablation studies encompassing both component replacement and component removal experiments, as shown in Fig. \ref{fig_variates}.
We utilize signs `+' and `-' to denote the utilization of addition or subtraction operations during the aggregation process of the input or output streams. 
In cases involving only input streams, it becomes evident that the model's average performance is superior when employing subtraction (-X) compared to when employing addition (+X).
E.g., on the Weather dataset, forecast error is reduced by {\bf 2.3\%} (0.262$\rightarrow$0.256).
Moreover, with the introduction of a high-speed output stream to the model, shifting the aggregation method of the output stream from addition (+Y) to subtraction (-Y) is poised to further enhance the model's performance.
Afterward, incorporating gating mechanisms (G) into the model holds the potential to improve predictive performance again, e.g., forecast error is reduced by {\bf 4.9\%} (0.262$\rightarrow$0.249) on the Weather dataset.
In summary, integrating the advantages of the aforementioned components has the potential to significantly boost the model's performance across the board.

\subsection{Interpretability}
\label{ssec_intpbt}

The intrinsic characteristic of DeepBooTS lies in the alignment of the output from each block with the final output. This alignment, in turn, expedites the interpretability of the model's learning process.
As depicted in Fig. \ref{fig_visblock}(b), the output of each block in DeepBooTS is visualized.
It becomes evident that each block discerns and assimilates meaningful patterns.
Specifically, when the embedding dimension is low, each block must learn salient patterns. Further, when the depth of the model is increased, it becomes evident that the amplitude of the shallow block decreases, and numerous components are transferred to the deep block.
For a more detailed analysis, please refer to Appendix P.

\subsection{Reduce Variance and Go Deeper}

\begin{figure*}[!ht]
      \begin{center}
        \includegraphics[width=0.7\textwidth]{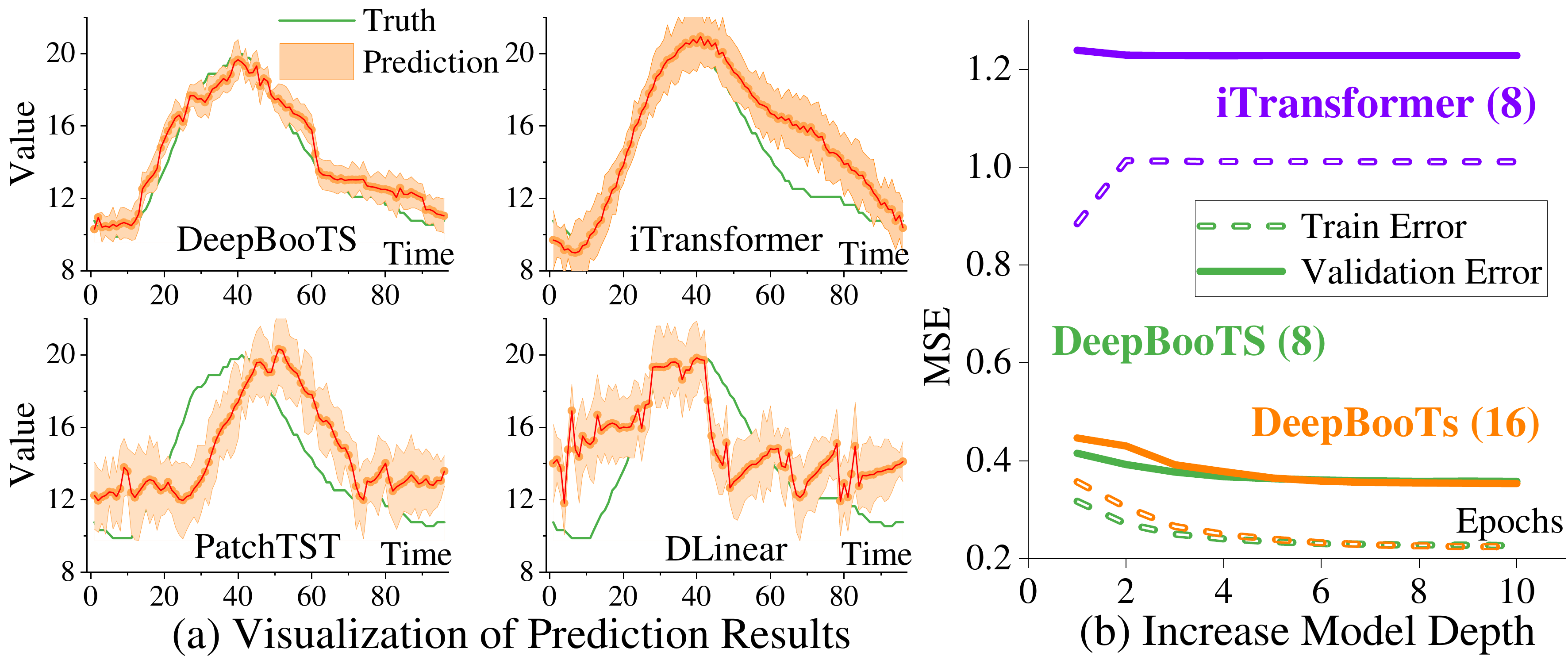}
      \end{center}
      \caption{Comparisons of model's variance and depth.}
      \label{fig_godeeper}
      \vspace{-1em}
\end{figure*}

By adjusting model hyperparameters and injecting noise, we generated a diverse set of predictions from latest advanced models. Then, these predictions are statistically analyzed to compute the prediction mean and variance. As shown in Fig. \ref{fig_godeeper}(a), DeepBooTS achieves superior performance and the smallest prediction variance, while other models exhibit weaker alignment with ground truth and higher variance. This validates the effectiveness of our theory and model design. Meanwhile, these results also confirm that DeepBooTS is less sensitive to hyperparameters, with details presented in Appendix I.
Furthermore, given DeepBooTS's robustness against high-variance, it can be designed with considerable depth.
Fig. \ref{fig_godeeper}(b) shows the scenarios when models go deeper.
Serious overfitting happens when the number of iTransformer blocks is increased from 4 to 8.
However, even with the DeepBooTS blocks deepened to 16, it continues to exhibit excellent performance.

\section{Conclusion}
\label{sec_con}

In this paper, we theoretically analyze the impact of concept drift in TS forecasting from a bias-variance perspective. We demonstrate that the deep ensemble is capable of reducing this issue. 
Based on this, we propose a novel dual-stream residual-decreasing Boosting ensemble approach, termed DeepBooTS, and demonstrate its efficacy in variance reduction.
DeepBooTS facilitates the learning-driven and progressive decomposition of both the input and output streams, thereby empowering the model resilience against concept drift.
Extensive experiments show that DeepBooTS achieves SOTA performance, and it is less sensitive to hyperparameters and can be designed very deep with enhanced interpretability.
Furthermore, DeepBooTS can serve as a versatile framework to enhance other models' performance.

\section*{Acknowledgments}
This work was supported by Project of Key R\&D Program of Shandong Province, China (2025CXGC010107, 2025CXPT095), Taishan Scholars Program: NO.tspd20240814, the Pilot Project for the Integration of Science, Education and Industry of Qilu University of Technology (Shandong Academy of Sciences)(2025ZDZX01), National Key R\&D Program of China (2024YFB3312302), and the Qilu Institute of Technology for Talent Project (QIT24TP027), and the Young Talent of Lifting engineering for Science and Technology in Shandong (SDAST2025QTB008).

\bibliography{aaai2026}

\clearpage
\onecolumn
\appendix

\section{Related Work}
\label{sec_rework}

\subsection{Classical Models for TS Forecasting}

TS forecasting is a classic research field where numerous methods have been invented to utilize historical series to predict future missing values. 
Early classical methods \cite{piccolo1990distance,gardner1985exponential} are widely applied because of their well-defined theoretical guarantee and interpretability. For example, ARIMA \cite{piccolo1990distance} initially transforms a non-stationary TS into a stationary one via differencing, and subsequently approximates it using a linear model with several parameters. 
Exponential smoothing \cite{gardner1985exponential} predicts outcomes at future horizons by computing a weighted average across historical data. In addition, some regression-based methods, e.g., random forest regression (RFR) \cite{liaw2002classification} and support vector regression (SVR) \cite{castro2009online}, etc., are also applied to TS  forecasting. 
These methods are straightforward and have fewer parameters to tune, making them a reliable workhorse for TS forecasting. 
However, their shortcoming is insufficient data fitting ability, especially for high-dimensional series, resulting in limited performance.

\subsection{Deep Models for TS Forecasting}

The advancement of deep learning has greatly boosted the progress of TS forecasting. 
Specifically, convolutional neural networks (CNNs) \cite{lecun1998gradient} and recurrent neural networks (RNNs) \cite{connor1994recurrent} have been adopted by many works to model nonlinear dependencies of TS, 
e.g., LSTNet \cite{lai2018modeling} improve CNNs by adding recursive skip connections to capture long- and short-term temporal patterns;
DeepAR \cite{salinas2020deepar} predicts the probability distribution by combining autoregressive methods and RNNs.
Several works have improved the series aggregation forms of Attention mechanism, such as operations of exponential intervals adopted in LogTrans \cite{li2019enhancing}, ProbSparse activations in Informer \cite{Zhou2021Informer}, frequency sampling in FEDformer \cite{zhou2022fedformer} and iterative refinement in Scaleformer \cite{shabani2022scaleformer}. 
Besides, GNNs and Temporal convolutional networks (TCNs) \cite{lea2016temporal} have been utilized in some methods \cite{wu2019graph,li2023dynamic,liu2022scinet,wu2022timesnet} for TS forecasting on graph data.
The aforementioned methods solely concentrate on the forms of aggregating input series, overlooking the challenges posed by the concept drift problem.

Since TS exhibit a variety of patterns, it is meaningful and beneficial to decompose them into several components, each representing an underlying category of patterns that evolving over time \cite{1976_timeseries}.
Several methods, e.g., STL \cite{cleveland1990stl}, Prophet \cite{taylor2018forecasting} and N-BEATS \cite{oreshkin2019n}, commonly utilize decomposition as a preprocessing phase on historical series.
There are also some methods, e.g., Autoformer \cite{wu2021autoformer}, FEDformer \cite{zhou2022fedformer}, Non-stationary Transformers \cite{liu2022non}, Minusformer \cite{liang2024minusformer} and DistPred \cite{liang2025distpred}, that harness decomposition into the Attention module. 
The aforementioned methods attempt to apply decomposition to input series to enhance predictability, reduce computational complexity, or ameliorate the adverse effects of non-stationarity.
Nevertheless, these prevalent methods are susceptible to significant concept drift when applied to non-stationary TS.

Furthermore, there are four themes that use deep learning to predict time series: (1) smarter transformers \cite{NIPS2017_Transformer}, such as PatchTST \cite{nie2022time}, iTransformer \cite{liu2023itransformer}  BasisFormer \cite{ni2023basisformer}, and TimeXer \cite{wang2024timexer} which restructure attention or add learnable bases to extend context length, cut computation and boost accuracy; (2) competitive non-transformer backbones, including N-HiTS (hierarchical MLP) \cite{challu2023nhits}, DLinear \cite{zeng2023transformers}, PGN \cite{jia2024pgn} and state-space models like TSMamba \cite{ma2024mamba}, TimeMachine \cite{ahamed2024timemachine} and FLDMamba \cite{zhang2025fldmamba}, which deliver linear-time inference and rival or surpass transformers on long horizons; (3) foundation-model initiatives, TimeGPT \cite{garza2023timegpt}, OneFitAll \cite{zhou2023one}, TimeLLM \cite{jin2024timellm}, UniTime \cite{liu2024unitime} and DAM \cite{darlowdam2024DAM} that pre-train on massive heterogeneous corpora and achieve impressive zero-shot or few-shot performance across domains; and (4) training and interpretability advances, such as frequency-adaptive normalization (FAN) \cite{ye2024frequency}, e.g., FreTS \cite{yi2024frequency}, FilterNet \cite{yi2024filternet}, and decomposition-aware architectures, e.g.,  which tackle non-stationarity, quantify uncertainty and make forecasts more transparent. 




\subsection{Concept Drift in TS forecasting}

Concept drift in TS forecasting has long been managed with adaptive retraining and change detection techniques. 

\textbf{Classical methods} include using sliding windows of recent data to regularly update models, thereby ``forgetting'' outdated patterns. Traditional statistical forecasters like ARIMA \cite{piccolo1990distance} can be incrementally retrained on rolling windows, e.g., an incremental ARIMA scheme fits a new model whenever the current model's validation error exceeds a threshold, signaling a drift. Another staple is change-point detection, where statistical tests (e.g. the Page-Hinkley test) monitor the data or error distribution and flag abrupt shifts once differences exceed a predefined threshold. For instance, the Error Intersection Approach (EIA) \cite{baier2020handling} employs two models, one ``stable'' (more complex) and one ``reactive'' (simpler), and compares their performance.
Furthermore, the Drift Detection Method (DDM) \cite{korycki2023adversarial} monitors the classifier's error rate distribution and raises an alarm when the error's increase is statistically significant. 
These classical and statistical methods form the foundation of drift handling in forecasting, using either explicit retraining policies or specialized detectors to maintain model validity over time.

\textbf{Online learning} strategies embed concept drift adaptation within the forecasting models themselves. One example is FSNet \cite{pham2023learning}, which leverages complementary learning systems theory to pair a slow-learning base forecaster with fast-adapting components. Another line of work is OneNet \cite{wen2023onenet}, an online ensembling approach that dynamically combines two neural models: one specializes in capturing temporal dependencies within each series, and the other focuses on cross-series (covariate) relationships. 
Each of these deep learning techniques illustrates how integrating drift-awareness (through dual-model architectures, ensembling, or proactive adjustment) can improve TS forecasting performance in online.

\section{Theoretical Rationale}
\label{app_overfit_theory}

\subsection{Proof of Equation \ref{eq_est_err}}
\label{seca_proof_eq1}

In this subsection, we provide a theoretical analysis of how DeepBooTS alleviates concept drift.
In practice, observations $Y$ often contain additive noise $\varepsilon$: 
$Y  = \mathcal{Y} + \varepsilon $, 
where $\varepsilon \sim \mathcal{N}(0, \sigma )$. 
Then, the estimation error (MSE) of the final model is:
\begin{align}
  \mathbb{E}[(\hat{Y} - Y )^2] & = \mathbb{E}[( \hat{Y} - \mathcal{Y} - \varepsilon )^2] \nonumber \\ 
   & = \mathbb{E}[(\hat{Y} - \mathcal{Y} )^2] + \mathbb{E}(\varepsilon^2) 
   - 2\mathbb{E}(\varepsilon(\hat{Y} - \mathcal{Y} )) \nonumber \\
   & = \mathbb{E}\left( (\hat{Y} - \mathbb{E}(\hat{Y}) + \mathbb{E}(\hat{Y}) - \mathcal{Y} )^2 \right) + \sigma^2   - 2\mathbb{E}(\varepsilon(\hat{Y} - \mathcal{Y} )) \nonumber \\
   & = \mathbb{E}\left( (\hat{Y} - \mathbb{E}(\hat{Y}))^2  +(\mathbb{E}(\hat{Y}) - \mathcal{Y}  )^2 + 2(\hat{Y} - \mathbb{E}(\hat{Y}) ) (\mathbb{E}(\hat{Y}) - \mathcal{Y} )  \right)  + \sigma^2   - 2\mathbb{E}(\varepsilon(\hat{Y} - \mathcal{Y} )) \nonumber \\
   & = \mathbb{E}\left( (\hat{Y} - \mathbb{E}(\hat{Y}))^2 \right)  +(\mathbb{E}(\hat{Y}) - \mathcal{Y}  )^2 
   + \sigma^2   - 2\mathbb{E}(\varepsilon(\hat{Y} - \mathcal{Y} )) \nonumber \\
   & = \text{Var}(\hat{Y}) + (\text{Bias}(\hat{Y}))^2 + \sigma^2   - 2\mathbb{E}(\varepsilon(\hat{Y} - \mathcal{Y} )). \label{eq_a3_0}
\end{align}
Based on this, Equation \ref{eq_a3_0} can be reformulated as
\begin{align}
  \underbrace{\text{Var}(\hat{Y}) + (\text{Bias}(\hat{Y}))^2 + \sigma^2}_{\text{Test Error}}  =  \underbrace{\mathbb{E}[(\hat{Y} - Y )^2] + 2\mathbb{E}(\varepsilon(\hat{Y} - \mathcal{Y} ))}_{\text{Training Error}} .
  \label{eq_a3}
\end{align}

Equation \ref{eq_a3} demonstrates a triangular relationship between the loss, bias, and variance of the model when the noise term is not taken into account, as shown in Fig. \ref{fig_boost}.
For modern complex deep learning models, their biases are typically very low \cite{goodfellow2016deep,zhang2021understanding}.
Here, we focus on how to reduce their variance to alleviate concept drift.
In Equation \ref{eq_a2}, the final model is a weighted sum of $L$ simple models, with the weights determined by the performance of the previous model.

\subsection{Proof of Theorem \ref{th1}}
\label{seca_proof_th1}

Before we prove Theorem \ref{th1}, we first give several definitions and prove a lemma.

{\bf Mean Squared Error (MSE)}, MSE is denoted as
\begin{equation}
  \mathcal{L}(Y,\hat{Y}) = \frac{1}{2}(Y-\hat{Y})^2.
\end{equation}

{\bf Dual Variable}: For differentiable $g$, the dual variable is defined as $\hat{Y}^* = \nabla g(\hat{Y})$, and we have $\hat{Y}=(\hat{Y}^*)^*$.

{\bf Convex Conjugate}: The convex conjugate of a function $g(X)$ is defined as
\begin{equation}
  g^*(Z) = \mathop{\text{sup}} \limits_{Z} \{ <Z,\hat{Y}> - g(\hat{Y}) \}.
\end{equation}

{\bf Central Prediction}: The central prediction is the unique minimizer, i.e.,
\begin{equation}
  \mathbb{C}_Z = \mathop{\text{argmin}} \limits_{Z} \mathbb{E}[\mathcal{L}(\hat{Y}, Z)], 
  \label{eqa_cpred}
\end{equation}

According to the above definition, we have
\begin{lemma}\label{lemma1}
  Let $Y_1, \cdots, Y_N$ be $N$ i.i.d random variables drawn from some unknown distribution, for average ensemble $\bar{Y}=\frac{1}{N}\sum_t \hat{Y}^t$ with MSE metric, we have $\mathbb{C}_{\bar{Y}} = \mathbb{C}_{\hat{Y}}$.
\end{lemma}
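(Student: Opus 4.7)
The plan is to invoke the variational characterization of the central prediction under MSE and then exploit i.i.d.\ to identify the two means. First I would note that with $\mathcal{L}(Y,Z) = \tfrac{1}{2}(Y-Z)^2$, the objective defining $\mathbb{C}_Z$ in Eq.~\ref{eqa_cpred} is strictly convex in the scalar $Z$, so setting the derivative with respect to $Z$ to zero and appealing to uniqueness yields the well-known fact
\begin{equation}
\mathbb{C}_{\hat{Y}} \;=\; \arg\min_{Z}\ \tfrac{1}{2}\,\mathbb{E}\bigl[(\hat{Y}-Z)^2\bigr] \;=\; \mathbb{E}[\hat{Y}].
\end{equation}
The same reasoning applied to $\bar{Y}$ in place of $\hat{Y}$ gives $\mathbb{C}_{\bar{Y}} = \mathbb{E}[\bar{Y}]$, so the lemma reduces to proving $\mathbb{E}[\bar{Y}] = \mathbb{E}[\hat{Y}]$.

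Second, I would apply linearity of expectation to the ensemble $\bar{Y} = \tfrac{1}{N}\sum_{t=1}^{N} \hat{Y}^t$, giving $\mathbb{E}[\bar{Y}] = \tfrac{1}{N}\sum_{t=1}^{N} \mathbb{E}[\hat{Y}^t]$. Because the $\hat{Y}^t$ are i.i.d.\ copies of $\hat{Y}$, every term in this sum equals $\mathbb{E}[\hat{Y}]$, so $\mathbb{E}[\bar{Y}] = \mathbb{E}[\hat{Y}]$. Combining with the first step gives $\mathbb{C}_{\bar{Y}} = \mathbb{C}_{\hat{Y}}$ as required.

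There is essentially no hard step here; the claim is a straightforward consequence of the convex-conjugate viewpoint on MSE combined with linearity of expectation under identical marginals. The only point that deserves a sentence of care is uniqueness of the minimizer: strict convexity of $Z \mapsto \mathbb{E}[(\hat{Y}-Z)^2]$ (and its variance decomposition $\mathrm{Var}(\hat{Y}) + (Z-\mathbb{E}\hat{Y})^2$) guarantees the $\arg\min$ is a single point, so that $\mathbb{C}_{\hat{Y}}$ is well-defined and the equality between the two central predictions is an equality of sets with one element each. This lemma will then be used in the proof of Theorem~\ref{th1} to separate the bias contribution (which depends only on $\mathbb{C}_{\bar{Y}}$) from the variance contribution (which shrinks under averaging).
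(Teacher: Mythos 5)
Your proof is correct and follows essentially the same route as the paper: both identify the central prediction under MSE as the mean by minimizing the quadratic $Z \mapsto \mathbb{E}[(\hat{Y}-Z)^2]$, and then conclude $\mathbb{E}[\bar{Y}] = \mathbb{E}[\hat{Y}]$ from linearity of expectation under identical marginals. Your version is in fact a cleaner rendering of the paper's argument, which detours through an inessential convex-conjugate computation and contains a typo (writing $\frac{1}{N}\sum_t \hat{Y}^t = \mathbb{E}[\hat{Y}]$ without the expectation) that your write-up avoids.
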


\begin{proof}
  First, the conjugate function of MSE is
  \begin{align}
  g^*(Z)  & = \mathop{\text{sup}} \limits_{Z} \left( ZX - \mathcal{L}(Y,\hat{Y}) \right) \notag \\
  & = \mathop{\text{sup}} \limits_{Z} \left( ZX -\frac{1}{2} (Y-\hat{Y})^2 \right) \notag \\
  & = \mathop{\text{sup}} \limits_{Z} \left( ZX - \frac{1}{2} Y^2 + Y\hat{Y} - \frac{1}{2}\hat{Y}^2  \right) \notag \\
  & = \mathop{\text{sup}} \limits_{Z} \left( - \frac{1}{2} Y^2 + (Z+\hat{Y})Y - \frac{1}{2} \hat{Y}^2  \right) 
  \label{eqa_th1_2}
  \end{align}
The Eq. \ref{eqa_th1_2} is a quadratic function in $Y$, which opens downward (coefficient of $Y^2$ is negative). The maximum occurs at the vertex of the parabola, given by $Y = Z+\hat{Y}$. Thus, Substitute it back into the quadratic expression in Eq. \ref{eqa_th1_2}, we have
\begin{align}
  g^*(Z)  & = -\frac{1}{2} ( Z+\hat{Y})^2 + (Z+2\hat{Y})(Z+2\hat{Y}) -\frac{1}{2}\hat{Y}^2 \notag \\
  & = \frac{1}{2} ( (Z+\hat{Y})^2 -  \hat{Y}^2 ) \notag \\
  & = \frac{1}{2} Z^2 -  \hat{Y}Z
\end{align}

On the one hand, for $\mathbb{C}_{\hat{Y}}$, we have
\begin{align}
  \mathbb{C}_{\hat{Y}} & = \mathop{\text{argmin}} \limits_{Z} \mathbb{E}(\hat{Y} - Z)^2 \notag \\
  & = \mathop{\text{argmin}} \limits_{Z} \left[ Z^2 -2Z \mathbb{E}[\hat{Y}] + \mathbb{E}[\hat{Y}^2]  \right]
\end{align}
This is a quadratic function of $\hat{Y}$, and its minimum occurs at the vertex, we have
\begin{align}
  Z = \mathbb{E}[\hat{Y}], \qquad \text{and} \qquad \mathbb{C}_{\hat{Y}} = \mathbb{E}[\hat{Y}].
  \label{eqa_exp_z}
\end{align}

On the other hand, Substituting $\bar{Y}=\frac{1}{N}\sum_t \hat{Y}^t $ into Eq. \ref{eqa_cpred}, we compute $\mathbb{E}[\bar{Y}]$ using the same derivation in Eq. \ref{eqa_exp_z}, we have $\mathbb{C}_{\bar{Y}} =\mathbb{E}[\bar{Y}] = \frac{1}{N}\sum_t \hat{Y}^t = \mathbb{E}[\hat{Y}] $.
Therefore,  $\mathbb{C}_{\bar{Y}} = \mathbb{C}_{\hat{Y}}$ is held.
\end{proof}

Next, we prove Theorem \ref{th1}.
Theorem \ref{th1} states that:
Let $\hat{Y}_1, \cdots, \hat{Y}_N$ be $N$ i.i.d random variables drawn from some unknown distribution, and the ensemble method is $\bar{Y}=\frac{1}{N}\sum_t \hat{Y}^t)$. Using MSE as the metric, we have, 
\begin{align}
  & \text{Bias}(\bar{Y}) = \text{Bias}(\hat{Y}), \label{eqa_th0_a} \\
  & \text{Var}(\bar{Y}) \le \text{Var}(\hat{Y}).  \label{eqa_th0_b}
\end{align}

\begin{proof}
  According Lemma \ref{lemma1}, we have $\mathbb{C}_{\hat{Y}} =\mathbb{E}[\hat{Y}] = \mathbb{C}_{\bar{Y}} =\mathbb{E}[\bar{Y}]$. Therefore, $\text{Bias}(\bar{Y})=\text{Bias}(\hat{Y})$.
  Then, for the variance item, we have
  \begin{align}
    \text{Var}(\bar{Y}) &  = \mathbb{E}[\mathcal{L}(\mathbb{C}_{\bar{Y}}, \bar{Y})] = \mathbb{E}[\mathcal{L}(\bar{Y}^*, \mathbb{C}_{\hat{Y}}^* )]  \qquad (\text{by Lemma \ref{lemma1}}) \notag \\ 
    & = \mathbb{E}[\mathcal{L}(\frac{1}{N}\sum_i\hat{Y_i}^* , \mathbb{C}_{\hat{Y}}^* )]  \notag \\ 
    & \le \frac{1}{N}\sum_i \mathbb{E}  [\mathcal{L}(\hat{Y}_i^*, \mathbb{C}_{\hat{Y}}^* )] \qquad (\text{by convexity of} \ \mathcal{L})  \notag \\
    & \le \frac{1}{N}\sum_i \mathbb{E}  [\mathcal{L}(\mathbb{C}_{\hat{Y}}, \hat{Y}_i )] \notag \\
    & \le \mathbb{E}  [\mathcal{L}(\mathbb{C}_{\hat{Y}}, \hat{Y}_i )] = \text{Var}(\hat{Y})
  \end{align}
  Thus, the theorem is proved.
\end{proof}

\subsection{Proof of Theorem \ref{th1_5}}
\label{seca_proof_th1_5}

\begin{proof}

Since the model normalizes (RevIN) by the instance's mean and standard deviation (std) before prediction, then rescales the prediction by the instance's original mean and std. Formally, if $Z = (X - \mu_X)/\sigma_X$ is the normalized input (with $\mu_X, \sigma_X$ the instance mean and std), and $h(Z)$ is the model's prediction in normalized space, then the final prediction is:
\begin{equation}
  \hat{Y} = \mu_X^Y + \sigma_X^Y \, h(Z),
\end{equation}
where $\mu_X^Y, \sigma_X^Y$ are the output rescaling. If an input's deviation from the mean is scaled by factor $c$, the model's output deviations scale by approximately the same factor $c$ due to RevIN. Thus the prediction variance changes with input variance.

\noindent
\textbf{Single-Model Bias and Variance Under Covariate Shift}

Since the true function $f_0(X)$ is unchanged and models are not updated after drift, the model was approximately unbiased for $f_0$ on $P_0$, it remains so on $P_t$. Thus, for each model $\hat{f}$, we assume $\text{Bias}(X) \approx 0$ or small for all $X$. Therefore, we focus on variance differences, noting $E_{P_t}[\text{Bias}(X)^2] \approx E_{P_0}[\text{Bias}(X)^2]$.
Now let's observe the change of variance. The predictive variance of model $\hat{f}$ changes  under $P_t$. Consider the variance of $\hat{f}(X)$ when $X$ is drawn from $P_0$.  Formally,
\begin{equation}
  \text{Var}_{P_t}[\hat{f}(X)] = E_{X\sim P_t}\big[(\hat{f}(X) - E_{P_t}[\hat{f}(X)])^2\big].
\end{equation}
Thus, one can expect $\text{Var}_{P_t}[\hat{f}(X)] \approx c^2, \text{Var}_{P_0}[\hat{f}(X)]$. In other words, if $\sigma_t^2 = c^2 \sigma^2$, then the model's output variance scales by about $c^2$. Thus the expected variance term is
\begin{equation}
  \text{Var}_{P_t}(\hat{f}) = E_{X\sim P_t}[\text{Var}_{P_0}(\hat{f}(X))] > E_{X\sim P_0}[\text{Var}_{P_0}(\hat{f}(X))] = \text{Var}_{P_0}(\hat{f}).
\end{equation}

Therefore, for single model, $\text{Bias}^2_{P_t} \approx \text{Bias}^2_{P_0}$, $\text{Var}_{P_t} \approx c^2 \text{Var}_{P_0}= c^2\sigma^2$, but the noise item is the same.

\noindent
\textbf{Ensemble Variance Reduction under $P_t$}

Now consider the ensemble predictor $\hat{f}_{\text{ens}}(X) = \frac{1}{L}\sum_{l=1}^L g_i(X)$. The ensemble bias is $\text{Bias}_{\text{ens}}(X) = E_{P_t}[\hat{f}_{\text{ens}}(X)] - \hat{f}_0(X) = \frac{1}{L}\sum_{i=1}^L E_{P_t}[g_l(X)] - \hat{f}_0(X) = \frac{1}{L}\sum_{i=1}^L (g_l(X) + \text{Bias}(X)) - \hat{f}_0(X)$. Since each model is unbiased ($\text{Bias}_{\text{ens}}(X)=0$), then clearly $\text{Bias}_{\text{ens}}(X)=0$. In the worst case that all models have the same bias, the ensemble's bias is the same $\text{Bias}_{\text{ens}}(X)$. 
Further, the variance of the average is $\frac{1}{L^2}$ times the sum of their variances
\begin{equation}
  \text{Var}_{{P_t}} \big(\hat{f}_{\text{ens}}(X)\big) = \text{Var}_{P_t}\Big(\frac{1}{L}\sum_{i=1}^L g_l(X)\Big) = \frac{1}{L^2}\sum_{i,j}\text{Cov}(g_i(X), g_j(X)).
\end{equation}
If model errors are uncorrelated, $\text{Cov}(g_l,g_j)=0$ for $l\neq j$, and $\text{Var}(g_l)$ is the same $\forall l$. Then this simplifies to
\begin{equation}
  \text{Var}_{P_t} \big(\hat{f}_{\text{ens}}(X)\big) = \frac{1}{L^2}\Big(L\text{Var}_{P_t}(g_l(X))\Big) = \frac{1}{L}\text{Var}_{P_t}(g_l(X)).
\end{equation}
In the general case, let $\alpha_l \in (0,1)$ is the average pairwise correlation between model prediction errors, one can show
\begin{equation}
  \text{Var}_{P_t}(\hat{f}_{\text{ens}}(X)) = \frac{1}{L^2}\Big(L c^2 \sigma^2 + L(L-1)\alpha_l \sigma^2\Big) = \frac{1 + (L-1)\alpha_l}{L} c^2 \sigma^2,
\end{equation}
Thus, ensembling $L$ predictors cuts down the variance roughly by a factor of $L$, while the ensemble bias term $\text{Bias}_{\text{ens}}(X)$ is not worse than the single-model bias. 

Finally, we compare the overall prediction error of a single model to that of the ensemble on the shifted distribution $P_t$. Using the bias–variance decomposition as shown in Eq. \ref{eq_f0_mse}, we have
\begin{equation}
  \text{MSE}_{P_t}(\hat{f}_0) - \text{MSE}_{P_t}(\hat{f}_{\text{ens}}) \approx c^2 \sigma^2 - \frac{1 + (L-1)\alpha_l}{L} c^2 \sigma^2 = \frac{(L-1)(1-\alpha_l)}{L}c^2\sigma^2_t \ge 0. 
\end{equation}
\noindent
Therefore, under $P_t$, the ensemble's total prediction error remains lower than that of any single model.
\end{proof}

\subsection{Proof of Equation \ref{eq_min_loss}}
\label{seca_proof_eq6}
Here, we want to prove that Eq. \ref{eq_min_loss} yields
\begin{equation}
  (Y - \hat{f}_{\text{ens}}(X))^2 = \sum_{l=1}^L \alpha_l (Y-g_l(X))^2 - \sum_{l=1}^L \alpha_l (\hat{f}_{\text{ens}}(X) - g_l(X))^2
  \label{eq_app_6}.
\end{equation}

\begin{proof}
\begin{align}
  (Y - \hat{f}_{\text{ens}}(X))^2 & = Y^2 -2\hat{f}_{\text{ens}}(X) Y + \hat{f}_{\text{ens}}(X)^2  \notag \\
  & = \sum_{l=1}^{L} \alpha_l Y^2 -2 \sum_{l=1}^{L} \alpha_l g_l(X) Y + \hat{f}_{\text{ens}}(X)^2 \notag \\ 
  & = \sum_{l=1}^{L} \alpha_l Y^2 -2 \sum_{l=1}^{L} \alpha_l g_l(X) Y  + \sum_{l=1}^{L} \alpha_l g_l(X)^2 - \sum_{l=1}^{L} \alpha_l g_l(X)^2  + \hat{f}_{\text{ens}}(X)^2  \notag \\
  & = \sum_{l=1}^{L} \alpha_l \left( Y^2 -2g_l(X)Y + g_l(X)^2 \right)  - \left(\sum_{l=1}^{L} \alpha_l g_l(X)^2 - \hat{f}_{\text{ens}}(X)^2 \right) \notag \\
  & = \sum_{l=1}^{L} \alpha_l \left( Y - g_l(X) \right)^2  - \left(\sum_{l=1}^{L} \alpha_l g_l(X)^2   - 2\hat{f}_{\text{ens}}(X)^2  + \hat{f}_{\text{ens}}(X)^2  \right) \notag \\
  & = \sum_{l=1}^{L} \alpha_l \left( Y - g_l(X) \right)^2  - \left(\sum_{l=1}^{L} \alpha_l g_l(X)^2   - 2 \sum_{l=1}^{L} \alpha_l g_l(X) \hat{f}_{\text{ens}}(X)  + \hat{f}_{\text{ens}}(X)^2  \right) \notag \\
  & = \sum_{l=1}^{L} \alpha_l \left( Y - g_l(X) \right)^2  - \sum_{l=1}^{L} \alpha_l \left(g_l(X)^2   - 2 g_l(X) \hat{f}_{\text{ens}}(X)  + \hat{f}_{\text{ens}}(X)^2\right) \notag \\
  & = \sum_{l=1}^{L} \alpha_l \left( Y - g_l(X) \right)^2  - \sum_{l=1}^{L} \alpha_l \left( \hat{f}_{\text{ens}}(X)- g_l(X) \right)^2
\end{align}
\end{proof}

\subsection{Proof of Theorem \ref{th2}}
\label{seca_proof_th2}

Now, we proof how subtraction in DeepBooTS mitigates concept drift, i.e., how Equation \ref{eq_a2} diminishes the variance term in Equation \ref{eq_a3}.
According to the estimation error $e_l \overset{i.i.d}{\sim} \mathcal{N}(0, \nu)$, we have $\text{Var}(g_l(X))=\nu$. Then, utilizing $\text{Cov}(g_l, g_{k\neq l})=\mu$ in Theorem \ref{th1}, we have the proof as follows:
\begin{proof}
\begin{align}
  \text{Var}(\hat{Y}) & = \text{Var}(\hat{f}_{\text{ens}}(X)) = \frac{1}{\hbar^2} \text{Var}\left( i \sum_{l=0}^{\hbar} \alpha g_{2l+1}(X) - i \sum_{l=0}^{\hbar} \alpha g_{2l}(X) \right) \nonumber \\
  & = \frac{1}{\hbar^2} \text{Var}\left( \sum_{l=0}^{\hbar} \alpha g_{2l+1}(X)\right) +   \frac{1}{\hbar^2} \text{Var}\left( \sum_{l=0}^{\hbar} \alpha g_{2l}(X) \right) 
  -  \frac{1}{\hbar^2} \text{Cov}\left( \sum_{l=0}^{\hbar} \alpha g_{2l+1}(X) \sum_{l=0}^{\hbar} \alpha g_{2l}(X)  \right) \nonumber \\
  & = \frac{1}{\hbar^2} \sum_{l=0}^{\hbar} \alpha^2 \text{Var}\left( g_{2l+1}(X)\right) 
  +  \frac{1}{\hbar^2} \sum_{l=0}^{\hbar} \sum_{k=1,k\neq j}^{\hbar} \alpha^2 \text{Cov}\left( g_{2l+1}(X)  g_{2k+1}(X)  \right) \nonumber \\ 
  & \ \ \ \ \ \ + \frac{1}{\hbar^2} \sum_{l=0}^{\hbar} \alpha^2 \text{Var}\left(  g_{2l}(X) \right)
  +  \frac{1}{\hbar^2} \sum_{l=0}^{\hbar} \sum_{k=1,k\neq j}^{\hbar} \alpha^2 \text{Cov}\left( g_{2l}(X)  g_{2k}(X)  \right)  \nonumber \\
  & \ \ \ \ \ \ -  \frac{1}{\hbar^2} \sum_{l=0}^{\hbar} \sum_{k=0}^{\hbar} \alpha^2 \text{Cov}\left(  g_{2l+1}(X)  g_{2k}(X)  \right) \nonumber \\
  & = \frac{1}{\hbar^2} \sum_{l=0}^{\hbar} \alpha^2 \nu 
  +  \frac{1}{\hbar^2} \sum_{l=0}^{\hbar} \sum_{k=1,k\neq j}^{\hbar} \alpha^2 
  + \frac{1}{\hbar^2} \sum_{l=0}^{\hbar} \alpha^2 \nu  
    +  \frac{1}{\hbar^2} \sum_{l=0}^{\hbar} \sum_{k=1,k\neq j}^{\hbar} \alpha^2 \mu 
  -  \frac{1}{\hbar^2} \sum_{l=0}^{\hbar} \sum_{k=0}^{\hbar} \alpha^2 \mu \nonumber \\
  & = \frac{1}{\hbar} \alpha^2 \nu +  \frac{\hbar-1}{\hbar}  \alpha^2 \mu 
  + \frac{1}{\hbar} \alpha^2 \nu +  \frac{\hbar-1}{\hbar} \alpha^2 \mu -  \alpha^2 \mu \nonumber \\
  & = \frac{2}{\hbar} \alpha^2 \nu +  2\frac{\hbar-1}{\hbar}  \alpha^2 \mu -  \alpha^2 \mu \nonumber \\
  & < \frac{2}{\hbar} \alpha^2 (\nu + \mu) \nonumber \\
  & \le \frac{4}{L} \alpha^2 (\nu + \mu). \label{eq_a4}
\end{align}
\end{proof}

Here, we focus on ways to reduce the variance of the model $\text{Var}(\hat{Y})$, not the noise $\varepsilon$ inherent in the TS data.
Equation \ref{eq_a4} indicates that if the models trained in ensemble are highly correlated, using multiple models offers no significant advantage over using a single model. However, when the trained models are diverse, the variance of the estimation can be significantly reduced by a factor of 
$L$. This also implies that ensemble is never detrimental; it either has no effect or enhances the estimation by reducing variance.
It is important to note that in DeepBooTS, the modules are interdependent, so Theorem \ref{th1} only offers an ideal upper bound on variance estimation. Nonetheless, it provides valuable insights into designing deep models that can mitigate concept drift.

\section{Dataset}
\label{app_dataset}

\subsection{Commonly used TS datasets}

The information of the experiment datasets used in this paper are summarized as follows: (1) Electricity Transformer Temperature (ETT) dataset \cite{Zhou2021Informer}, which contains the data collected from two electricity transformers in two separated counties in China, including the load and the oil temperature recorded every 15 minutes (ETTm) or 1 hour (ETTh) between July 2016 and July 2018. (2) Electricity (ECL) dataset \footnote[1]{https://archive.ics.uci.edu/ml/datasets/ElectricityLoadDiagrams20112014} collects the hourly electricity consumption of 321 clients (each column) from 2012 to 2014. (3) Exchange \cite{lai2018modeling} records the current exchange of 8 different countries from 1990 to 2016. (4) Traffic dataset \footnote[2]{http://pems.dot.ca.gov} records the occupation rate of freeway system across State of California measured by 861 sensors. (5) Weather dataset \footnote[3]{https://www.bgc-jena.mpg.de/wetter} records every 10 minutes for 21 meteorological indicators in Germany throughout 2020. (6) Solar-Energy \cite{lai2018modeling} documents the solar power generation of 137 photovoltaic (PV) facilities in the year 2006, with data collected at 10-minute intervals. (7) The PEMS dataset \cite{liu2022scinet} comprises publicly available traffic network data from California, collected within 5-minute intervals and encompassing 358 attributes.
(8) Illness (ILI) dataset \footnote[4]{https://gis.cdc.gov/grasp/fluview/fluportaldashboard.html} describes the influenza-like illness patients in the United States between 2002 and 2021, which records the ratio of patients seen with illness and the total number of the patients. 
The detailed statistics information of the datasets is shown in Table \ref{tb1}.

\begin{table}[!ht]
  \centering
  \caption{Details of the seven TS datasets. }
  \label{tb1}
  \fontsize{9pt}{10pt}\selectfont
  \setlength{\tabcolsep}{1mm}
    \begin{tabular}{cccc}
      \toprule
       Dataset    & length  & features & frequency \\
      \midrule
      ETTh1       & 17,420  & 7       & 1h \\
      ETTh2       & 17,420  & 7       & 1h \\
      ETTm1       & 69,680  & 7       & 15m \\
      ETTm2       & 69,680  & 7       & 15m \\
      Electricity & 26,304  & 321     & 1h  \\
      Exchange    & 7,588   & 8       & 1d  \\
      Traffic     & 17,544  & 862     & 1h  \\
      Weather     & 52,696  & 21      & 10m \\
      Solar       & 52,560  & 137     & 10m  \\
      PEMS        & 26,208  & 358     & 5m  \\
      Illness     & 966     & 7       & 7d  \\
      \bottomrule
    \end{tabular}
  \vspace{-1em}
\end{table}

\subsection{Monash TS Forecasting Datasets}
(1) Saugeenday dataset \footnote[5]{https://zenodo.org/records/4656058} contains a single very long time series representing the daily mean flow of the Saugeen River at Walkerton in cubic meters per second and the length of this time series is 23741.
(2) Sunspot dataset \footnote[6]{https://www.kaggle.com/datasets/robervalt/sunspots} contains monthly numbers of sunspots, as from the World Data Center, aka SIDC, between 1749 and 2018 with a total observation of 3240 months.
(3) M4 dataset \cite{MAKRIDAKIS202054} is a collection of 100,000 time series used in the fourth Makridakis Prediction Contest. The dataset consists of a time series of annual, quarterly, monthly, and other weekly, daily, and hourly data. 
In this paper, we utilize the hourly version of the M4 dataset and standardize its length to 768.
(4) NN5 dataset \footnote[7]{http://www.neural-forecasting-competition.com/downloads/NN5/datasets/download.htm} contains daily time series originated from the observation of daily withdrawals at 111 randomly selected different cash machines at different locations within England.
(5) Oikolab Weather dataset \footnote[8]{https://docs.oikolab.com} contains hundreds of terabytes of weather data, all of them are post-processed data from national weather agencies.
(6) US Births dataset \footnote[9]{https://cran.r-project.org/web/packages/mosaicData} provides birth rates and related data across the 50 states and DC from 2016 to 2021, with a total observation of 7305.
(7) Rideshare dataset \cite{godahewa2021monash} comprises diverse hourly time series representations of attributes pertinent to Uber and Lyft rideshare services across multiple locations in New York. The data spans from November 26th, 2018, to December 18th, 2018.
All the above data sets are collected in the Monash library\footnote[10]{https://forecastingdata.org}.

\subsection{Large-Scale Time Series Datasets}
\label{app_large_ts}

Now we introduce three large-scale TS datasets:

(1) Milano \cite{barlacchi2015multi} is provided by SKIL\footnote[11]{http://jol.telecomitalia.com/jolskil} of Telecom Italia. The dataset was collected from November 1, 2013, to January 1, 2014, and the data is aggregated into 10-minute intervals over the whole city of Milan (62 days, 300 million records, about 19 GB).
Milano records the time of the interaction and the specific radio base station that managed it. 
The dataset contains 10,000 nodes, and the length of each node is 1498. The ratio of training, validation and test datasets is 10:2:3.

(2) CBS is collected from all base station in a certain city in China, including up-link and
down-link communication trafﬁc data. The dataset is collected from June 8 to July 26, 2019, with a temporal interval of 60 minutes over the whole city (94 days, 51.9 million records, about 17.1 GB). 
CBS contains 4454 nodes, each with a length of 4032. The ratio of training, validation and test datasets is 3:1:1.

(3) C2TM \cite{Modeling15Chen} makes an analysis on week-long traffic generated by a large population of people in a median-size city of China.
C2TM analyses make use of request-response records extracted from traffic at the city scale, consisting of individuals' activities during a continuous week (actually eight days from Aug. 19 to Aug. 26, 2012), with accurate timestamp and location information indicated by connected cellular base stations. 
C2TM contains 13269 nodes, each with a length of 192. The ratio of training, validation and test datasets is 5:1:2.

\section{Implementation Details}
\label{app_exp_imp}

The model undergoes training utilizing the ADAM optimizer \cite{kingma2014adam} and minimizing the Mean Squared Error (MSE) loss function.
The training process is halted prematurely, typically within 10 epochs.
The DeepBooTS architecture solely comprises the embedding layer and backbone architecture, devoid of any additional introduced hyperparameters.
Refer to Appendix \ref{app_hpyer} for the hyperparameter sensitivity analysis.
During model validation, two evaluation metrics are employed: MSE and Mean Absolute Error (MAE).

For the large-scale datasets, we initially employ the RSS algorithm in \cite{liang2024act} to perform data sampling. In this algorithm, we random decompose the entire large graph into 17 and 25 sub-graphs for the CBS and Milano datasets, respectively.
As in previous works \cite{wang2024timexer, wang2024timemixer, liu2023itransformer, nie2022time, wu2021autoformer, Zhou2021Informer, liang2023does}, the model is trained using the ADAM optimizer \cite{kingma2014adam} under input-36-output-24 setting.
The learning rate is set to 1e$^{-4}$ with MSE loss. 
The batch size is set to 64 for training and 1 for testing, and the maximum number of epochs is set to 10.

\section{Performance Gains of DeepBooTS with Different Attention}
\label{app_otherattn}

Figure \ref{fig_other_attn} breaks down how the choice of attention inside DeepBooTS affects forecast accuracy, reporting both MSE and MAE for five attention variants (Full-Attention, Prob-Attention, Autoformer, Flow-Attention, Period-Attention) on three TS datasets.
On the highly stochastic Traffic series, vanilla dense attention still wins. Every sparse or structured variant increases error, with Periodic attention showing the largest drop‑off.
On the Traffic dataset, ProbSparse and Periodic attention each cut MSE by about 3\%, whereas Autoformer's decomposition hurts performance heavily.
On the Weather dataset, ProbSparse delivers the best MSE, Flow attention excels at MAE (–5.9\%), and Autoformer again degrades accuracy. Periodic attention sits in the middle-better than Autoformer but slightly behind Full and ProbSparse in MSE.

\begin{figure*}[!ht]
  \centerline{ \includegraphics[width=\textwidth]{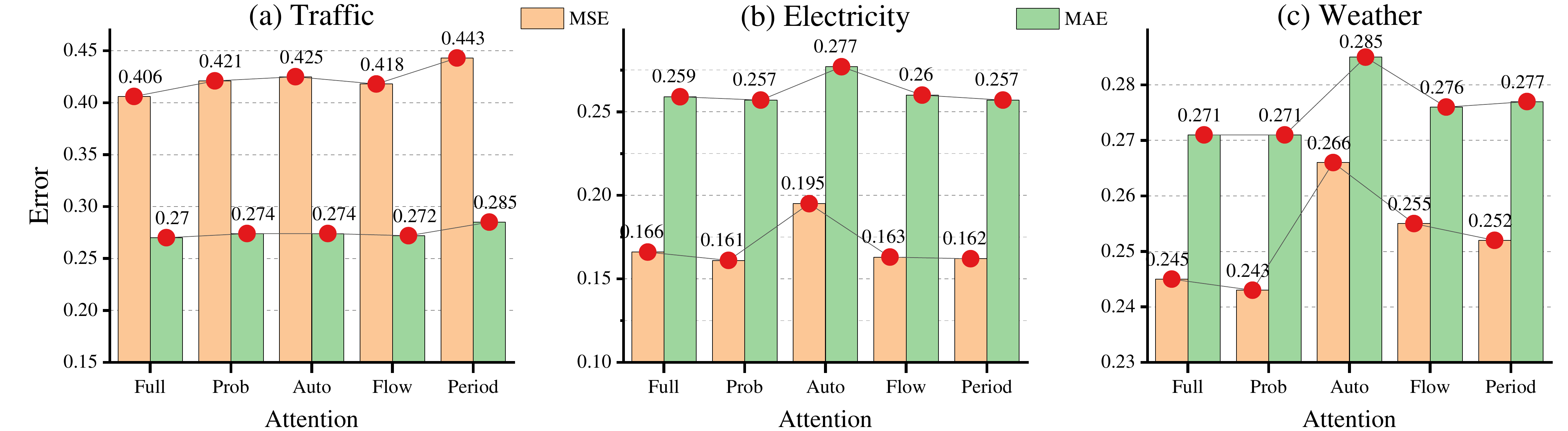}}
  \caption{Performance gains of DeepBooTS with different attention mechanisms.
  }
  \label{fig_other_attn}
\end{figure*}

\section{Performance Gains on Large-Scale TS Datasets}
\label{app_large_ts_per}

We also evaluated the performance gain when DeepBooTS is configured with other models, including PatchTST \cite{nie2022time}, STID \cite{shao2022spatial}, GWNet \cite{wu2019graph}, Autoformer \cite{wu2021autoformer} and Informer \cite{Zhou2021Informer}.
As shown in Fig. \ref{fig_large_ts}, DeepBooTS brings significant performance gains to other models in large-scale TS datasets. 
DeepBooTS consistently lowers forecasting error across all five backbones and three datasets, with relative gains ranging from small (2\% on C2TM) to massive (up to 94\% on CBS), highlighting its ability to stabilize and sharpen predictions even for models struggling under heavy distribution shifts.

\begin{figure*}[!ht]
  \centerline{\includegraphics[width=\textwidth]{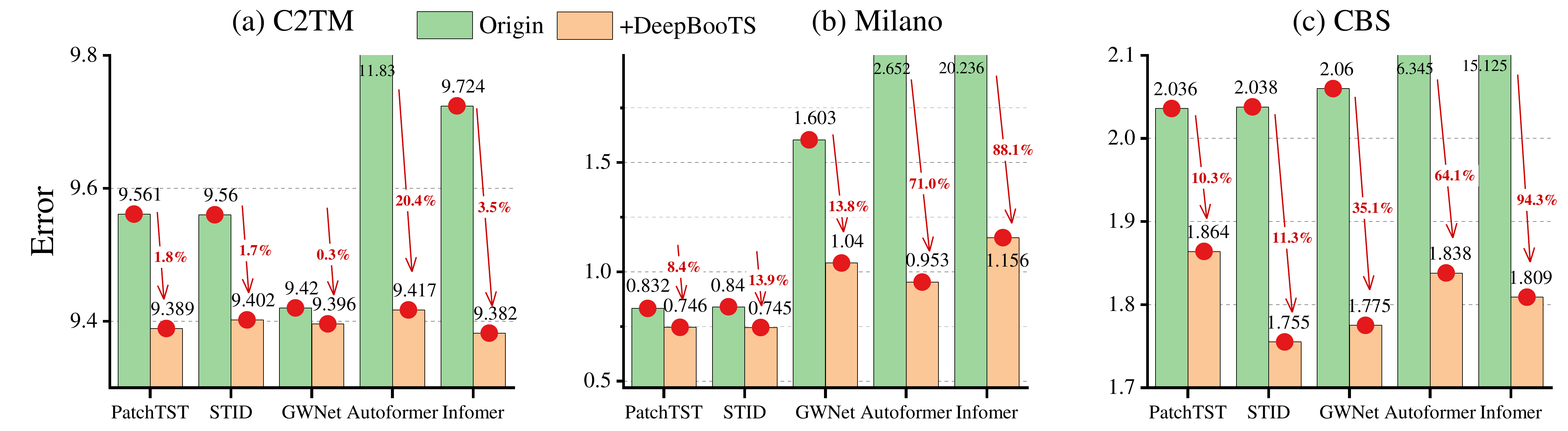}}
  \caption{Performance gains that DeepBooTS contributes to other models in large-scale TS datasets. 
  }
  \label{fig_large_ts}
\end{figure*}

\section{Model Complexity and Computation Cost}
\label{seca_cost}

DeepBooTS contains gate mechanism and attention module. Among them, the gate mechanism is composed of linear layer and sigmoid activation function, i.e., $O_{l+1}=$Sigmoid(Linear($X_l$))$\cdot$Linear($X_l)$, which still maintains linear complexity. Therefore, like other Transformer models, the main complexity of our model is mainly in the attention module ($O(N^2)$).

Further, we evaluate the running time, memory usage, parameters, and FLOPs of our model in comparison to other Transformer-based models using identical settings. The computation costs are shown in the Table \ref{tb_cost}.
It reveals that the proposed model has a smaller computational cost and faster running speed compared to others. It is noteworthy that the introduction of the auxiliary output branch only marginally increases the computational cost, i.e., the differences between the indicators of our model without gate and those of the original model are negligible.

\begin{table}[!ht]
  \centering
  \caption{Model complexity, training efficiency and computation cost. }
  \label{tb_cost}
  \fontsize{9pt}{12pt}\selectfont
  \setlength{\tabcolsep}{1mm}
    \begin{tabular}{ccccc}
      \toprule
    Models                 & Seconds/Epoch & GPU Memory Usage (GB) & Parameters (MB) & FLOPs (GB) \\ \toprule
    DeepBooTS            & 28.7          & 1.2              & 8.8       & 3.1   \\
    DeepBooTS w/o Gate & 28.0          & 1.1              & 6.5       & 2.5   \\
    PatchTST               & 30.1          & 2.3              & 8.4       & 17.4  \\
    FEDformer              & 396           & 7.1              & 14.7      & 558.1 \\
    Autoformer             & 84.5          & 4.8              & 10.5      & 66.2  \\
    Informer               & 81.8          & 2.3              & 11.3      & 63.3   \\ \bottomrule
    \end{tabular}
\end{table}

\section{Concept Drift in TS Forecasting}
\label{app_overfit_exp}

We recorded the training process and verification errors of various models, including Informer, Autoformer, FEDformer, DLinear, iTransformer, and DeepBooTS (ours).
The experimental setup is the same as in the original paper of these methods.
As shown in Fig. \ref{fig_app_overift} (zoom in for better viewing), the model exhibits a significant concept drift phenomenon, particularly in the early stages of training.
We analyze that the main cause of concept drift in this scenario is the high model complexity: Using highly complex models with many parameters can lead to the model ``memorizing'' the noise and specific details in the training data. This results in excellent performance on the training data but poor generalization to new data.
In this case, using a linear model (e.g., DLinear in Fig. \ref{fig_app_overift}(e)) will mitigate concept drift but will also fail to capture temporal correlations.
The proposed DeepBooTS method adopts the idea of ensemble learning to provide flexible complexity for the model, thereby alleviating the poor generalization esulting from concept drift the model to the training data.
This can be verified in Fig. \ref{fig_app_overift}(f), where DeepBooTS has high training error but low test error.

\begin{figure*}[!ht]
  \centering
  \centerline{\includegraphics[width=0.98\textwidth]{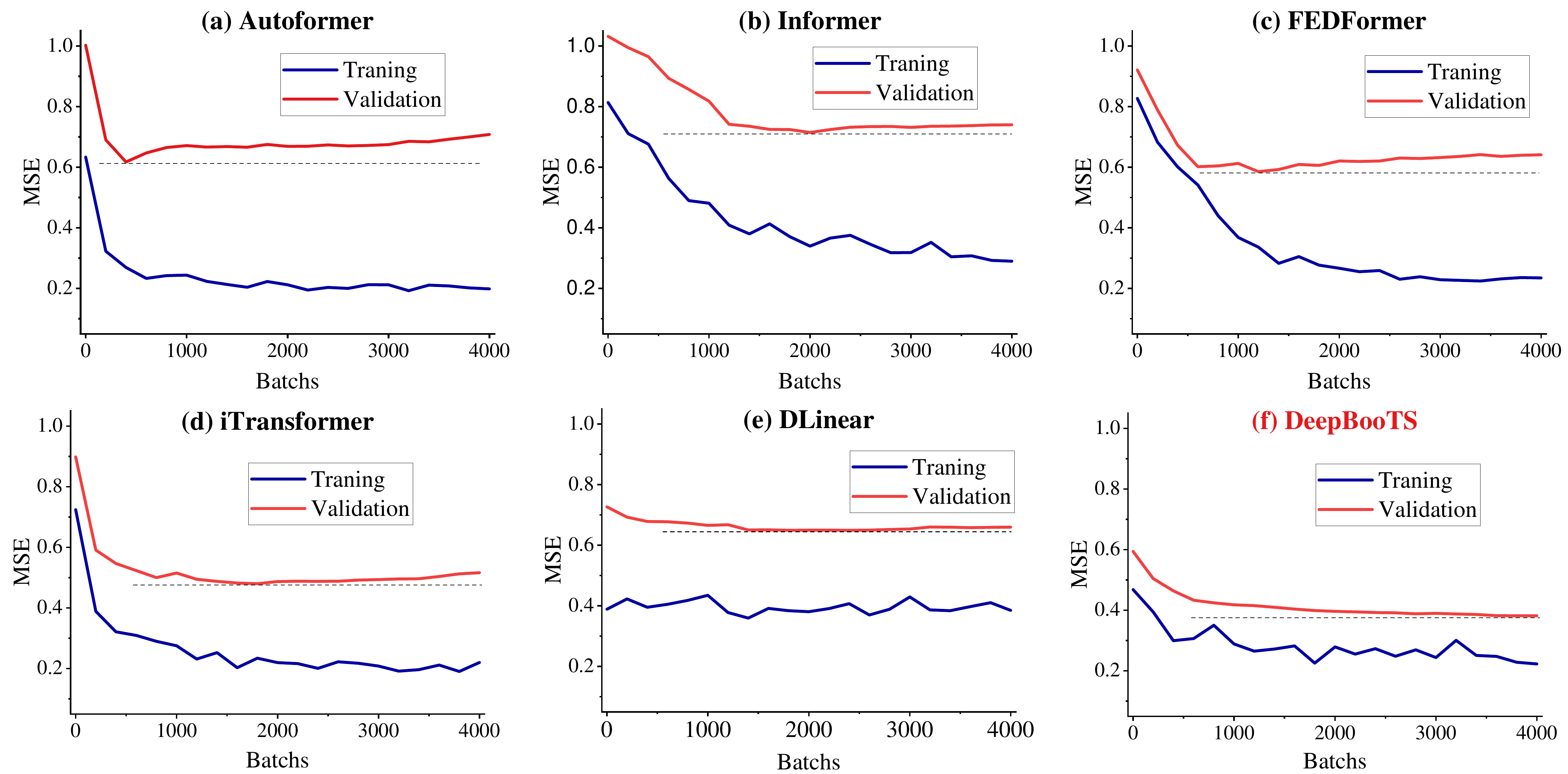}}
  \caption{Concept drift in TS Forecasting. Experiments are conducted utilizing various models on the Traffic dataset. Similar phenomena can be observed on other datasets.}
  \label{fig_app_overift} 
\end{figure*}

\section{Hyperparameter Sensitivity}
\label{app_hpyer}

We evaluate the hyperparameter sensitivity of DeepBooTS with respect to the learning rate, the number of the block, the batch size and the embedding dimension.
As shown in Fig. \ref{fig_hyper}, the performance of DeepBooTS fluctuates under different hyperparameter settings.
In most cases, increasing the number of blocks tends to enhance model performance.
Once again, this confirms that DeepBooTS exhibits resilience against concept drift across diverse datasets.
Notably, we observe that the learning rate, being the most prevalent influencing factor, especially in scenarios involving numerous attributes.
Meanwhile, modifying the batch size induces minor fluctuations in the model's performance, albeit with a limited impact.
Furthermore, we conducted extensive analysis on hyperparameters, as shown in Tables \ref{tb_hpyer1}, \ref{tb_hpyer2}, and \ref{tb_hpyer3}. It is evident that the proposed model is less insensitive to numerous hyperparameters.
In summary, DeepBooTS exhibits low sensitivity to these hyperparameters, thereby enhancing its resilience against concept drift.

\begin{figure*}[!ht]
  \centering
  \centerline{\includegraphics[width=\textwidth]{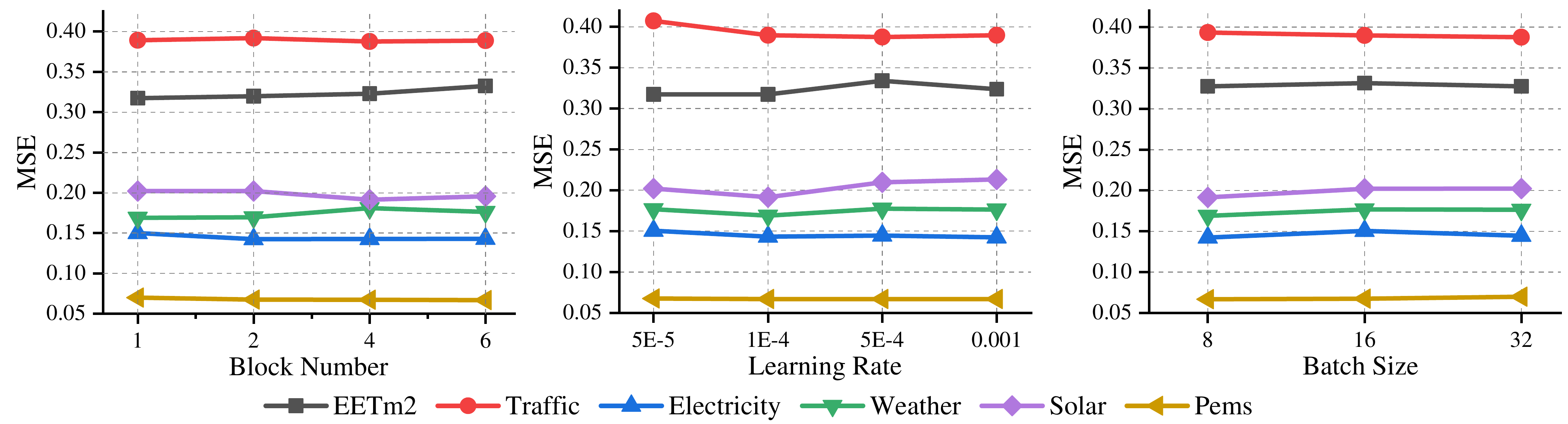}}
  \caption{Hyperparameter sensitivity with respect to the number of block, the learning rate and the number of batch size. The results are recorded with the input length $I = 96$ and the prediction length $O = 12$ for PEMS and $O = 96$ for others.}
  \label{fig_hyper} 
\end{figure*}

\begin{table}[!ht]
    \centering
    \caption{Ablation studies of DeepBooTS's hyperparameters on the Electricity dataset.}
    \label{tb_hpyer1}
    \fontsize{9pt}{10pt}\selectfont
    \setlength{\tabcolsep}{2mm}
    \begin{tabular}{c|ccccc| cccc|cccc}
    \toprule
      & \multicolumn{5}{c}{Block Number}            & \multicolumn{4}{c}{Embedding Dimension} & \multicolumn{4}{c}{Learning Rate} \\ \toprule
    Metrics     & 2     & 3     & 4     & 5     & 6     & 92     & 384   & 512   & 768   & 1E-3 & 5E-4 & 1E-4 & 5E-5 \\  \toprule
    MSE         & 0.137 & 0.137 & 0.137 & 0.138 & 0.137 & 0.140   & 0.137 & 0.143 & 0.136 & 0.143 & 0.143  & 0.144  & 0.145   \\
    MAE         & 0.231 & 0.231 & 0.231 & 0.230  & 0.231 & 0.233  & 0.231 & 0.235 & 0.231 & 0.231 & 0.231  & 0.230   & 0.232   \\
    RMSP        & 0.214 & 0.214 & 0.256 & 0.215 & 0.213 & 0.219  & 0.215 & 0.214 & 0.212 & 0.214 & 0.214  & 0.212  & 0.226   \\
    MAPE        & 2.295 & 2.294 & 2.34  & 2.32  & 2.32  & 2.397  & 2.318 & 2.295 & 2.305 & 2.29  & 2.303  & 2.302  & 2.35    \\
    sMAPE       & 0.466 & 0.464 & 0.467 & 0.466 & 0.464 & 0.475  & 0.467 & 0.466 & 0.468 & 0.466 & 0.466  & 0.465  & 0.469   \\
    MASE        & 0.257 & 0.256 & 0.259 & 0.257 & 0.257 & 0.266  & 0.258 & 0.257 & 0.261 & 0.257 & 0.257  & 0.258  & 0.275   \\
    Q25         & 0.232 & 0.229 & 0.232 & 0.232 & 0.231 & 0.234  & 0.232 & 0.232 & 0.233 & 0.232 & 0.229  & 0.229  & 0.238   \\
    Q75         & 0.23  & 0.232 & 0.23  & 0.231 & 0.23  & 0.24   & 0.231 & 0.23  & 0.231 & 0.23  & 0.232  & 0.233  & 0.243  \\ \bottomrule
    \end{tabular}
\end{table}

\begin{table}[!ht]
    \centering
    \caption{Ablation studies of DeepBooTS's hyperparameters on the Traffic dataset.}
    \label{tb_hpyer2}
    \fontsize{9pt}{10pt}\selectfont
    \setlength{\tabcolsep}{2mm} 
    \begin{tabular}{c|ccccc| cccc|cccc}
        \toprule
         & \multicolumn{5}{c|}{Block Number}            & \multicolumn{4}{c|}{Embedding Dimension}       & \multicolumn{4}{c}{Learning Rate} \\ \toprule
        Metrics & 2     & 3     & 4     & 5     & 6     & 92     & 384     &  512    & 768     & 1E-3   & 5E-4   & 1E-4   & 5E-5 \\ \toprule
        MSE     & 0.400 & 0.401 & 0.400 & 0.400 & 0.401 & 0.393  & 0.387   & 0.387   & 0.389   & 0.389  & 0.391  & 0.399  & 0.404  \\
        MAE     & 0.266 & 0.267 & 0.267 & 0.267 & 0.267 & 0.265  & 0.258   & 0.259   & 0.257   & 0.257  & 0.256  & 0.261  & 0.271  \\
        RMSP    & 0.208 & 0.209 & 0.208 & 0.209 & 0.208 & 0.207  & 0.198   & 0.199   & 0.200   & 0.207  & 0.203  & 0.208  & 0.214  \\
        MAPE    & 2.730 & 2.770 & 2.765 & 2.750 & 2.750 & 2.760  & 2.670   & 2.690   & 2.700   & 2.720  & 2.660  & 2.720  & 2.820  \\
        sMAPE   & 0.485 & 0.486 & 0.486 & 0.486 & 0.486 & 0.481  & 0.470   & 0.480   & 0.480   & 0.484  & 0.482  & 0.491  & 0.492  \\
        MASE    & 0.257 & 0.258 & 0.258 & 0.258 & 0.258 & 0.255  & 0.248   & 0.250   & 0.252   & 0.249  & 0.250  & 0.274  & 0.262  \\
        Q25     & 0.262 & 0.265 & 0.264 & 0.264 & 0.264 & 0.263  & 0.256   & 0.256   & 0.254   & 0.253  & 0.255  & 0.259  & 0.268  \\
        Q75     & 0.271 & 0.270 & 0.270 & 0.270 & 0.271 & 0.267  & 0.260   & 0.262   & 0.261   & 0.261  & 0.258  & 0.264  & 0.275  \\ \bottomrule
    \end{tabular}
\end{table}

\begin{table}[!ht]
    \centering
    \caption{Ablation studies of DeepBooTS's hyperparameters on the Weather dataset.}
    \label{tb_hpyer3}
    \fontsize{9pt}{10pt}\selectfont
    \setlength{\tabcolsep}{2mm}
        \begin{tabular}{c|ccccc| cccc|cccc}
            \toprule
             & \multicolumn{5}{c|}{Block Number}            & \multicolumn{4}{c|}{Embedding Dimension}       & \multicolumn{4}{c}{Learning Rate} \\ \toprule
            Metrics & 2     & 3     & 4     & 5     & 6     & 92     & 384     &  512     & 768     & 1E-3   & 5E-4   & 1E-4   & 5E-5 \\ \toprule
            MSE     & 0.160 & 0.156 & 0.160 & 0.159 & 0.157 & 0.161  & 0.156   & 0.158   & 0.159   & 0.215   & 0.168  & 0.157  & 0.159 \\
            MAE     & 0.203 & 0.201 & 0.206 & 0.203 & 0.199 & 0.205  & 0.201   & 0.202   & 0.202   & 0.271   & 0.211  & 0.202  & 0.202 \\
            RMSP    & 0.213 & 0.213 & 0.220 & 0.215 & 0.206 & 0.213  & 0.211   & 0.210   & 0.212   & 0.324   & 0.218  & 0.212  & 0.210 \\
            MAPE    & 8.990 & 8.860 & 9.220 & 9.430 & 9.130 & 10.600 & 9.151   & 8.387   & 8.450   & 17.680  & 8.530  & 7.900  & 7.860 \\
            sMAPE   & 0.529 & 0.524 & 0.534 & 0.528 & 0.519 & 0.531  & 0.526   & 0.525   & 0.524   & 0.637   & 0.542  & 0.529  & 0.523 \\
            MASE    & 1.162 & 1.110 & 1.172 & 1.161 & 1.090 & 1.263  & 1.171   & 1.151   & 1.094   & 1.230   & 1.202  & 1.170  & 1.200 \\
            Q25     & 0.198 & 0.196 & 0.203 & 0.197 & 0.188 & 0.197  & 0.195   & 0.197   & 0.189   & 0.272   & 0.206  & 0.200  & 0.195 \\
            Q75     & 0.209 & 0.207 & 0.209 & 0.209 & 0.210 & 0.212  & 0.208   & 0.206   & 0.215   & 0.269   & 0.215  & 0.205  & 0.209 \\ \bottomrule
        \end{tabular}
  \end{table}

\section{Different Attention Promotion}
\label{app_improve_attn}

\begin{figure*}[!ht]
  \centering
  \centerline{\includegraphics[width=\textwidth]{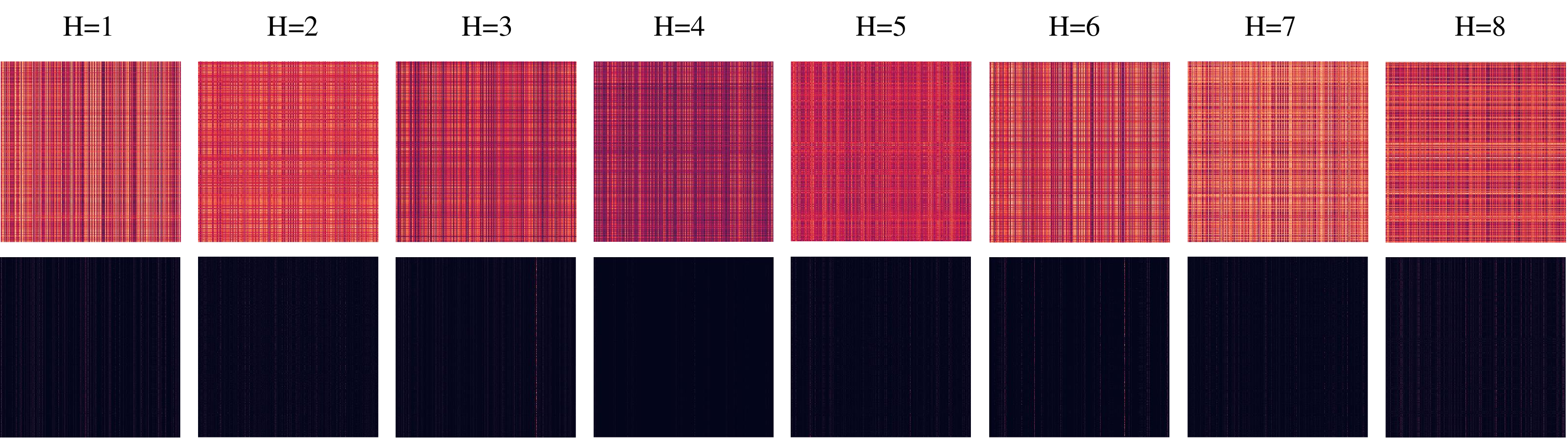}}
  \caption{Visualization of Attention utilized in DeepBooTS. The model is trained on Traffic dataset with 862 attributes under the setting of Input-96-Predict-96. Both the Attention score (top) and the post-softmax score (below) of the 8 heads (H) are from the first block.}
  \label{fig_vis_attn_all} 
\end{figure*}


We visualized the Attention maps for all Attention heads in the initial layer of DeepBooTS, and the results are presented in Fig. \ref{fig_vis_attn_all}.
It is evident that the Attention score graph exhibits numerous bar structures, which is especially prominent on the post-softmax Attention map.
This implies that there exists a row in Query that bears a striking resemblance to all the columns in Key. 
This scenario arises when there are numerous attributes, many of which are homogeneously represented.
Our speculation suggests that this capability of the Attention may explain its ability to capture subtle patterns in TS without succumbing to concept drift.
Furthermore, we substituted its original Attention with other novel Attention mechanisms to compare the resulting changes in model performance. 
The full results for all prediction lengths are provided in Table \ref{tb_attns}.

\begin{table*}[h!]
  \centering
  \caption{Ablation of different Attention layers.}
  \label{tb_attns}
    \fontsize{9pt}{12pt}\selectfont
    \setlength{\tabcolsep}{3mm}
    \begin{threeparttable}
      \begin{tabular}{ccc >{\columncolor{gray!15}} cc >{\columncolor{gray!15}} cc >{\columncolor{gray!15}} cc >{\columncolor{gray!15}} cc >{\columncolor{gray!15}} c}
        \toprule
        \multicolumn{2}{c}{Attention Layer} & \multicolumn{2}{c}{FullAttention} & \multicolumn{2}{c}{ProbAttention} & \multicolumn{2}{c}{AutoCorrelation} & \multicolumn{2}{c}{FlowAttention} & \multicolumn{2}{c}{PeriodAttention} \\
        \toprule
        Dataset                      & Length  & MSE            & MAE             & MSE             & MAE             & MSE              & MAE              & MSE             & MAE             & MSE              & MAE              \\
        \toprule
        \multirow{5}{*}{Traffic}     & 96     & \underline{\color{blue}0.386}           & \underline{\color{blue}0.258}           & 0.390           & 0.259           & 0.396            & 0.261            & {\bf\color{red}0.385}           & {\bf\color{red}0.256}           & 0.414            & 0.273            \\
                                     & 192    & {\bf\color{red}0.398}           & {\bf\color{red}0.263}           & 0.410           & 0.268           & 0.413            & 0.268            & \underline{\color{blue}0.406}           & \underline{\color{blue}0.265}           & 0.431            & 0.279            \\
                                     & 336    & {\bf\color{red}0.409}           & {\bf\color{red}0.270}           & 0.425           & 0.274           & 0.429            & 0.275            & \underline{\color{blue}0.424}           & \underline{\color{blue}0.274}           & 0.447            & 0.286            \\
                                     & 720    & {\bf\color{red}0.431}           & {\bf\color{red}0.287}           & \underline{\color{blue}0.459}           & \underline{\color{blue}0.293}           & 0.461            & 0.293            & \underline{\color{blue}0.459}           & \underline{\color{blue}0.293}           & 0.478            & 0.303            \\ \cline{2-12}
        \rowcolor{cyan!15}  \cellcolor{white} & Avg    & {\bf\color{red}0.406}           & {\bf\color{red}0.270}           & 0.421           & 0.274           & 0.425            & 0.274            & \underline{\color{blue}0.418}           & \underline{\color{blue}0.272}           & 0.443            & 0.285            \\ \toprule
        \multirow{5}{*}{Electricity} & 96     & 0.135           & 0.229           & {\bf\color{red}0.133}           & {\bf\color{red}0.226}           & 0.174            & 0.256            & \underline{\color{blue}0.134}           & \underline{\color{blue}0.228}           & 0.142            & 0.233            \\
                                     & 192    & 0.151          & 0.246           & {\bf\color{red}0.148}           & {\bf\color{red}0.241}           & 0.179            & 0.263            & 0.150           & 0.251           & \underline{\color{blue}0.148}            & \underline{\color{blue}0.243}            \\
                                     & 336    & 0.172           & 0.268           & \underline{\color{blue}0.167}           & \underline{\color{blue}0.263}           & 0.195            & 0.279            & 0.174           & 0.267           & {\bf\color{red}0.166}            & {\bf\color{red}0.262}            \\
                                     & 720    & 0.204           & 0.294           & 0.195           & \underline{\color{blue}0.290}           & 0.232            & 0.311            &  {\bf\color{red}0.193}           & 0.294           & \underline{\color{blue}0.194}            &  {\bf\color{red}0.289}            \\ \cline{2-12}
        \rowcolor{cyan!15}  \cellcolor{white} & Avg    & 0.166           & 0.259           &  {\bf\color{red}0.161}           & {\bf \color{red}0.255}           & 0.195            & 0.277            & 0.163           & 0.260           & \underline{\color{blue}0.162}            & \underline{\color{blue}0.258}            \\ \toprule
        \multirow{5}{*}{Weather}     & 96     & \underline{\color{blue}0.160}           & \underline{\color{blue}0.206}           & {\bf\color{red}0.159}           & {\bf\color{red}0.204}           & 0.179            & 0.220            & 0.169           &  0.209           & 0.170            & 0.212            \\
                                     & 192    & \underline{\color{blue}0.208}           & \underline{\color{blue}0.250}           & {\bf\color{red}0.207}           & {\bf\color{red}0.248}           & 0.228            & 0.261            & 0.220           & 0.255           & 0.214            & 0.253          \\
                                     & 336    & \underline{\color{blue}0.269}           & \underline{\color{blue}0.296}           & {\bf\color{red}0.265}           & {\bf\color{red}0.291}           & 0.288            & 0.304            & 0.276           & 0.294           & 0.273            & \underline{\color{blue}0.296}            \\
                                     & 720    & \underline{\color{blue}0.343}           & {\bf\color{red}0.334}           & {\bf\color{red}0.340}           & \underline{\color{blue}0.342}           & 0.367            & 0.356            & 0.354           & 0.348           & 0.352            & 0.347            \\ \cline{2-12}
        \rowcolor{cyan!15}  \cellcolor{white} & Avg    & \underline{\color{blue}0.245}           & \underline{\color{blue}0.271}           & {\bf\color{red}0.243 }           & {\bf\color{red}0.271}           & 0.266            & 0.285            & 0.255           & 0.276           & 0.252            & 0.277            \\ 
        \bottomrule
        \end{tabular}
    \begin{tablenotes}
      \item[*] The input length $I$ is set as 96, while the prediction lengths $O \in $ \{96, 192, 336, 720\}. 
    \end{tablenotes}
    \end{threeparttable}
\end{table*}

\section{Full Results on ETT Datasets}
\label{app_ett}
The ETT dataset records electricity data of four different granularities and types. 
We offer an in-depth comparison of DeepBooTS utilizing the complete ETT dataset to facilitate future research endeavors.
Detailed results are provided in Table \ref{tb_ettm}.
It is evident that DeepBooTS demonstrates excellent performance on the complete ETT dataset.

\begin{table*}[!ht]
    \centering
    \caption{Full Multivariate Forecasting Results on ETT dataset.}
    \label{tb_ettm}
    \fontsize{9pt}{12pt}\selectfont
    \setlength{\tabcolsep}{1.3mm}
        \begin{tabular}{ccc >{\columncolor{gray!15}} cc >{\columncolor{gray!15}} cc >{\columncolor{gray!15}} cc >{\columncolor{gray!15}} cc >{\columncolor{gray!15}} cc >{\columncolor{gray!15}} cc >{\columncolor{gray!15}} c}
        \toprule
        \multicolumn{2}{c}{Models}      & \multicolumn{2}{c}{DeepBooTS} & \multicolumn{2}{c}{Periodformer} & \multicolumn{2}{c}{FEDformer} & \multicolumn{2}{c}{Autoformer} & \multicolumn{2}{c}{Informer} & \multicolumn{2}{c}{LogTrans} & \multicolumn{2}{c}{Reformer} \\
        \toprule
                            & Length & MSE              & MAE             & MSE             & MAE            & MSE           & MAE           & MSE            & MAE           & MSE           & MAE          & MSE           & MAE          & MSE           & MAE          \\
        \toprule
        \multirow{5}{*}{\rotatebox{90}{ETTh1}} & 96     & {\bf\color{red}0.370}             & {\bf\color{red}0.394}           & \underline{\color{blue} 0.375}           & \underline{\color{blue} 0.395}          & 0.395         & 0.424         & 0.449          & 0.459         & 0.865         & 0.713        & 0.878         & 0.74         & 0.837         & 0.728        \\
                            & 192    & \underline{\color{blue} 0.423}            & \underline{\color{blue} 0.427}           & {\bf\color{red}0.413}           & {\bf\color{red}0.421}          & 0.469         & 0.47          & 0.5            & 0.482         & 1.008         & 0.792        & 1.037         & 0.824        & 0.923         & 0.766        \\
                            & 336    & \underline{\color{blue} 0.465}            & \underline{\color{blue} 0.446}           & {\bf\color{red}0.443}           & {\bf\color{red}0.441}          & 0.530          & 0.499         & 0.521          & 0.496         & 1.107         & 0.809        & 1.238         & 0.932        & 1.097         & 0.835        \\
                            & 720    & {\bf\color{red}0.465}            & {\bf\color{red}0.464}           & \underline{\color{blue} 0.467}           & \underline{\color{blue} 0.469}          & 0.598         & 0.544         & 0.514          & 0.512         & 1.181         & 0.865        & 1.135         & 0.852        & 1.257         & 0.889        \\ \cline{2-16}
        \rowcolor{cyan!15}  \cellcolor{white} & Avg    & \underline{\color{blue} 0.431}            & \underline{\color{blue} 0.433}           & {\bf\color{red}0.425}           & {\bf\color{red}0.432}          & 0.498         & 0.484         & 0.496          & 0.487         & 1.040         & 0.795        & 1.072         & 0.837        & 1.029         & 0.805        \\ 
        \toprule
        \multirow{5}{*}{\rotatebox{90}{ETTh2}} & 96     & {\bf\color{red}0.285}            & {\bf\color{red}0.333}           & \underline{\color{blue} 0.313}           & \underline{\color{blue} 0.356}          & 0.394         & 0.414         & 0.358          & 0.397         & 3.755         & 1.525        & 2.116         & 1.197        & 2.626         & 1.317        \\
                            & 192    & {\bf\color{red}0.354}            & {\bf\color{red}0.377}           & \underline{\color{blue} 0.389}           & \underline{\color{blue} 0.405}          & 0.439         & 0.445         & 0.456          & 0.452         & 5.602         & 1.931        & 4.315         & 1.635        & 11.12         & 2.979        \\
                            & 336    & {\bf\color{red}0.405}            & {\bf\color{red}0.420}           & \underline{\color{blue} 0.418}           & \underline{\color{blue} 0.432}          & 0.482         & 0.48          & 0.482          & 0.486         & 4.721         & 1.835        & 1.124         & 1.604        & 9.323         & 2.769        \\
                            & 720    & {\bf\color{red}0.418}            & {\bf\color{red}0.438}           & \underline{\color{blue} 0.427}           & \underline{\color{blue} 0.444}          & 0.5           & 0.509         & 0.515          & 0.511         & 3.647         & 1.625        & 3.188         & 1.54         & 3.874         & 1.697        \\ \cline{2-16}
        \rowcolor{cyan!15}  \cellcolor{white} & Avg    & {\bf\color{red}0.366}            & {\bf\color{red}0.392}           & \underline{\color{blue} 0.387}           & \underline{\color{blue} 0.409}          & 0.454         & 0.462         & 0.453          & 0.462         & 4.431         & 1.729        & 2.686         & 1.494        & 6.736         & 2.191        \\
        \toprule
        \multirow{5}{*}{\rotatebox{90}{ETTm1}} & 96     & {\bf\color{red}0.311}            & {\bf\color{red}0.340}           & \underline{\color{blue} 0.337}           & \underline{\color{blue} 0.378}          & 0.378         & 0.418         & 0.505         & 0.475         & 0.672         & 0.571        & 0.600           & 0.546        & 0.538         & 0.528        \\
                            & 192    & {\bf\color{red}0.363}            & {\bf\color{red}0.379}           & \underline{\color{blue} 0.413}           & \underline{\color{blue} 0.431}          & 0.464         & 0.463         & 0.553          & 0.496         & 0.795         & 0.669        & 0.837         & 0.7          & 0.658         & 0.592        \\
                            & 336    & {\bf\color{red}0.393}            & {\bf\color{red}0.391}           & \underline{\color{blue} 0.428}           & \underline{\color{blue} 0.441}          & 0.508         & 0.487         & 0.621          & 0.537         & 1.212         & 0.871        & 1.124         & 0.832        & 0.898         & 0.721        \\
                            & 720    & {\bf\color{red}0.454}            & {\bf\color{red}0.442}           & \underline{\color{blue} 0.483}           & \underline{\color{blue} 0.483}          & 0.561         & 0.515         & 0.671          & 0.561         & 1.166         & 0.823        & 1.153         & 0.82         & 1.102         & 0.841        \\ \cline{2-16}
        \rowcolor{cyan!15}  \cellcolor{white} & Avg    & {\bf\color{red}0.380}            & {\bf\color{red}0.388}           & \underline{\color{blue} 0.415}           & \underline{\color{blue} 0.433}          & 0.478         & 0.471         & 0.588          & 0.517         & 0.961         & 0.734        & 0.929         & 0.725        & 0.799         & 0.671        \\
        \toprule
        \multirow{5}{*}{\rotatebox{90}{ETTm2}} & 96     & {\bf\color{red}0.171}            & {\bf\color{red}0.250}           & \underline{\color{blue} 0.186}           & \underline{\color{blue} 0.274}          & 0.204         & 0.288         & 0.255          & 0.339         & 0.365         & 0.453        & 0.768         & 0.642        & 0.658         & 0.619        \\
                            & 192    & {\bf\color{red}0.235}            & {\bf\color{red}0.292}           & \underline{\color{blue} 0.252}           & \underline{\color{blue} 0.317}          & 0.316         & 0.363         & 0.281          & 0.34          & 0.533         & 0.563        & 0.989         & 0.757        & 1.078         & 0.827        \\
                            & 336    & {\bf\color{red}0.293}            & {\bf\color{red}0.330}            & \underline{\color{blue} 0.311}           & \underline{\color{blue} 0.355}          & 0.359         & 0.387         & 0.339          & 0.372         & 1.363         & 0.887        & 1.334         & 0.872        & 1.549         & 0.972        \\
                            & 720    & {\bf\color{red}0.393}            & {\bf\color{red}0.389}           & \underline{\color{blue} 0.402}           & \underline{\color{blue} 0.405}          & 0.433         & 0.432         & 0.422          & 0.419         & 3.379         & 1.338        & 3.048         & 1.328        & 2.631         & 1.242        \\ \cline{2-16}
        \rowcolor{cyan!15}  \cellcolor{white} & Avg    & {\bf\color{red}0.273}            & {\bf\color{red}0.315}           & \underline{\color{blue} 0.288}           & \underline{\color{blue} 0.338}          & 0.328         & 0.368         & 0.324          & 0.368         & 1.410         & 0.810        & 1.535         & 0.900        & 1.479         & 0.915        \\
        \toprule
        \multicolumn{2}{c}{ $1^{\text{st}}$ Count} & {\bf\color{red} 17}           & {\bf\color{red}17}           & \underline{\color{blue}3}                   & \underline{\color{blue}3}       & 0               & 0               & 0             & 0               & 0               & 0               & 0             & 0             & 0               & 0             \\
        \bottomrule
        \end{tabular}
\end{table*}

\newpage
\section{Pseudocode of DeepBooTS}
\label{sec_algo}

To facilitate a comprehensive understanding of DeepBooTS's working principle, we offer detailed pseudocode outlining its implementation, as shown in Algorithm \ref{algo}.
The implementation presented here outlines the core ideas of DeepBooTS. 
It is evident that the deployment procedure of DeepBooTS exhibits a relative simplicity, characterized by the inclusion of several iteratively applied blocks.
This property renders it highly versatile for integrating newly devised Attention mechanisms or modules.
As demonstrated in Appendix \ref{app_improve_attn}, the substitution of Attention mechanisms in DeepBooTS with novel alternatives yields superior generalization.

\begin{algorithm}[htbp]
  \caption{DeepBooTS Architecture.}\label{algo}
  \begin{algorithmic}[1]
  \REQUIRE 
  Batch size $B$, input lookback time series $\mathbf{X}\in\mathbb{R}^{B \times I\times D}$; input length $I$; predicted length $O$; embedding dimension $E$; the number of the block $L$; the output length $H$ in each block.

    \STATE $\mathbf{X}_0=\texttt{StandardScaler}(\mathbf{X}^T)$ \COMMENT{$\mathbf{X}_0\in\mathbb{R}^{B \times D \times I}$}

    \STATE $\triangleright \ $ Apply a linear transformation to the temporal dimension of X to align it with the embedding dimension.

    \STATE $\mathbf{X}_{1,1}=\texttt{Linear}(\mathbf{X}_0)$ \  \COMMENT{$\mathbf{X}_1 \in\mathbb{R}^{B \times D \times E}$} \  \COMMENT{The embedded X enters the backbone as an input stream.}

    \STATE $O_0 = 0$ \ \COMMENT{Set the initial value of the output stream to 0.}

    \STATE $\textbf{for}\ l\ \textbf{in}\ \{1,\cdots,L\}\textbf{:}$ \ \COMMENT{Run through the backbone.}

    \STATE $\textbf{\textcolor{white}{for}}$  $\triangleright \ $ Apply attention to the input stream. It can be achieved by using frequency Attention.
    \STATE $\textbf{\textcolor{white}{for}} \  \mathbf{\hat{X}}_{l,1} = \texttt{Attention}(\mathbf{X}_{l,1})$ \ \COMMENT{$\mathbf{\hat{X}}_{l,1}\in\mathbb{R}^{B \times D \times E}$} 
    
    \STATE $\textbf{\textcolor{white}{for}}$  $\triangleright \ $ Subtracting the attention output from the input.
    \STATE $\textbf{\textcolor{white}{for}}\ \mathbf{R}_{l, 1} = \mathbf{X}_{l,1} - \delta \ \texttt{Dropout}(\mathbf{\hat{X}}_{l,1})$ \ 
    \COMMENT{$\mathbf{R}_{l, 1}\in\mathbb{R}^{B \times D \times E}$}
    
    \STATE $\textbf{\textcolor{white}{for}}$ $\triangleright \ $ LayerNorm is adopted to reduce attributes discrepancies.
    \STATE $\textbf{\textcolor{white}{for}} \mathbf{X}_{l,2} = \mathbf{R}_{l, 1} = \texttt{LayerNorm}(\mathbf{R}_{l,1}) $ 

    \STATE $\textbf{\textcolor{white}{for}}$ $\triangleright \ $The feedforward exclusively performs nonlinear transformations on the temporal aspect.
    \STATE $\textbf{\textcolor{white}{for}}\ \mathbf{\hat{X}}_{l,2} = \texttt{FeedForward}(\mathbf{R}_{l, 1})$ \ \COMMENT{$\mathbf{\hat{X}}_{l,2} \in\mathbb{R}^{B \times D \times E}$}
    
    \STATE $\textbf{\textcolor{white}{for}}$  $\triangleright \ $ Subtracting the feedforward output from the input.
    \STATE $\textbf{\textcolor{white}{for}}\ \mathbf{R}_{l, 2} = \mathbf{X}_{l,2} - \mathbf{\hat{X}}_{l,2}$ \ \COMMENT{$\mathbf{R}_{l, 2}\in\mathbb{R}^{B \times D \times E}$}
    
    \STATE $\textbf{\textcolor{white}{for}}$  $\triangleright \ $ Add gate mechanism to input stream.
    \STATE $\textbf{\textcolor{white}{for}}\ \mathbf{X}_{l+1} = \texttt{Sigmoid}\big(\texttt{Linear} (\mathbf{R}_{l,2})\big) \cdot \texttt{Linear} (\mathbf{R}_{l,2})$ \ \COMMENT{$\mathbf{X}_{l+1}\in\mathbb{R}^{B \times D \times E}$}

    \STATE $\textbf{\textcolor{white}{for}}$  $\triangleright \ $ Add gate mechanism to output stream.
    \STATE $\textbf{\textcolor{white}{for}}\ \mathbf{\hat{O}}_{l+1} = \texttt{Sigmoid}\big(\texttt{Linear} ([\mathbf{\hat{X}}_{l,1}, \mathbf{\hat{X}}_{l,2}] )\big) \cdot \texttt{Linear} ([\mathbf{\hat{X}}_{l,1}, \mathbf{\hat{X}}_{l,2}])$ \ \COMMENT{$\mathbf{\hat{O}}_{l+1}\in\mathbb{R}^{B \times D \times H}$} 

    \STATE $\textbf{\textcolor{white}{for}}$  $\triangleright \ $ Subtract the previously learned output.
    \STATE $\textbf{\textcolor{white}{for}}\ \mathbf{O}_{l+1} = \mathbf{\hat{O}}_{l+1} - \mathbf{O}_{l}$ \ \COMMENT{$\mathbf{O}_{l+1}\in\mathbb{R}^{B \times D \times H}$}
    
    \STATE $\textbf{End for}$

    \STATE  $\triangleright \ $ Align the final output with the predicted length.
    \STATE $\textbf{if}\  H \neq O \ \ \textbf{then}\ $:
    \STATE $\textbf{\textcolor{white}{for}}$ $ \mathbf{O}_L = \texttt{Linear} (\mathbf{O}_{L-1})$ \ \COMMENT{$\mathbf{O}_{L}\in\mathbb{R}^{B \times D \times O}$}
    \STATE $ \textbf{Output}\ \texttt{InvertedScaler}(\mathbf{O}_L^T)$ \ \COMMENT{Output the final prediction results $\mathbf{O}_{L}^T\in\mathbb{R}^{B \times O \times D}$}
  \end{algorithmic} 
\end{algorithm}

\section{Full Multivariate TS Forecasting Results}
\label{app_mts}

The full results for multivariate TS forecasting are presented in Table \ref{tb_mts}.
It is evident that DeepBooTS demonstrates excellent performance on the complete multivariate TS forecasting.
Notably, several pioneering models have also achieved competitive performance on certain datasets under particular settings.
For instance, Informer, considered a groundbreaking model in long-term TS forecasting, demonstrates advanced performance on the Solar-Energy dataset with input-96-predict-192 and -720 settings.
This is due to the substantial presence of zero values on each column attribute of the Solor-Energy dataset.
This renders the KL-divergence based ProbSparse Attention, as adopted in Informer, highly effective on this sparse dataset.
Additionally, linear-based methods (e.g., DLinear) have demonstrated promising results on the Weather dataset with input-336-predict-720 setting, while convolution-based methods (e.g., SCINet) have yielded favorable results on the PEMS dataset with input-168-predict-192 setting. 
This phenomenon can be ascribed to a twofold interplay of factors. Previously, the diversity of input settings exerts a direct influence on model generalization. Secondarily, other models exhibit a propensity to overfit non-stationary TS characterized by aperiodic fluctuations.
Remarkably, DeepBooTS adeptly mitigates both concept drift and underfitting challenges in multivariate TS forecasting, thereby enhancing its overall performance.
Particularly on datasets with numerous attributes, e.g., Traffic and Solor-Energy, DeepBooTS achieves superior performance by feeding the learned meaningful patterns to the output layer at each block.

\begin{table*}[!ht]
    \centering
    \caption{Multivariate TS forecasting results on six benchmark datasets.}
    \label{tb_mts}
      \fontsize{9pt}{12pt}\selectfont
      \setlength{\tabcolsep}{0.8mm}
        \begin{tabular}{ccc >{\columncolor{gray!15}} cc >{\columncolor{gray!15}} cc >{\columncolor{gray!15}} cc >{\columncolor{gray!15}} cc >{\columncolor{gray!15}} cc >{\columncolor{gray!15}} cc >{\columncolor{gray!15}} cc >{\columncolor{gray!15}} cc >{\columncolor{gray!15}} c}
        \toprule
        \multicolumn{2}{c}{Model}          & \multicolumn{2}{c}{DeepBooTS*} & \multicolumn{2}{c}{DeepBooTS} & \multicolumn{2}{c}{TimeXer} & \multicolumn{2}{c}{TimeMixer} & \multicolumn{2}{c}{iTransformer} & \multicolumn{2}{c}{Crossformer} & \multicolumn{2}{c}{PatchTST} & \multicolumn{2}{c}{DLinear} & \multicolumn{2}{c}{FEDformer} \\ 
                                  & Metric  & MSE       & MAE               & MSE       & MAE            & MSE              & MAE       & MSE              & MAE         & MSE             & MAE           & MSE             & MAE                  & MSE            & MAE          & MSE            & MAE            & MSE               & MAE                   \\ \toprule
        \multirow{5}{*}{\rotatebox{90}{ETT}}         & 96  & {\bf\color{red} 0.280}      & {\bf\color{red}0.336}   & \underline{\color{blue} 0.284}     & \underline{\color{blue}0.329}    &  {\bf\color{red} 0.280}  & 0.338   & 0.290  &  0.339   & 0.297            & 0.349            & 0.745             & 0.584            &   0.299        & 0.347          & 0.333          & 0.387          & 0.358             & 0.397                \\
                                      & 192 & {\bf\color{red} 0.332}     & {\bf\color{red}0.371}          & \underline{\color{blue}0.344}     & \underline{\color{blue}0.369}              & 0.348  & 0.377   & 0.350  & 0.373    & 0.380            & 0.400           & 0.877             & 0.656            & 0.365         & 0.404       & 0.477          & 0.476          & 0.429             & 0.439               \\
                                      & 336 & {\bf\color{red} 0.362}     & {\bf\color{red}0.392}        & \underline{\color{blue}0.389}     & \underline{\color{blue}0.397}                & 0.393  & 0.404   & 0.390  & 0.404    & 0.428            & 0.432            & 1.043             & 0.731            & 0.413       & 0.415          & 0.594          & 0.541          & 0.496             & 0.487            \\
                                      & 720 & {\bf\color{red} 0.410}     & {\bf\color{red}0.429}        & 0.433     & 0.433                                       & 0.430  & \underline{\color{blue}0.432}  & 0.439   & 0.438            & \underline{\color{blue}0.427} & 0.445            & 1.104             & 0.763            & 0.458        & 0.451        & 0.831          & 0.657          & 0.463             & 0.474             \\ \cline{2-20}
       \rowcolor{cyan!15}  \cellcolor{white}  & Avg & {\bf\color{red}0.346}     & {\bf\color{red}0.382}        & \underline{\color{blue}0.362}     & \underline{\color{blue}0.382}         & 0.365  & 0.388   & 0.367  & 0.389   & 0.383            & 0.407    & 0.942             & 0.684            & 0.384            & 0.404               & 0.559          & 0.515          & 0.437             & 0.449               \\ \toprule
        \multirow{5}{*}{\rotatebox{90}{Traffic}}     & 96  & {\bf\color{red}0.350}     & {\bf\color{red}0.250}          & \underline{\color{blue}0.386}     & \underline{\color{blue}0.258}  & 0.428  & 0.271   & 0.462  & 0.285       & 0.395            & 0.268           & 0.522             & 0.290            & 0.526           & 0.347          & 0.650          & 0.396          & 0.587             & 0.366              \\
                                      & 192 & {\bf\color{red}0.375}     & {\bf\color{red}0.261}          & \underline{\color{blue}0.398}     & \underline{\color{blue}0.263}                 & 0.448  & 0.282   & 0.473  & 0.296  & 0.417            & 0.276          & 0.530             & 0.293            & 0.522           & 0.332        & 0.598          & 0.370          & 0.604             & 0.373              \\
                                      & 336 & {\bf\color{red}0.381}     & {\bf\color{red}0.264}         & \underline{\color{blue}0.409}     & \underline{\color{blue}0.270}                & 0.473  & 0.289   & 0.498  & 0.296      & 0.433            & 0.283            & 0.558             & 0.305            & 0.517         & 0.334         & 0.605          & 0.373          & 0.621             & 0.383              \\
                                      & 720 & {\bf\color{red}0.386}     & {\bf\color{red}0.268}         & \underline{\color{blue}0.431}     & \underline{\color{blue}0.287}                & 0.516  & 0.307   & 0.506  & 0.313  & 0.467            & 0.302            & 0.589             & 0.328            & 0.552           & 0.352          & 0.645          & 0.394          & 0.626             & 0.382              \\ \cline{2-20}
        \rowcolor{cyan!15}  \cellcolor{white} & Avg & {\bf\color{red}0.373}     & {\bf\color{red}0.261}          & \underline{\color{blue}0.406}     & \underline{\color{blue}0.270}      & 0.466  &  0.287   & 0.484  & 0.297    & 0.428            & 0.282   & 0.550             & 0.304            &     0.529       &   0.341   & 0.625          & 0.383          & 0.610             & 0.376               \\ \toprule
        \multirow{5}{*}{\rotatebox{90}{ELC}} & 96  & {\bf\color{red}0.128}     & {\bf\color{red}0.223}        & \underline{\color{blue}0.135}     & \underline{\color{blue}0.229}        & 0.140  &  0.242   & 0.153  & 0.247  & 0.148            & 0.240         & 0.219             & 0.314            &   0.190       &   0.296        & 0.197          & 0.282          & 0.193             & 0.308                 \\
                                      & 192 & {\bf\color{red}0.148}     & {\bf\color{red}0.24}           & \underline{\color{blue}0.151}     & \underline{\color{blue}0.246}         & 0.157  &   0.256    & 0.166  & 0.256   & 0.162            & 0.253            & 0.231             & 0.322            &    0.199      & 0.304        & 0.196          & 0.285          & 0.201             & 0.315                \\
                                      & 336 & {\bf\color{red}0.164}     & {\bf\color{red}0.26}          & 0.172     & 0.268                                                      & 0.176  &  0.275  & 0.185  & 0.277   &  \underline{\color{blue}0.178}  &   \underline{\color{blue}0.269}            & 0.246             & 0.337            &  0.217         & 0.319       & 0.209          & 0.301          & 0.214             & 0.329            \\
                                      & 720 & {\bf\color{red}0.192}     & {\bf\color{red}0.284}        & \underline{\color{blue}0.204}     & \underline{\color{blue}0.294}      & 0.211  &  0.306   & 0.225  & 0.310       & 0.225            & 0.317           & 0.280             & 0.363            & 0.258          & 0.352        & 0.245          & 0.333          & 0.246             & 0.355            \\ \cline{2-20}
        \rowcolor{cyan!15}  \cellcolor{white}   & Avg & {\bf\color{red}0.158}     & {\bf\color{red}0.252}          & \underline{\color{blue}0.166}     & \underline{\color{blue}0.259}   & 0.171   &  0.270   & 0.182  & 0.272        & 0.178            & 0.270           & 0.244             & 0.334            & 0.216        & 0.318        & 0.212          & 0.300          & 0.214             & 0.327              \\ \toprule
        \multirow{5}{*}{\rotatebox{90}{Weather}}     & 96  & {\bf\color{red}0.150}     & {\bf\color{red}0.201}         & 0.160     & 0.206         & \underline{\color{blue}0.157}  &  \underline{\color{blue}0.205}   & 0.163   & 0.209   & 0.174            & 0.214            & 0.158             & 0.230            & 0.186         & 0.227        & 0.196          & 0.255          & 0.217             & 0.296              \\
                                      & 192 & {\bf\color{red}0.194}     & {\bf\color{red}0.244}         & 0.208     & 0.250                 & \underline{\color{blue}0.204}   & \underline{\color{blue}0.247}   & 0.208  & 0.250     & 0.221  & 0.254            & 0.206             & 0.277            & 0.234          & 0.265      & 0.237          & 0.296          & 0.276             & 0.336                \\
                                      & 336 & {\bf\color{red}0.245}     & {\bf\color{red}0.282}          & 0.269     & 0.296                & 0.261  & 0.290   & \underline{\color{blue}0.251}  & \underline{\color{blue}0.287}   & 0.278 & 0.296            & 0.272             & 0.335            & 0.284          & 0.301       & 0.283          & 0.335          & 0.339             & 0.380             \\
                                      & 720 & {\bf\color{red}0.320}     & {\bf\color{red}0.336}         & 0.343     & \underline{\color{blue}0.334}                & 0.340  & 0.341   &  \underline{\color{blue}0.339}  &  0.341      & 0.358    & 0.349            & 0.398             & 0.418            & 0.356           & 0.349        & 0.345          & 0.381          & 0.403             & 0.428              \\ \cline{2-20}
        \rowcolor{cyan!15}  \cellcolor{white}  & Avg & {\bf\color{red}0.227}     & {\bf\color{red}0.266}         & 0.245     & \underline{\color{blue}0.271}   & 0.241  & \underline{\color{blue}0.271}    & \underline{\color{blue}0.240}  & \underline{\color{blue}0.271}         & 0.258            & 0.279            & 0.259             & 0.315            & 0.265          & 0.285       & 0.265          & 0.317          & 0.309             & 0.360                \\ \toprule
        \multirow{5}{*}{\rotatebox{90}{Solar}}       & 96  & {\bf\color{red}0.181}     & {\bf\color{red}0.222}           & 0.192     & \underline{\color{blue}0.222}    & 0.229  & 0.273   & \underline{\color{blue}0.189}   & 0.259       & 0.203            & 0.237        & 0.310             & 0.331            & 0.265        & 0.323       & 0.290          & 0.378          & 0.242             & 0.342              \\
                                      & 192 & {\bf\color{red}0.198}     & {\bf\color{red}0.239}        & 0.230      & 0.251                        & 0.265  & 0.297   & \underline{\color{blue}0.222}  & 0.283             & 0.233            & 0.261             & 0.734             & 0.725            & 0.288        & 0.332         & 0.320          & 0.398          & 0.285             & 0.380                 \\
                                      & 336 & {\bf\color{red}0.202}     & {\bf\color{red}0.245}        & 0.243     & \underline{\color{blue}0.263}     & 0.293  & 0.312   & \underline{\color{blue}0.231}  & 0.292    & 0.248            & 0.273        & 0.750             & 0.735            & 0.301        & 0.339    & 0.353          & 0.415          & 0.282             & 0.376         \\
                                      & 720 & {\bf\color{red}0.206}    & {\bf\color{red}0.252}         & 0.243     & \underline{\color{blue}0.265}                  & 0.302  & 0.318   & \underline{\color{blue}0.223}  & 0.285    & 0.249    & 0.275          & 0.769             & 0.765            & 0.295         & 0.336        & 0.356          & 0.413          & 0.357             & 0.427           \\ \cline{2-20}
        \rowcolor{cyan!15}  \cellcolor{white} & Avg & {\bf\color{red}0.197}     & {\bf\color{red}0.240}        & 0.227     & \underline{\color{blue}0.250}    & 0.272  & 0.300   &  \underline{\color{blue}0.216}   &  0.280      & 0.233            & 0.262            & 0.641             & 0.639            & 0.287          & 0.333         & 0.330          & 0.401          & 0.291             & 0.381         \\ \toprule
        \multirow{5}{*}{\rotatebox{90}{PEMS}}        & 12  & {\bf\color{red}0.057}     & {\bf\color{red}0.157}       & 0.067     & \underline{\color{blue}0.171}       & 0.092  & 0.193  & 0.078  & 0.187     & 0.071            & 0.174       & 0.090             & 0.203            & 0.099        & 0.216       & 0.122          & 0.243          & 0.126             & 0.251      \\
                                      & 24  & {\bf\color{red}0.070}     & {\bf\color{red}0.173}         & 0.093     & 0.203         & 0.111  & 0.218   & 0.108  & 0.216  & 0.093            & 0.201            & 0.121             & 0.240            & 0.142         & 0.259       & 0.201          & 0.317          & 0.149             & 0.275         \\
                                      & 36  & {\bf\color{red}0.083}     & {\bf\color{red}0.186}         & 0.125     & 0.237       & 0.135  & 0.257  & 0.151  & 0.254    & \underline{\color{blue}0.125}           & \underline{\color{blue}0.236}         & 0.202             & 0.317            & 0.211          & 0.319        & 0.333          & 0.425          & 0.227             & 0.348              \\
                                      & 48  & {\bf\color{red}0.091}     & {\bf\color{red}0.195}          & \underline{\color{blue}0.151}     & \underline{\color{blue}0.262}   & 0.226  & 0.232   & 0.213  & 0.311          & 0.160            & 0.270      & 0.262             & 0.367            & 0.269         &  0.370    & 0.457          & 0.515          & 0.348             & 0.434             \\  \cline{2-20}
        \rowcolor{cyan!15}  \cellcolor{white}  & Avg & {\bf\color{red}0.075}     & {\bf\color{red}0.178}     & \underline{\color{blue}0.109}     & \underline{\color{blue}0.218}     & 0.141  & 0.225   & 0.138  & 0.242     & 0.113            & 0.221    & 0.169             & 0.281            & 0.180         & 0.291    & 0.278          & 0.375          & 0.213             & 0.327           \\ \toprule
        \multicolumn{2}{c}{{\bf \color{red}$1^{st}$} or \underline{\color{blue}$2^{st}$}} & {\bf\color{red} 30}  & {\bf\color{red}30}   & \underline{\color{blue}15}   & \underline{\color{blue}22}   &  \underline{\color{blue}3}  &  \underline{\color{blue}4}   & \underline{\color{blue}8}  & \underline{\color{blue}2}      & \underline{\color{blue}3}   & \underline{\color{blue}2}  & \underline{\color{blue}3}    & 0    & 0  & 0  & \underline{\color{blue}1}  & 0    & 0       & 0        \\  \bottomrule
        \end{tabular}
  \end{table*}

\section{Full Univariate TS Forecasting Results}
\label{app_uts}

The full results for univariate TS forecasting are presented in Table \ref{tb4}. 
As other models, e.g., iTransformer \cite{liu2023itransformer}, PatchTST \cite{nie2022time} and Crossformer \cite{zhang2022crossformer} do not offer performance information for all prediction lengths, we compare our method with those that provide comprehensive performance analysis, including Periodformer \cite{liang2023does}, FEDformer \cite{zhou2022fedformer}, Autoformer \cite{wu2021autoformer}, Informer \cite{Zhou2021Informer}, LogTrans \cite{li2019enhancing} and Reformer \cite{Kitaev2020Reformer}.
Despite Periodformer being a model that determines optimal hyperparameters through search, the proposed method outperforms benchmark approaches by achieving the highest count of leading terms across various prediction lengths.
This reaffirms the effectiveness of DeepBooTS.

\begin{table*}[!ht]
  \centering
  \caption{Univariate TS forecasting results on benchmark datasets.}
  \label{tb4}
  \fontsize{9pt}{9pt}\selectfont
  \setlength{\tabcolsep}{1mm}
  \begin{threeparttable}
  \begin{tabular}{ccc >{\columncolor{gray!15}} cc >{\columncolor{gray!15}} cc >{\columncolor{gray!15}} cc >{\columncolor{gray!15}} cc >{\columncolor{gray!15}} cc >{\columncolor{gray!15}} cc >{\columncolor{gray!15}} c}
    \toprule
    \multicolumn{2}{c}{Model}          & \multicolumn{2}{c}{DeepBooTS} & \multicolumn{2}{c}{Periodformer} & \multicolumn{2}{c}{FEDformer} & \multicolumn{2}{c}{Autoformer} & \multicolumn{2}{c}{Informer} & \multicolumn{2}{c}{LogTrans} & \multicolumn{2}{c}{Reformer} \\
    \toprule
                              & Length & MSE              & MAE             & MSE               & MAE               & MSE             & MAE            & MSE             & MAE             & MSE            & MAE            & MSE            & MAE            & MSE            & MAE            \\ \toprule
    \multirow{5}{*}{\rotatebox{90}{ETTh1}}       & 96  & {\bf\color{red}0.055}            & {\bf\color{red}0.177}           & \underline{\color{blue} 0.068}             & \underline{\color{blue} 0.203}             & 0.079           & 0.215          & 0.071           & 0.206           & 0.193          & 0.377          & 0.283          & 0.468          & 0.532          & 0.569          \\
                                & 192 & {\bf\color{red}0.072}            & {\bf\color{red}0.204}           & \underline{\color{blue} 0.088}             & \underline{\color{blue} 0.228}             & 0.104           & 0.245          & 0.114           & 0.262           & 0.217          & 0.395          & 0.234          & 0.409          & 0.568          & 0.575          \\
                                & 336 & {\bf\color{red}0.08}             & {\bf\color{red}0.219}           & \underline{\color{blue} 0.105}             & \underline{\color{blue} 0.256}             & 0.119           & 0.270           & 0.107           & 0.258           & 0.202          & 0.381          & 0.386          & 0.546          & 0.635          & 0.589          \\
                                & 720 & {\bf\color{red}0.079}            & {\bf\color{red}0.224}           & \underline{\color{blue} 0.109}             & \underline{\color{blue} 0.262}             & 0.142           & 0.299          & 0.126           & 0.283           & 0.183          & 0.355          & 0.475          & 0.628          & 0.762          & 0.666          \\ \cline{2-16}
    \rowcolor{cyan!15}  \cellcolor{white} & Avg & {\bf\color{red}0.072}            & {\bf\color{red}0.206}           & \underline{\color{blue} 0.093}             & \underline{\color{blue} 0.237}             & 0.111           & 0.257          & 0.105           & 0.252           & 0.199          & 0.377          & 0.345          & 0.513          & 0.624          & 0.600          \\ \toprule
    \multirow{5}{*}{\rotatebox{90}{ETTh2}}       & 96  & 0.129            & 0.275           & {\bf\color{red} 0.125}             & \underline{\color{blue}0.272}             & \underline{\color{blue} 0.128}           & {\bf\color{red} 0.271}          & 0.153           & 0.306           & 0.213          & 0.373          & 0.217          & 0.379          & 1.411          & 0.838          \\ 
                                & 192 & \underline{\color{blue} 0.178}            & \underline{\color{blue} 0.329}           & {\bf\color{red} 0.175}             & {\bf\color{red}0.329}             & 0.185           & 0.33           & 0.204           & 0.351           & 0.227          & 0.387          & 0.281          & 0.429          & 5.658          & 1.671          \\
                                & 336 & {\bf\color{red}0.211}            & {\bf\color{red}0.365}           & \underline{\color{blue} 0.219}             & \underline{\color{blue} 0.372}             & 0.231           & 0.378          & 0.246           & 0.389           & 0.242          & 0.401          & 0.293          & 0.437          & 4.777          & 1.582          \\
                                & 720 & \underline{\color{blue}0.220}            & {\bf\color{red}0.377}           &  0.249             &  0.400              & 0.278           & 0.42           & 0.268           & 0.409           & 0.291          & 0.439          & {\bf\color{red}0.218}          & \underline{\color{blue} 0.387}          & 2.042          & 1.039          \\ \cline{2-16}
   \rowcolor{cyan!15}  \cellcolor{white} & Avg & {\bf\color{red}0.185}            & {\bf\color{red}0.337}           & \underline{\color{blue} 0.192}             & \underline{\color{blue} 0.343}             & 0.206           & 0.350          & 0.218           & 0.364           & 0.243          & 0.400          & 0.252          & 0.408          & 3.472          & 1.283          \\ \toprule
    \multirow{5}{*}{\rotatebox{90}{ETTm1}}       & 96  & {\bf\color{red}0.029}            & {\bf\color{red}0.126}           & \underline{\color{blue} 0.033}             & \underline{\color{blue} 0.139}             & 0.033           & 0.140           & 0.056           & 0.183           & 0.109          & 0.277          & 0.049          & 0.171          & 0.296          & 0.355          \\ 
                                & 192 & {\bf\color{red}0.044}            & {\bf\color{red}0.158}           & \underline{\color{blue}0.052}             & \underline{\color{blue} 0.177}             &  0.058           & 0.186          & 0.081           & 0.216           & 0.151          & 0.310           & 0.157          & 0.317          & 0.429          & 0.474          \\
                                & 336 & {\bf\color{red}0.057}            & {\bf\color{red}0.185}           & \underline{\color{blue} 0.070}              & 0.267             & 0.084           &  0.231          & 0.076           & \underline{\color{blue}0.218}           & 0.427          & 0.591          & 0.289          & 0.459          & 0.585          & 0.583          \\
                                & 720 & {\bf\color{red}0.080}            & {\bf\color{red}0.218}           & \underline{\color{blue} 0.081}             & \underline{\color{blue} 0.221}             & 0.102           & 0.250           & 0.110            & 0.267           & 0.438          & 0.586          & 0.430           & 0.579          & 0.782          & 0.73           \\ \cline{2-16}
     \rowcolor{cyan!15}  \cellcolor{white} & Avg & {\bf\color{red}0.052}            & {\bf\color{red}0.172}           & \underline{\color{blue} 0.059}             & \underline{\color{blue} 0.201}             & 0.069          & 0.202          & 0.081           & 0.221           & 0.281          & 0.441          & 0.231          & 0.382          & 0.523          & 0.536          \\ \toprule
    \multirow{5}{*}{\rotatebox{90}{ETTm2}}       & 96  & 0.064            & {\bf\color{red} 0.180}          & {\bf\color{red} 0.060}             & \underline{\color{blue}0.182}             & \underline{\color{blue}0.063}           & 0.189          & 0.065           & 0.189           & 0.08           & 0.217          & 0.075          & 0.208          & 0.077          & 0.214          \\ 
                                & 192 & {\bf\color{red}0.099}            & {\bf\color{red}0.233}           & \underline{\color{blue}0.099}             & \underline{\color{blue}0.236}             & 0.110           & 0.252          & 0.118           & 0.256           & 0.112          & 0.259          & 0.129          & 0.275          & 0.138          & 0.290           \\
                                & 336 & {\bf\color{red}0.129}            & {\bf\color{red}0.273}           & \underline{\color{blue}0.129}             & \underline{\color{blue}0.275}             & 0.147           & 0.301          & 0.154           & 0.305           & 0.166          & 0.314          & 0.154          & 0.302          & 0.160          & 0.313          \\
                                & 720 & 0.180    & 0.329           & \underline{\color{blue}0.170}             & {\bf\color{red}0.317}             & 0.219           & 0.368          & 0.182           & 0.335           & 0.228      & 0.380           & {\bf\color{red}0.160}          & \underline{\color{blue}0.322}         & 0.168          & 0.334          \\ \cline{2-16}
    \rowcolor{cyan!15}  \cellcolor{white}& Avg & \underline{\color{blue}0.118}            & \underline{\color{blue}0.254}           & {\bf\color{red}0.115}             & {\bf\color{red}0.253}             & 0.135           & 0.278          & 0.130           & 0.271           & 0.147          & 0.293          & 0.130          & 0.277          & 0.136          & 0.288          \\ \toprule
    \multirow{5}{*}{\rotatebox{90}{Traffic}}     & 96  & {\bf\color{red}0.127}            & {\bf\color{red}0.202}           & \underline{\color{blue}0.143}             & \underline{\color{blue}0.222}             & 0.170           & 0.263          & 0.246           & 0.346           & 0.257          & 0.353          & 0.226          & 0.317          & 0.313          & 0.383          \\ 
                                & 192 & {\bf\color{red}0.135}            & {\bf\color{red}0.211}           & \underline{\color{blue}0.146}             & \underline{\color{blue}0.227}             & 0.173           & 0.265          & 0.266           & 0.37            & 0.299          & 0.376          & 0.314          & 0.408          & 0.386          & 0.453          \\
                                & 336 & {\bf\color{red}0.130}             & {\bf\color{red}0.215}           & \underline{\color{blue}0.147}             & \underline{\color{blue}0.231}             & 0.178           & 0.266          & 0.263           & 0.371           & 0.312          & 0.387          & 0.387          & 0.453          & 0.423          & 0.468          \\
                                & 720 & {\bf\color{red}0.135}            & {\bf\color{red}0.218}           & \underline{\color{blue}0.164}             & \underline{\color{blue}0.252}             & 0.187           & 0.286          & 0.269           & 0.372           & 0.366          & 0.436          & 0.437          & 0.491          & 0.378          & 0.433          \\ \cline{2-16}
     \rowcolor{cyan!15}  \cellcolor{white} & Avg & {\bf\color{red}0.132}            & {\bf\color{red}0.212}           & \underline{\color{blue}0.150}             & \underline{\color{blue}0.233}             & 0.177           & 0.270          & 0.261           & 0.365           & 0.309          & 0.388          & 0.341          & 0.417          & 0.375          & 0.434          \\ \toprule
    \multirow{5}{*}{\rotatebox{90}{ELC}} & 96  & \underline{\color{blue}0.249}            & \underline{\color{blue}0.358}           & {\bf\color{red}0.236}             & {\bf\color{red}0.349}             & 0.262           & 0.378          & 0.341           & 0.438           & 0.258          & 0.367          & 0.288          & 0.393          & 0.275          & 0.379          \\
                                & 192 & 0.286            & \underline{\color{blue}0.379}           & {\bf\color{red}0.277}             & {\bf\color{red}0.369}             & 0.316           & 0.410          & 0.345           & 0.428           & \underline{\color{blue}0.285}          & 0.388          & 0.432          & 0.483          & 0.304          & 0.402          \\
                                & 336 & 0.337           & \underline{\color{blue}0.413}           & {\bf\color{red}0.324}             & {\bf\color{red}0.400}               & 0.361           & 0.445          & 0.406           & 0.470          & \underline{\color{blue}0.336}          & 0.423          & 0.430          & 0.483          & 0.37           & 0.448          \\
                                & 720 & \underline{\color{blue}0.385}            & \underline{\color{blue}0.454}           & {\bf\color{red}0.353}             & {\bf\color{red}0.437}             & 0.448           & 0.501          & 0.565           & 0.581           & 0.607          & 0.599          & 0.491          & 0.531          & 0.46           & 0.511          \\ \cline{2-16}
    \rowcolor{cyan!15}  \cellcolor{white}& Avg & \underline{\color{blue}0.314}            & \underline{\color{blue}0.401}           & {\bf\color{red}0.298}             & {\bf\color{red}0.389}             & 0.347           & 0.434          & 0.414           & 0.479           & 0.372          & 0.444          & 0.410          & 0.473          & 0.352          & 0.435          \\ \toprule
    \multirow{5}{*}{\rotatebox{90}{Weather}}     & 96  & {\bf\color{red}0.0012}           & {\bf\color{red}0.0263}          & \underline{\color{blue}0.0015}            & \underline{\color{blue}0.0300}              & 0.0035          & 0.046          & 0.0110          & 0.081           & 0.004          & 0.044          & 0.0046         & 0.052          & 0.012          & 0.087          \\
                                & 192 & {\bf\color{red}0.0014}           & {\bf\color{red}0.0283}          & \underline{\color{blue}0.0015}            & \underline{\color{blue}0.0307}            & 0.0054          & 0.059          & 0.0075          & 0.067           & 0.002          & 0.040          & 0.006          & 0.060          & 0.0098         & 0.044          \\
                                & 336 & {\bf\color{red}0.0015}           & {\bf\color{red}0.0294}          & \underline{\color{blue}0.0017}            & \underline{\color{blue}0.0313}            & 0.008           & 0.072          & 0.0063          & 0.062           & 0.004          & 0.049          & 0.006          & 0.054          & 0.013          & 0.100          \\
                                & 720 & {\bf\color{red}0.002}            & {\bf\color{red}0.0333}         & \underline{\color{blue}0.0020}             & \underline{\color{blue}0.0348}            & 0.015           & 0.091          & 0.0085          & 0.070           & 0.003          & 0.042          & 0.007          & 0.059          & 0.011          & 0.083          \\ \cline{2-16}
     \rowcolor{cyan!15}  \cellcolor{white}& Avg & {\bf\color{red}0.0015}           & {\bf\color{red}0.0293}          & \underline{\color{blue}0.0017}            & \underline{\color{blue}0.0317}            & 0.008           & 0.067          & 0.0083          & 0.0700          & 0.0033         & 0.0438         & 0.0059         & 0.0563         & 0.0115         & 0.0785         \\ \toprule
    \multirow{5}{*}{\rotatebox{90}{\ \ Exchange}}    & 96  & \underline{\color{blue}0.096}            & \underline{\color{blue}0.226}           & {\bf\color{red}0.092}             & {\bf\color{red}0.226}             & 0.131           & 0.284          & 0.241           & 0.387           & 1.327          & 0.944          & 0.237          & 0.377          & 0.298          & 0.444          \\
                                & 192 & \underline{\color{blue}0.200}              & {\bf\color{red}0.332}           & {\bf\color{red}0.198}             &  \underline{\color{blue}0.341}             &  0.277           & 0.420          & 0.300           & 0.369           & 1.258          & 0.924          & 0.738          & 0.619          & 0.777          & 0.719          \\
                                & 336 & \underline{\color{blue}0.400}              & \underline{\color{blue}0.473}           & {\bf\color{red}0.370}             & {\bf\color{red}0.471}             & 0.426           & 0.511          & 0.509           & 0.524           & 2.179          & 1.296          & 2.018          & 1.0700         & 1.833          & 1.128          \\
                                & 720 & \underline{\color{blue}1.020}             & \underline{\color{blue}0.779}           & {\bf\color{red}0.753}             & {\bf\color{red}0.696}             & 1.162           & 0.832          & 1.260           & 0.867           & 1.28           & 0.953          & 2.405          & 1.175          & 1.203          & 0.956          \\ \cline{2-16}
    \rowcolor{cyan!15}  \cellcolor{white} & Avg & \underline{\color{blue}0.429}            & \underline{\color{blue}0.453}           & {\bf\color{red}0.353}             & {\bf\color{red}0.434}             & 0.499           & 0.512          & 0.578           & 0.537           & 1.511          & 1.029          & 1.350          & 0.810          & 1.028          & 0.812          \\ \toprule
    \multicolumn{2}{c}{ $1^{\text{st}}$ Count} & {\bf\color{red} 24}           & {\bf\color{red}27}           & \underline{\color{blue}14}                   & \underline{\color{blue}12}       & 0               & 1               & 0             & 0               & 0               & 0               & 2             & 0             & 0               & 0             \\
    \bottomrule
    \end{tabular}
  \begin{tablenotes}
    \item[*] ${\diamondsuit}$ denotes the maximum search range of the input length.
  \end{tablenotes}
  \end{threeparttable}
\end{table*}

\section{Comparative Analysis}
\label{app_cmp_analysis}

Notably, several pioneering models have also achieved competitive performance on certain datasets under particular settings.
For instance, Informer, considered a groundbreaking model in long-term TS forecasting, demonstrates advanced performance on the Solar-Energy dataset with input-96-predict-192 and -720 settings.
This is due to the substantial presence of zero values on each column attribute of the Solor-Energy dataset.
This renders the KL-divergence based ProbSparse Attention, as adopted in Informer, highly effective on this sparse dataset.
Additionally, linear-based methods (e.g., DLinear) have demonstrated promising results on the Weather dataset with input-336-predict-720 setting, while convolution-based methods (e.g., SCINet) have yielded favorable results on the PEMS dataset with input-168-predict-192 setting. 
This phenomenon can be ascribed to a twofold interplay of factors. Previously, the diversity of input settings exerts a direct influence on model generalization. Secondarily, other models exhibit a propensity to overfit non-stationary TS characterized by aperiodic fluctuations.
Remarkably, DeepBooTS adeptly mitigates both concept drift and underfitting challenges in multivariate TS forecasting, thereby enhancing its overall performance.
Particularly on datasets with numerous attributes, e.g., Traffic and Solor-Energy, DeepBooTS achieves superior performance by feeding the learned meaningful patterns to the output layer at each block.

\section{Interpretability of DeepBooTS}
\label{app_intp}

Fig. \ref{fig_vis_intp} illustrates that DeepBooTS learns a fully interpretable, hierarchical decomposition of the target series by assigning distinct temporal patterns to each block. In the three‐block, low‐dimensional setting, Block 1 captures the dominant seasonal cycle, Block 2 models the residual phase and amplitude errors, and Block 3 provides a constant bias correction; increasing the embedding dimension sharpens the periodic waveforms without altering this structure. When extended to five blocks at high dimensionality, Blocks 1 and 2 isolate the two strongest periodicities, Blocks 3 and 4 extract mid‐frequency fluctuations, and Block 5 again serves as a global baseline. Summing these block outputs reproduces the final forecast almost exactly, demonstrating DeepBooTS's ability to decompose complex series into interpretable, scale‐specific components and thereby facilitate both efficient learning and transparent model analysis.

\begin{figure*}[t]
  \centerline{\includegraphics[width=0.99\textwidth]{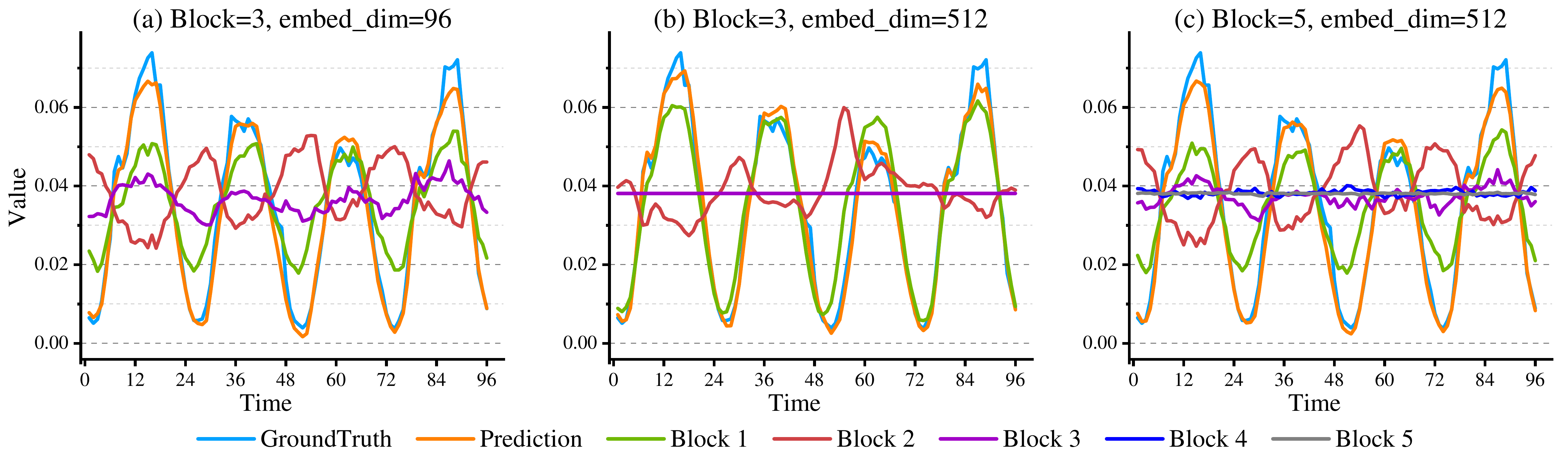}}
  \caption{The visualization of each block in DeepBooTS.} %
  \label{fig_vis_intp}
\end{figure*}

\section{Evaluation Metrics}
\label{secsub_metrics}

This paper uses a variety of metrics, including MSE (Mean Square Error), MAE (Mean Absolute Error), RMSP (Root Median Square Percent), MAPE (Mean Absolute Percentage Error), sMAPE (symmetric MAPE), MASE (Mean Absolute Scaled Error) and Quantile loss. 
All evaluation metrics adopted in this paper are as follows:

\begin{align}
  \text{MSE}(Y, \hat{Y}) &= \frac{1}{N} \sum_{i=1}^{N} (y_i - \hat{y}_i)^2 , \\
  \text{MAE}(Y, \hat{Y}) &= \frac{1}{N} \sum_{i=1}^{N} |y_i - \hat{y}_i| , \\
  \text{RMSP}(Y, \hat{Y}) &= \sqrt{ \text{Median} \left((\frac{y_i - \hat{y}_i}{y_i})^2 \right) } ,\\
  \text{MAPE}(Y, \hat{Y}) &= \frac{1}{N} \sum_{i=1}^{N}  \frac{|y_i - \hat{y}_i|}{|y_i|} ,\\
  \text{sMAPE}(Y, \hat{Y}) &= \frac{2}{N} \sum_{i=1}^{N} \frac{|y_i - \hat{y}_i|}{|y_i| + |\hat{y}_i|} ,\\
  \text{MASE}(Y, \hat{Y}) &= \frac{1}{N} \sum_{i=1}^{N} \frac{|y_i - \hat{y}_i|}{\frac{1}{N-m} \sum_{i=m+1}^{N} |y_i - y_{i-m}|} ,\\
  \text{QuantileLoss}(Y, \hat{Y}, q) &= \frac{1}{N} \sum_{i=1}^{N} \mathbb{I}_{\hat{y}_i \ge y_i} (1- q) |y_i - \hat{y}_i| + \mathbb{I}_{\hat{y}_i < y_i}q |y_i - \hat{y}_i|.
\end{align}

\section{Full TS forecasting on Monash's Repository}
\label{subsec_mts_monash}

The full results for mulvariate and univariate TS forecasting on 4 commonly used datasets and 7 Monash TS datasets are presented in Table \ref{tb_mts_monash_full}. 
Due to different characteristics of the dataset, the output lengths Y1, Y2, Y3 and Y4,  are slightly different. Specifically, for Traffic, Electricity, Solar, OiK\_Weather, Sunspot, Saugeenday and M4, the input length is 96, and the output lengths are 96, 192, 336, and 720, respectively. 
For ILI dataset, the input length is 36, and the output lengths are 24, 36, 48, and 60, respectively. 
For NN5 and Rideshare datasets, the input length is 48, and the output lengths are 12, 24, 36, and 48, respectively. 
It is evident that the proposed method outperforms benchmark approaches by achieving the highest count of leading terms across various prediction lengths.
This reaffirms the effectiveness of DeepBooTS.

\begin{table}[!ht]
  \centering
  \caption{Mulvariate and univariate forecasting results with diverse metrics on Monash TS datasets.}
  \label{tb_mts_monash_full}
  \resizebox{\textwidth}{!}
  {
   \Huge
  \begin{tabular}{c|c|cccc >{\columncolor{cyan!15}} c| >{\columncolor{green!10}} c| cccc >{\columncolor{cyan!15}} c| cccc >{\columncolor{cyan!15}} c| cccc >{\columncolor{cyan!15}} c| cccc >{\columncolor{cyan!15}} c}
    \toprule
  \multicolumn{2}{c|}{Mulvariate}              & \multicolumn{6}{c|}{DeepBooTS}    & \multicolumn{5}{c|}{iTransformer}             & \multicolumn{5}{c|}{DLinear}       & \multicolumn{5}{c|}{Autoformer}      & \multicolumn{5}{c}{Informer}       \\ \toprule
  Dataset                       & Metric & Y1    & Y2    & Y3    & Y4    & Avg   & IMP     & Y1    & Y2    & Y3    & Y4    & Avg   & Y1    & Y2    & Y3    & Y4    & Avg   & Y1     & Y2    & Y3    & Y4    & Avg   & Y1    & Y2    & Y3    & Y4    & Avg   \\ \toprule
  \multirow{8}{*}{\rotatebox{90}{Traffic}}      
                                & MSE    & 0.386 & 0.398 & 0.409 & 0.431 & {\bf\color{red}0.406} & 5.14\%  & 0.395 & 0.417 & 0.433 & 0.467 & \underline{\color{blue}0.428} & 0.650 & 0.598 & 0.605 & 0.645 & 0.625 & 0.613  & 0.616 & 0.622 & 0.660 & 0.628 & 0.719 & 0.696 & 0.777 & 0.864 & 0.764 \\
                                & MAE    & 0.258 & 0.263 & 0.270 & 0.287 & {\bf\color{red}0.270} & 4.26\%  & 0.268 & 0.276 & 0.283 & 0.302 & \underline{\color{blue}0.282} & 0.396 & 0.370 & 0.373 & 0.394 & 0.383 & 0.388  & 0.382 & 0.337 & 0.408 & 0.379 & 0.391 & 0.379 & 0.420 & 0.472 & 0.416 \\
                                & RMSP & 0.205 & 0.200 & 0.207 & 0.231 & {\bf\color{red}0.211} & 6.64\%  & 0.214 & 0.219 & 0.225 & 0.247 & \underline{\color{blue}0.226} & 0.350 & 0.317 & 0.322 & 0.356 & 0.336 & 0.437  & 0.409 & 0.384 & 0.432 & 0.416 & 0.423 & 0.416 & 0.522 & 0.715 & 0.519 \\
                                & MAPE   & 2.657 & 2.694 & 2.715 & 2.797 & {\bf\color{red}2.716} & 8.24\%  & 2.885 & 2.955 & 2.942 & 3.059 & \underline{\color{blue}2.960} & 5.018 & 4.477 & 4.290 & 4.196 & 4.495 & 4.377  & 4.198 & 4.254 & 4.288 & 4.279 & 4.760 & 4.677 & 6.115 & 7.016 & 5.642 \\
                                & sMAPE  & 0.479 & 0.476 & 0.485 & 0.514 & {\bf\color{red}0.489} & 2.59\%  & 0.485 & 0.494 & 0.501 & 0.529 & \underline{\color{blue}0.502} & 0.662 & 0.625 & 0.629 & 0.663 & 0.645 & 0.721  & 0.687 & 0.657 & 0.718 & 0.696 & 0.718 & 0.712 & 0.810 & 0.978 & 0.805 \\
                                & MASE   & 0.256 & 0.257 & 0.264 & 0.285 & {\bf\color{red}0.266} & 1.12\%  & 0.255 & 0.263 & 0.268 & 0.288 & \underline{\color{blue}0.269} & 0.397 & 0.365 & 0.366 & 0.390 & 0.380 & 0.405  & 0.390 & 0.364 & 0.401 & 0.390 & 0.415 & 0.406 & 0.468 & 0.595 & 0.471 \\
                                & Q25    & 0.255 & 0.264 & 0.268 & 0.281 & {\bf\color{red}0.267} & 4.64\%  & 0.271 & 0.279 & 0.279 & 0.290 & \underline{\color{blue}0.280} & 0.380 & 0.357 & 0.357 & 0.373 & 0.367 & 0.440  & 0.414 & 0.395 & 0.419 & 0.417 & 0.397 & 0.372 & 0.461 & 0.614 & 0.461 \\
                                & Q75    & 0.270 & 0.270 & 0.280 & 0.304 & {\bf\color{red}0.281} & 1.40\%  & 0.267 & 0.276 & 0.286 & 0.311 & \underline{\color{blue}0.285} & 0.412 & 0.383 & 0.390 & 0.417 & 0.401 & 0.391  & 0.387 & 0.374 & 0.416 & 0.392 & 0.456 & 0.465 & 0.536 & 0.621 & 0.520 \\ \toprule
  \multirow{8}{*}{\rotatebox{90}{Electricity}}
                                & MSE    & 0.143 & 0.162 & 0.179 & 0.204 & {\bf\color{red}0.172} & 3.37\%  & 0.148 & 0.162 & 0.178 & 0.225 & \underline{\color{blue}0.178} & 0.197 & 0.196 & 0.209 & 0.245 & 0.212 & 0.201  & 0.222 & 0.231 & 0.254 & 0.227 & 0.274 & 0.296 & 0.300 & 0.373 & 0.311 \\
                                & MAE    & 0.235 & 0.253 & 0.271 & 0.294 & {\bf\color{red}0.263} & 2.59\%  & 0.240 & 0.253 & 0.269 & 0.317 & \underline{\color{blue}0.270} & 0.282 & 0.285 & 0.301 & 0.333 & 0.300 & 0.317  & 0.334 & 0.338 & 0.361 & 0.338 & 0.368 & 0.386 & 0.394 & 0.439 & 0.397 \\
                                & RMSP & 0.215 & 0.230 & 0.253 & 0.287   & {\bf\color{red}0.246} & 2.38\%  & 0.223 & 0.239 & 0.256 & 0.290 & \underline{\color{blue}0.252} & 0.310 & 0.316 & 0.334 & 0.370 & 0.333 & 0.324  & 0.346 & 0.367 & 0.374 & 0.353 & 0.429 & 0.796 & 0.487 & 0.680 & 0.598 \\
                                & MAPE   & 2.296 & 2.431 & 2.699 & 3.194 & {\bf\color{red}2.655} & 2.10\%  & 2.493 & 2.732 & 2.713 & 2.910 & \underline{\color{blue}2.712} & 2.695 & 2.715 & 2.667 & 2.760 & 2.709 & 3.515  & 3.309 & 3.626 & 3.401 & 3.463 & 3.789 & 3.354 & 2.354 & 3.826 & 3.331 \\
                                & sMAPE  & 0.466 & 0.487 & 0.514 & 0.558 & {\bf\color{red}0.506} & 1.75\%  & 0.479 & 0.498 & 0.519 & 0.564 & \underline{\color{blue}0.515} & 0.598 & 0.601 & 0.621 & 0.660 & 0.620 & 0.584  & 0.619 & 0.639 & 0.642 & 0.621 & 0.711 & 1.148 & 0.786 & 0.988 & 0.908 \\
                                & MASE   & 0.257 & 0.276 & 0.293 & 0.321 & {\bf\color{red}0.287} & 3.37\%  & 0.267 & 0.284 & 0.298 & 0.340 & \underline{\color{blue}0.297} & 0.357 & 0.359 & 0.374 & 0.405 & 0.374 & 0.329  & 0.359 & 0.380 & 0.379 & 0.362 & 0.414 & 0.832 & 0.445 & 0.614 & 0.576 \\
                                & Q25    & 0.232 & 0.239 & 0.270 & 0.321 & 0.266 & -1.53\% & 0.232 & 0.255 & 0.269 & 0.290 & {\bf\color{red}0.262} & 0.314 & 0.317 & 0.327 & 0.356 & 0.329 & 0.349  & 0.342 & 0.378 & 0.384 & 0.363 & 0.406 & 0.665 & 0.421 & 0.590 & 0.521 \\
                                & Q75    & 0.230 & 0.254 & 0.261 & 0.276 & {\bf\color{red}0.255} & 7.27\%  & 0.250 & 0.258 & 0.272 & 0.318 & \underline{\color{blue}0.275} & 0.319 & 0.323 & 0.339 & 0.370 & 0.338 & 0.285  & 0.334 & 0.341 & 0.338 & 0.325 & 0.411 & 0.663 & 0.420 & 0.566 & 0.515 \\ \toprule
  \multirow{8}{*}{\rotatebox{90}{Solar}}
                                & MSE    & 0.192 & 0.230 & 0.243 & 0.243 & {\bf\color{red}0.227} & 2.58\%  & 0.203 & 0.233 & 0.248 & 0.249 & \underline{\color{blue}0.233} & 0.290 & 0.320 & 0.353 & 0.356 & 0.330 & 0.884  & 0.834 & 0.941 & 0.882 & 0.885 & 0.236 & 0.217 & 0.249 & 0.241 & 0.235 \\ 
                                & MAE    & 0.222 & 0.251 & 0.263 & 0.265 & {\bf\color{red}0.250} & 4.58\%  & 0.237 & 0.261 & 0.273 & 0.275 & \underline{\color{blue}0.262} & 0.378 & 0.398 & 0.415 & 0.413 & 0.401 & 0.711  & 0.692 & 0.723 & 0.717 & 0.711 & 0.259 & 0.269 & 0.283 & 0.317 & 0.280 \\ 
                                & RMSP & 0.061 & 0.060 & 0.072 & 0.090   & {\bf\color{red}0.071} & 19.32\% & 0.080 & 0.089 & 0.090 & 0.093 & \underline{\color{blue}0.088} & 0.363 & 0.374 & 0.389 & 0.385 & 0.378 & 0.449  & 0.563 & 0.694 & 0.868 & 0.644 & 0.059 & 0.075 & 0.105 & 0.150 & 0.097 \\
                                & MAPE   & 1.813 & 1.948 & 1.984 & 2.081 & {\bf\color{red}1.957} & 2.25\%  & 1.842 & 2.052 & 1.986 & 2.129 & \underline{\color{blue}2.002} & 2.225 & 2.378 & 2.558 & 2.633 & 2.449 & 2.349  & 2.403 & 2.509 & 2.042 & 2.326 & 2.200 & 2.471 & 2.775 & 2.531 & 2.494 \\
                                & sMAPE  & 0.381 & 0.426 & 0.437 & 0.443 & {\bf\color{red}0.422} & 32.15\% & 0.394 & 0.416 & 0.422 & 1.256 & \underline{\color{blue}0.622} & 0.688 & 0.714 & 0.727 & 0.714 & 0.711 & 0.767  & 0.861 & 1.062 & 1.260 & 0.988 & 0.380 & 0.425 & 0.465 & 0.517 & 0.447 \\
                                & MASE   & 0.286 & 0.379 & 0.377 & 0.374 & {\bf\color{red}0.354} & 28.19\% & 0.301 & 0.369 & 0.362 & 0.939 & \underline{\color{blue}0.493} & 0.436 & 0.511 & 0.519 & 0.507 & \underline{\color{blue}0.493} & 0.562  & 0.670 & 0.817 & 0.905 & 0.739 & 0.273 & 0.344 & 0.363 & 0.415 & 0.349 \\
                                & Q25    & 0.225 & 0.242 & 0.254 & 0.258 & {\bf\color{red}0.245} & 32.13\% & 0.223 & 0.255 & 0.250 & 0.716 & \underline{\color{blue}0.361} & 0.449 & 0.476 & 0.496 & 0.491 & 0.478 & 0.453  & 0.519 & 0.647 & 0.712 & 0.583 & 0.229 & 0.272 & 0.327 & 0.329 & 0.289 \\
                                & Q75    & 0.230 & 0.283 & 0.294 & 0.301 & {\bf\color{red}0.277} & 25.94\% & 0.250 & 0.253 & 0.277 & 0.715 & \underline{\color{blue}0.374} & 0.308 & 0.321 & 0.336 & 0.336 & 0.325 & 0.504  & 0.544 & 0.636 & 0.682 & 0.592 & 0.229 & 0.262 & 0.280 & 0.333 & 0.276 \\ \toprule
  \multirow{8}{*}{\rotatebox{90}{ILI}}
                                & MSE    & 2.065 & 1.917 & 1.966 & 2.114 & {\bf\color{red}2.016} & 10.88\% & 2.329 & 2.189 & 2.217 & 2.314 & \underline{\color{blue}2.262} & 3.052 & 2.804 & 2.829 & 2.973 & 2.915 & 3.410  & 3.365 & 3.125 & 2.847 & 3.187 & 5.421 & 5.001 & 5.098 & 5.312 & 5.208 \\ 
                                & MAE    & 0.835 & 0.852 & 0.854 & 0.900 & {\bf\color{red}0.860} & 10.23\% & 0.939 & 0.946 & 0.956 & 0.989 & \underline{\color{blue}0.958} & 1.245 & 1.153 & 1.161 & 1.193 & 1.188 & 1.296  & 1.252 & 1.200 & 1.146 & 1.224 & 1.606 & 1.549 & 1.564 & 1.584 & 1.576 \\ 
                                & RMSP & 0.341 & 0.371 & 0.371 & 0.386   & {\bf\color{red}0.367} & 11.57\% & 0.393 & 0.413 & 0.421 & 0.434 & \underline{\color{blue}0.415} & 0.608 & 0.550 & 0.561 & 0.576 & 0.574 & 0.643  & 0.590 & 0.581 & 0.570 & 0.596 & 0.827 & 0.791 & 0.778 & 0.773 & 0.792 \\
                                & MAPE   & 3.701 & 3.193 & 3.054 & 2.874 & 3.206 & -3.05\% & 3.638 & 3.169 & 2.937 & 2.701 & {\bf\color{red}3.111} & 4.334 & 2.953 & 2.662 & 2.504 & \underline{\color{blue}3.113} & 5.282  & 4.880 & 4.463 & 3.753 & 4.595 & 2.545 & 2.625 & 2.545 & 2.260 & 2.494 \\
                                & sMAPE  & 0.633 & 0.657 & 0.649 & 0.660 & {\bf\color{red}0.650} & 7.54\%  & 0.683 & 0.702 & 0.707 & 0.718 & \underline{\color{blue}0.703} & 0.903 & 0.840 & 0.849 & 0.866 & 0.865 & 0.957  & 0.897 & 0.899 & 0.882 & 0.909 & 1.203 & 1.175 & 1.172 & 1.181 & 1.183 \\
                                & MASE   & 0.557 & 0.520 & 0.549 & 0.668 & {\bf\color{red}0.574} & 11.42\% & 0.622 & 0.587 & 0.629 & 0.753 & \underline{\color{blue}0.648} & 0.860 & 0.715 & 0.726 & 0.861 & 0.791 & 0.777  & 0.718 & 0.716 & 0.747 & 0.740 & 0.932 & 0.856 & 0.861 & 0.980 & 0.907 \\
                                & Q25    & 0.821 & 0.761 & 0.742 & 0.740 & {\bf\color{red}0.766} & 5.20\%  & 0.873 & 0.798 & 0.789 & 0.771 & \underline{\color{blue}0.808} & 0.986 & 0.825 & 0.804 & 0.792 & 0.852 & 1.248  & 1.150 & 1.054 & 0.941 & 1.098 & 0.824 & 0.833 & 0.857 & 0.876 & 0.848 \\
                                & Q75    & 0.849 & 0.943 & 0.966 & 1.061 & {\bf\color{red}0.955} & 13.73\% & 1.005 & 1.094 & 1.123 & 1.207 & \underline{\color{blue}1.107} & 1.503 & 1.480 & 1.518 & 1.593 & 1.524 & 1.345  & 1.354 & 1.346 & 1.350 & 1.349 & 2.388 & 2.265 & 2.271 & 2.293 & 2.304 \\ \toprule
  \multirow{8}{*}{\rotatebox{90}{OiK\_Weather}} 
                                & MSE    & 0.631 & 0.686 & 0.706 & 0.733 & \underline{\color{blue}0.689} & 4.31\%  & 0.672 & 0.718 & 0.734 & 0.755 & 0.720 & 0.662 & 0.697 & 0.713 & 0.731 & 0.701 & 0.733  & 0.820 & 0.899 & 0.960 & 0.853 & 0.569 & 0.647 & 0.713 & 0.752 & {\bf\color{red}0.670} \\
                                & MAE    & 0.582 & 0.616 & 0.630 & 0.645 & {\bf\color{red}0.618} & 2.22\%  & 0.601 & 0.630 & 0.642 & 0.654 & 0.632 & 0.613 & 0.635 & 0.645 & 0.656 & 0.637 & 0.642  & 0.696 & 0.739 & 0.769 & 0.712 & 0.560 & 0.617 & 0.660 & 0.662 & \underline{\color{blue}0.625} \\
                                & RMSP & 0.712 & 0.770 & 0.790 & 0.820   & {\bf\color{red}0.773} & 1.02\%  & 0.727 & 0.776 & 0.800 & 0.820 & \underline{\color{blue}0.781} & 0.764 & 0.800 & 0.817 & 0.837 & 0.805 & 0.777  & 0.865 & 0.925 & 0.966 & 0.883 & 0.712 & 0.787 & 0.855 & 0.825 & 0.795 \\
                                & MAPE   & 4.005 & 4.934 & 5.270 & 4.378 & {\bf\color{red}4.647} & 41.81\% & 9.158 & 7.506 & 7.768 & 7.510 & 7.986 & 5.019 & 4.616 & 4.613 & 4.400 & 4.662 & 10.976 & 8.417 & 9.650 & 9.859 & 9.726 & 6.597 & 7.261 & 7.739 & 7.372 & 7.242 \\
                                & sMAPE  & 1.024 & 1.079 & 1.101 & 1.125 & {\bf\color{red}1.082} & 0.73\%  & 1.040 & 1.085 & 1.110 & 1.126 & \underline{\color{blue}1.090} & 1.133 & 1.178 & 1.200 & 1.229 & 1.185 & 1.087  & 1.175 & 1.243 & 1.299 & 1.201 & 1.035 & 1.123 & 1.201 & 1.143 & 1.126 \\
                                & MASE   & 0.897 & 0.888 & 0.889 & 0.876 & \underline{\color{blue}0.888} & 5.03\%  & 0.947 & 0.921 & 0.939 & 0.931 & 0.935 & 0.966 & 0.944 & 0.935 & 0.925 & 0.943 & 0.878  & 0.943 & 1.042 & 1.103 & 0.992 & 0.781 & 0.814 & 0.868 & 0.807 & {\bf\color{red}0.818} \\
                                & Q25    & 0.563 & 0.605 & 0.629 & 0.638 & {\bf\color{red}0.609} & 3.03\%  & 0.602 & 0.624 & 0.633 & 0.652 & 0.628 & 0.609 & 0.631 & 0.641 & 0.652 & 0.633 & 0.668  & 0.702 & 0.739 & 0.768 & 0.719 & 0.547 & 0.608 & 0.642 & 0.645 & \underline{\color{blue}0.611} \\
                                & Q75    & 0.600 & 0.628 & 0.631 & 0.653 & {\bf\color{red}0.628} & 1.26\%  & 0.600 & 0.636 & 0.651 & 0.655 & \underline{\color{blue}0.636} & 0.616 & 0.639 & 0.649 & 0.661 & 0.641 & 0.617  & 0.690 & 0.739 & 0.771 & 0.704 & 0.574 & 0.626 & 0.679 & 0.680 & 0.640 \\ \toprule
  \multirow{8}{*}{\rotatebox{90}{NN5}}
                                & MSE    & 0.798 & 0.737 & 0.680 & 0.659 & {\bf\color{red}0.719} & 0.55\%  & 0.793 & 0.745 & 0.685 & 0.668 &  \underline{\color{blue}0.723} & 1.474 & 1.484 & 1.406 & 1.429 & 1.448 & 0.891  & 0.915 & 0.739 & 0.859 & 0.851 & 0.969 & 0.983 & 0.939 & 0.977 & 0.967 \\
                                & MAE    & 0.590 & 0.582 & 0.570 & 0.568 & {\bf\color{red}0.578} & 0.86\%  & 0.593 & 0.589 & 0.574 & 0.574 &  \underline{\color{blue}0.583} & 0.929 & 0.943 & 0.916 & 0.927 & 0.929 & 0.659  & 0.683 & 0.617 & 0.678 & 0.659 & 0.714 & 0.734 & 0.725 & 0.733 & 0.727 \\
                                & RMSP & 0.526 & 0.539 & 0.545 & 0.554   & {\bf\color{red}0.541} & 2.35\%  & 0.536 & 0.554 & 0.557 & 0.570 &  \underline{\color{blue}0.554} & 0.942 & 0.988 & 0.964 & 0.980 & 0.969 & 0.614  & 0.653 & 0.606 & 0.669 & 0.636 & 0.702 & 0.750 & 0.749 & 0.753 & 0.739 \\
                                & MAPE   & 2.332 & 2.563 & 2.426 & 2.431 & {\bf\color{red}2.438} & 5.28\%  & 2.491 & 2.640 & 2.560 & 2.605 &  \underline{\color{blue}2.574} & 3.845 & 3.785 & 3.848 & 3.763 & 3.810 & 2.700  & 2.936 & 2.642 & 2.760 & 2.760 & 2.698 & 2.893 & 2.823 & 3.006 & 2.855 \\
                                & sMAPE  & 0.848 & 0.862 & 0.863 & 0.871 & {\bf\color{red}0.861} & 1.82\%  & 0.860 & 0.877 & 0.878 & 0.892 &  \underline{\color{blue}0.877} & 1.276 & 1.326 & 1.296 & 1.316 & 1.304 & 0.954  & 0.992 & 0.941 & 1.014 & 0.975 & 1.046 & 1.092 & 1.096 & 1.088 & 1.081 \\
                                & MASE   & 0.528 & 0.541 & 0.530 & 0.519 & {\bf\color{red}0.530} & 2.21\%  & 0.538 & 0.553 & 0.543 & 0.534 &  \underline{\color{blue}0.542} & 0.820 & 0.856 & 0.830 & 0.833 & 0.835 & 0.589  & 0.623 & 0.572 & 0.628 & 0.603 & 0.626 & 0.685 & 0.670 & 0.667 & 0.662 \\
                                & Q25    & 0.598 & 0.607 & 0.594 & 0.587 & {\bf\color{red}0.597} & 0.67\%  & 0.601 & 0.611 & 0.597 & 0.593 &  \underline{\color{blue}0.601} & 0.876 & 0.897 & 0.867 & 0.880 & 0.880 & 0.679  & 0.708 & 0.644 & 0.698 & 0.682 & 0.730 & 0.730 & 0.748 & 0.717 & 0.731 \\
                                & Q75    & 0.582 & 0.558 & 0.546 & 0.550 & {\bf\color{red}0.559} & 1.06\%  & 0.585 & 0.568 & 0.550 & 0.555 &  \underline{\color{blue}0.565} & 0.982 & 0.989 & 0.966 & 0.974 & 0.978 & 0.640  & 0.659 & 0.590 & 0.659 & 0.637 & 0.698 & 0.739 & 0.702 & 0.748 & 0.722 \\ \toprule
  \multirow{7}{*}{\rotatebox{90}{Rideshare}}
                                & MSE    & 0.225 & 0.261 & 0.467 & 0.504 & {\bf\color{red}0.364} & 26.61\% & 0.247 & 0.273 & 0.580 & 0.885 &  \underline{\color{blue}0.496} & 0.933 & 1.148 & 1.063 & 0.967 & 1.028 & 0.339  & 0.514 & 0.791 & 0.793 & 0.609 & 0.846 & 0.435 & 0.537 & 0.628 & 0.612 \\
                                & MAE    & 0.273 & 0.296 & 0.412 & 0.445 & {\bf\color{red}0.357} & 13.77\% & 0.291 & 0.300 & 0.473 & 0.591 &  \underline{\color{blue}0.414} & 0.771 & 0.864 & 0.824 & 0.801 & 0.815 & 0.416  & 0.528 & 0.673 & 0.681 & 0.575 & 0.628 & 0.399 & 0.472 & 0.524 & 0.506 \\
                                & RMSP & 0.177 & 0.189 & 0.256 & 0.310   & {\bf\color{red}0.233} & 11.74\% & 0.187 & 0.186 & 0.312 & 0.369 &  \underline{\color{blue}0.264} & 0.777 & 0.874 & 0.866 & 0.842 & 0.840 & 0.401  & 0.503 & 0.632 & 0.638 & 0.544 & 0.494 & 0.248 & 0.332 & 0.384 & 0.365 \\
                                & sMAPE  & 0.388 & 0.406 & 0.589 & 0.652 & {\bf\color{red}0.509} & 11.48\% & 0.420 & 0.423 & 0.675 & 0.781 &  \underline{\color{blue}0.575} & 1.220 & 1.314 & 1.288 & 1.309 & 1.283 & 0.603  & 0.817 & 1.022 & 1.038 & 0.870 & 0.865 & 0.554 & 0.657 & 0.721 & 0.699 \\
                                & MASE   & 1.401 & 1.294 & 1.218 & 1.275 & 1.297 & -3.10\% & 1.342 & 1.299 & 1.183 & 1.208 & 1.258 & 1.025 & 1.030 & 1.047 & 1.094 & \underline{\color{blue}1.049} & 1.074  & 1.060 & 1.066 & 1.128 & 1.082 & 1.027 & 1.052 & 1.073 & 1.137 & \underline{\color{blue}1.072} \\
                                & Q25    & 0.202 & 0.230 & 0.247 & 0.258 & {\bf\color{red}0.234} & 8.24\%  & 0.217 & 0.211 & 0.272 & 0.320 & \underline{\color{blue}0.255} & 0.417 & 0.459 & 0.437 & 0.421 & 0.434 & 0.330  & 0.300 & 0.355 & 0.351 & 0.334 & 0.355 & 0.257 & 0.285 & 0.318 & 0.304 \\
                                & Q75    & 0.344 & 0.362 & 0.578 & 0.633 & {\bf\color{red}0.479} & 16.40\% & 0.365 & 0.390 & 0.675 & 0.863 & \underline{\color{blue}0.573} & 1.126 & 1.270 & 1.211 & 1.182 & 1.197 & 0.503  & 0.757 & 0.992 & 1.011 & 0.816 & 0.902 & 0.541 & 0.658 & 0.730 & 0.708 \\ \bottomrule
  \multicolumn{2}{c|}{Univariate}              & \multicolumn{6}{c|}{DeepBooTS}    & \multicolumn{5}{c|}{iTransformer}             & \multicolumn{5}{c|}{N-Beats}       & \multicolumn{5}{c|}{N-Hits}      & \multicolumn{5}{c}{Autoformer}       \\ \toprule
  Dataset                       & Metric & Y1    & Y2    & Y3    & Y4    & Avg   & IMP     & Y1    & Y2    & Y3    & Y4    & Avg   & Y1    & Y2    & Y3    & Y4    & Avg   & Y1     & Y2    & Y3    & Y4    & Avg   & Y1    & Y2    & Y3    & Y4    & Avg   \\ \toprule  
  \multirow{8}{*}{\rotatebox{90}{M4 Hourly}}         
                                & MSE    & 0.177 & 0.203 & 0.242 & 0.270  & {\bf\color{red}0.223} & 28.06\% & 0.277 & 0.290 & 0.326 & 0.345  & \underline{\color{blue}0.310} & {0.326}  & 0.335  & 0.342  & 0.345 & 0.337 & {0.300}  & 0.342  & 0.372  & 0.37   & 0.346 & {0.598}  & 0.653  & 0.587  & 0.703   & 0.635 \\ \toprule
                                & MAE    & 0.245 & 0.270 & 0.304 & 0.328  & {\bf\color{red}0.287} & 26.41\% & 0.369 & 0.373 & 0.401 & 0.415  & \underline{\color{blue}0.390} & {0.412}  & 0.423  & 0.427  & 0.431 & 0.423 & {0.39}   & 0.421  & 0.433  & 0.428  & 0.418 & {0.583}  & 0.621  & 0.589  & 0.649   & 0.611 \\
                                & RMDSPE & 0.182 & 0.215 & 0.248 & 0.275  & {\bf\color{red}0.230} & 38.83\% & 0.354 & 0.354 & 0.389 & 0.405  & \underline{\color{blue}0.376} & {0.375}  & 0.394  & 0.396  & 0.403 & 0.392 & {0.367}  & 0.41   & 0.418  & 0.414  & 0.402 & {0.576}  & 0.635  & 0.592  & 0.651   & 0.614 \\
                                & MAPE   & 1.364 & 1.478 & 1.688 & 1.819  & {\bf\color{red}1.587} & 27.43\% & 2.108 & 2.109 & 2.224 & 2.305  & \underline{\color{blue}2.187} & {2.532}  & 2.576  & 2.510  & 2.614 & 2.558 & {2.344}  & 2.523  & 2.569  & 2.531  & 2.492 & {3.830}  & 4.019  & 3.701  & 4.369   & 3.980 \\
                                & sMAPE  & 0.445 & 0.488 & 0.534 & 0.569  & {\bf\color{red}0.509} & 22.41\% & 0.631 & 0.636 & 0.669 & 0.687  & \underline{\color{blue}0.656} & {0.680}  & 0.701  & 0.707  & 0.711 & 0.700 & {0.644}  & 0.689  & 0.708  & 0.705  & 0.687 & {0.853}  & 0.896  & 0.865  & 0.909   & 0.881 \\
                                & MASE   & 0.226 & 0.265 & 0.292 & 0.322  & {\bf\color{red}0.276} & 24.59\% & 0.339 & 0.354 & 0.376 & 0.395  & \underline{\color{blue}0.366} & {0.365}  & 0.396  & 0.399  & 0.398 & 0.390 & {0.359}  & 0.395  & 0.415  & 0.415  & 0.396 & {0.481}  & 0.522  & 0.521  & 0.546   & 0.518 \\
                                & Q25    & 0.230 & 0.249 & 0.276 & 0.294  & {\bf\color{red}0.262} & 29.00\% & 0.353 & 0.354 & 0.379 & 0.389  & 0.369 & {0.344}  & 0.351  & 0.358  & 0.360 & 0.353 & {0.38}   & 0.4    & 0.407  & 0.394  & 0.395 & {0.460}  & 0.451  & 0.449  & 0.563   & 0.481 \\
                                & Q75    & 0.259 & 0.291 & 0.332 & 0.362  & {\bf\color{red}0.311} & 24.15\% & 0.384 & 0.392 & 0.424 & 0.441  & \underline{\color{blue}0.410} & {0.479}  & 0.496  & 0.496  & 0.502 & 0.493 & {0.401}  & 0.442  & 0.46   & 0.463  & 0.442 & {0.707}  & 0.792  & 0.729  & 0.735   & 0.741 \\ \toprule
  \multirow{8}{*}{\rotatebox{90}{Us\_births}} 
                                & MSE    & 0.316 & 0.401 & 0.334 & 0.405  & {\bf\color{red}0.364} & 3.70\%  & 0.353 & 0.437 & 0.354 & 0.367  & \underline{\color{blue}0.378} & {0.562}  & 0.705  & 0.608  & 0.804 & 0.670 & {0.371}  & 0.528  & 0.448  & 0.806  & 0.538 & {0.545}  & 0.812  & 0.984  & 1.264   & 0.901 \\
                                & MAE    & 0.391 & 0.459 & 0.412 & 0.462  & {\bf\color{red}0.431} & 1.82\%  & 0.413 & 0.473 & 0.434 & 0.436  & \underline{\color{blue}0.439} & {0.586}  & 0.653  & 0.599  & 0.715 & 0.638 & {0.467}  & 0.579  & 0.529  & 0.737  & 0.578 & {0.583}  & 0.709  & 0.793  & 0.943   & 0.757 \\
                                & RMDSPE & 0.217 & 0.267 & 0.235 & 0.271  & {\bf\color{red}0.248} & 1.20\%  & 0.228 & 0.268 & 0.252 & 0.256  & \underline{\color{blue}0.251} & {0.376}  & 0.414  & 0.377  & 0.469 & 0.409 & {0.283}  & 0.363  & 0.331  & 0.477  & 0.364 & {0.356}  & 0.417  & 0.458  & 0.558   & 0.447 \\
                                & MAPE   & 0.754 & 0.765 & 0.709 & 0.784  & {\bf\color{red}0.753} & 3.71\%  & 0.776 & 0.817 & 0.813 & 0.720  & \underline{\color{blue}0.782} & {0.974}  & 1.121  & 0.932  & 1.037 & 1.016 & {0.851}  & 1.024  & 0.879  & 1.201  & 0.989 & {0.844}  & 0.968  & 1.235  & 1.447   & 1.124 \\
                                & sMAPE  & 0.403 & 0.462 & 0.416 & 0.497  & \underline{\color{blue}0.445} & -1.60\% & 0.409 & 0.466 & 0.420 & 0.458  & {\bf\color{red}0.438} & {0.585}  & 0.628  & 0.590  & 0.698 & 0.625 & {0.447}  & 0.536  & 0.492  & 0.684  & 0.540 & {0.634}  & 0.762  & 0.819  & 0.940   & 0.789 \\
                                & MASE   & 0.328 & 0.393 & 0.351 & 0.381  & {\bf\color{red}0.363} & 0.55\%  & 0.336 & 0.380 & 0.376 & 0.367  & \underline{\color{blue}0.365} & {0.504}  & 0.560  & 0.513  & 0.605 & 0.546 & {0.390}  & 0.486  & 0.438  & 0.542  & 0.464 & {0.546}  & 0.676  & 0.768  & 0.956   & 0.737 \\
                                & Q25    & 0.462 & 0.517 & 0.438 & 0.590  & 0.502 & -2.45\% & 0.497 & 0.559 & 0.395 & 0.510  & 0.490 & {0.440}  & 0.459  & 0.421  & 0.450 & \underline{\color{blue}0.443} & {0.309}  & 0.356  & 0.332  & 0.419  & {\bf\color{red}0.354} & {0.587}  & 0.676  & 0.753  & 0.891   & 0.727 \\
                                & Q75    & 0.321 & 0.401 & 0.387 & 0.335  & {\bf\color{red}0.361} & 6.96\%  & 0.329 & 0.386 & 0.472 & 0.363  & \underline{\color{blue}0.388} & {0.732}  & 0.846  & 0.777  & 0.979 & 0.834 & {0.624}  & 0.801  & 0.727  & 1.056  & 0.802 & {0.580}  & 0.743  & 0.833  & 0.995   & 0.788 \\ \toprule
  \multirow{8}{*}{\rotatebox{90}{Sunspot}}    
                                & MSE    & 1.185 & 1.265 & 1.170 & 1.155  & 1.194 & 1.00\%  & 1.207 & 1.286 & 1.175 & 1.156  & 1.206 & {1.044}  & 1.049  & 1.009  & 1.010 & \underline{\color{blue}1.028} & {0.990}  & 1.004  & 0.971  & 0.961  & {\bf\color{red}0.982} & {1.220}  & 1.290  & 1.224  & 1.275   & 1.252 \\
                                & MAE    & 0.535 & 0.564 & 0.542 & 0.536  & 0.544 & 1.45\%  & 0.542 & 0.572 & 0.548 & 0.545  & 0.552 & {0.539}  & 0.550  & 0.541  & 0.540 & \underline{\color{blue}0.543} & {0.544}  & 0.532  & 0.526  & 0.525  & {\bf\color{red}0.532} & {0.607}  & 0.660  & 0.640  & 0.652   & 0.640 \\
                                & RMDSPE & 0.805 & 0.838 & 0.798 & 0.789  & {\bf\color{red}0.808} & 2.88\%  & 0.810 & 0.868 & 0.822 & 0.828  & \underline{\color{blue}0.832} & {0.960}  & 0.984  & 0.954  & 0.949 & 0.962 & {0.880}  & 0.887  & 0.823  & 0.870  & 0.865 & {1.024}  & 1.087  & 1.046  & 1.073   & 1.058 \\
                                & MAPE   & 2.725 & 2.716 & 2.818 & 2.844  & 2.776 & 2.87\%  & 2.817 & 2.824 & 2.886 & 2.906  & 2.858 & {1.896}  & 1.941  & 2.090  & 2.142 & {\bf\color{red}2.017} & {2.770}  & 2.315  & 2.729  & 2.575  & \underline{\color{blue}2.597} & {2.487}  & 2.928  & 2.775  & 3.033   & 2.806 \\
                                & sMAPE  & 1.044 & 1.059 & 1.056 & 1.051  & {\bf\color{red}1.053} & 1.22\%  & 1.028 & 1.083 & 1.070 & 1.083  & \underline{\color{blue}1.066} & {1.406}  & 1.419  & 1.369  & 1.352 & 1.387 & {1.186}  & 1.214  & 1.139  & 1.216  & 1.189 & {1.349}  & 1.416  & 1.396  & 1.317   & 1.370 \\
                                & MASE   & 1.413 & 1.315 & 1.134 & 1.065  & 1.232 & 2.07\%  & 1.518 & 1.275 & 1.133 & 1.107  & 1.258 & {1.069}  & 0.925  & 0.891  & 0.885 & \underline{\color{blue}0.943} & {0.926}  & 0.863  & 0.811  & 0.830  & {\bf\color{red}0.858} & {1.022}  & 1.014  & 1.060  & 1.130   & 1.057 \\
                                & Q25    & 0.391 & 0.397 & 0.416 & 0.416  & {\bf\color{red}0.405} & 2.64\%  & 0.391 & 0.406 & 0.426 & 0.442  & \underline{\color{blue}0.416} & {0.509}  & 0.514  & 0.514  & 0.518 & 0.514 & {0.523}  & 0.486  & 0.488  & 0.512  & 0.502 & {0.545}  & 0.616  & 0.619  & 0.641   & 0.605 \\
                                & Q75    & 0.669 & 0.701 & 0.670 & 0.657  & 0.674 & 1.89\%  & 0.692 & 0.738 & 0.670 & 0.648  & 0.687 & {0.569}  & 0.586  & 0.568  & 0.563 & \underline{\color{blue}0.572} & {0.566}  & 0.578  & 0.565  & 0.538  & {\bf\color{red}0.562} & {0.669}  & 0.704  & 0.660  & 0.664   & 0.674 \\ \toprule
  \multirow{8}{*}{\rotatebox{90}{Saugeenday}} 
                                & MSE    & 0.276 & 0.31  & 0.362 & 0.56   & 0.377 & 1.31\%  & 0.279 & 0.312 & 0.366 & 0.569  & 0.382 & {0.276}  & 0.308  & 0.357  & 0.512 & {\bf\color{red}0.363} & {0.282}  & 0.312  & 0.362  & 0.505  & \underline{\color{blue}0.365} & {0.296}  & 0.324  & 0.373  & 0.607   & 0.400 \\
                                & MAE    & 0.371 & 0.397 & 0.431 & 0.543  & {\bf\color{red}0.436} & 0.91\%  & 0.374 & 0.399 & 0.436 & 0.551  & \underline{\color{blue}0.440}  & {0.378}  & 0.405  & 0.444  & 0.552 & 0.445 & {0.384}  & 0.401  & 0.449  & 0.534  & 0.442 & {0.393}  & 0.412  & 0.442  & 0.584   & 0.458 \\
                                & RMDSPE & 0.367 & 0.398 & 0.439 & 0.575  & {\bf\color{red}0.445} & 1.11\%  & 0.369 & 0.4   & 0.444 & 0.587  & \underline{\color{blue}0.450}  & {0.373}  & 0.406  & 0.458  & 0.657 & 0.474 & {0.382}  & 0.404  & 0.456  & 0.608  & 0.463 & {0.392}  & 0.413  & 0.453  & 0.636   & 0.474 \\
                                & MAPE   & 71.15 & 78.47 & 86.28 & 100.96 & 84.21 & 1.36\%  & 71.77 & 78.79 & 87.96 & 102.97 & 85.37 & {67.265} & 71.937 & 77.889 & 74.39 & {\bf\color{red}72.87} & {63.643} & 70.235 & 81.053 & 77.865 & \underline{\color{blue}73.20}  & {72.197} & 78.669 & 87.303 & 107.281 & 86.36 \\
                                & sMAPE  & 0.671 & 0.709 & 0.752 & 0.892  & {\bf\color{red}0.756} & 0.66\%  & 0.674 & 0.711 & 0.758 & 0.901  & \underline{\color{blue}0.761} & {0.687}  & 0.732  & 0.793  & 1.003 & 0.804 & {0.701}  & 0.724  & 0.791  & 0.949  & 0.791 & {0.703}  & 0.73   & 0.768  & 0.944   & 0.786 \\
                                & MASE   & 1.211 & 1.234 & 1.323 & 1.688  & 1.364 & 1.23\%  & 1.229 & 1.293 & 1.34  & 1.66   & 1.381 & {1.199}  & 1.235  & 1.307  & 1.357 & \underline{\color{blue}1.275} & {1.282}  & 1.268  & 1.367  & 1.434  & 1.338 & {1.092}  & 1.143  & 1.229  & 1.556   & {\bf\color{red}1.255} \\
                                & Q25    & 0.369 & 0.397 & 0.425 & 0.536  & {\bf\color{red}0.432} & 2.92\%  & 0.375 & 0.401 & 0.443 & 0.561  & \underline{\color{blue}0.445} & {0.387}  & 0.42   & 0.465  & 0.584 & 0.464 & {0.394}  & 0.393  & 0.49   & 0.537  & 0.454 & {0.406}  & 0.431  & 0.447  & 0.639   & 0.481 \\
                                & Q75    & 0.374 & 0.393 & 0.435 & 0.535  & 0.434 & 0.23\%  & 0.373 & 0.397 & 0.429 & 0.542  & 0.435 & {0.368}  & 0.391  & 0.422  & 0.519 & {\bf\color{red}0.425} & {0.373}  & 0.408  & 0.409  & 0.53   & \underline{\color{blue}0.430} & {0.38}   & 0.394  & 0.437  & 0.53    & 0.435 \\ \bottomrule
  \end{tabular}
  }
\end{table}

\clearpage
\section{Visualization of TS forecasting}
\label{app_vis_tsf}

For clarity and comparison among different models, we present supplementary prediction showcases for three representative datasets in Fig. \ref{fig_traffic}, \ref{fig_elc}, and \ref{fig_weather}. 
Visualization of different models for qualitative comparisons. Prediction cases from the Traffic, Electricity and Weather datasets.
These showcases correspond to predictions made by the following models: iTransformer \cite{liu2023itransformer}  and PatchTST \cite{nie2022time}, DLinear \cite{zeng2023transformers}, Autoformer \cite{wu2021autoformer} and Informer \cite{Zhou2021Informer}.
Among the various models considered, the proposed DeepBooTS stands out for its ability to predict future series variations with exceptional precision, demonstrating superior performance.

\begin{figure*}[!ht]
  \centering
  \vspace{0.2em}
  \centerline{\includegraphics[width=\textwidth]{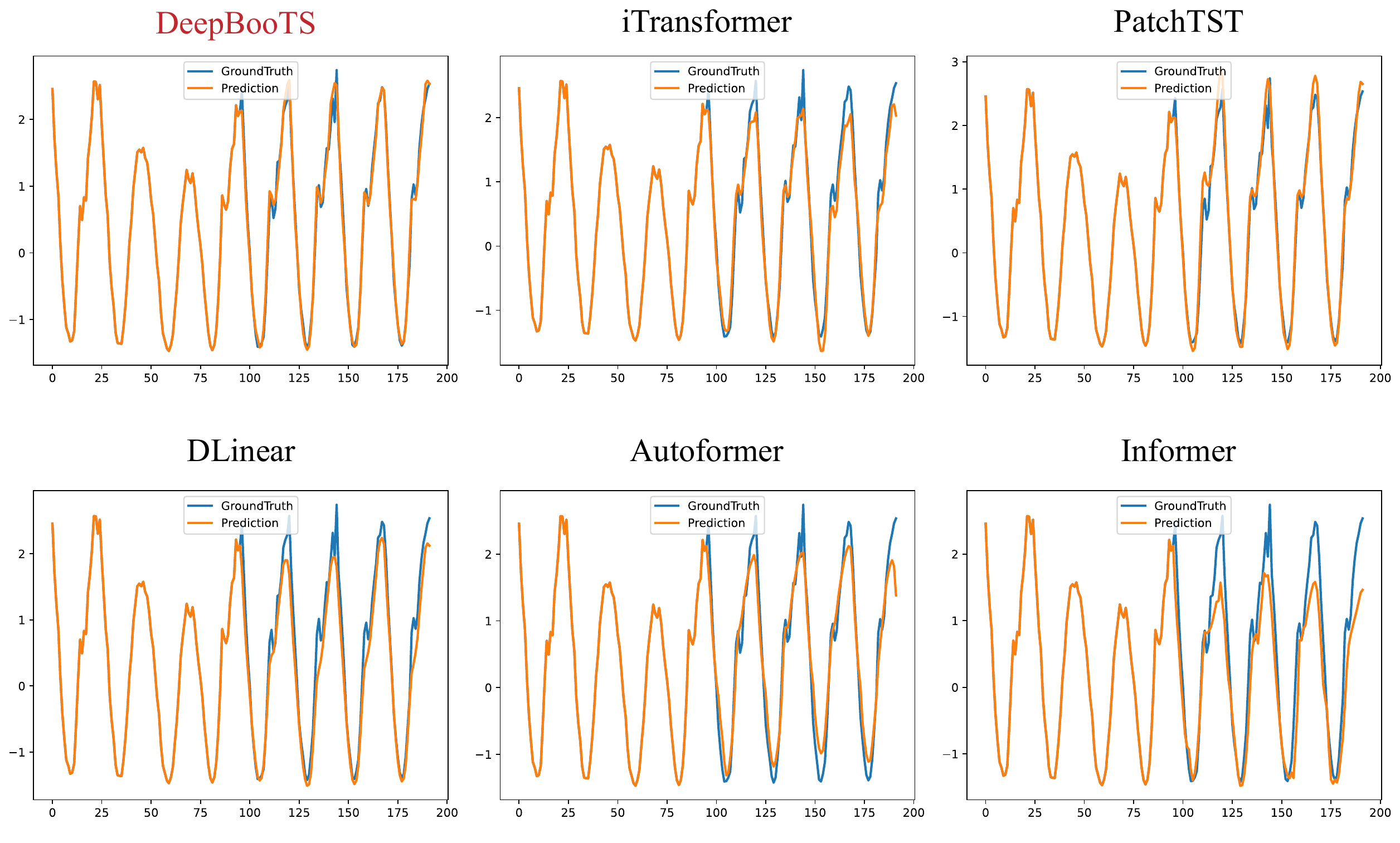}}
  \caption{Prediction cases from the Traffic dataset under the input-96-predict-96 setting.}
  \label{fig_traffic} 
\end{figure*}

\begin{figure*}[!ht]
  \centering
  \centerline{\includegraphics[width=0.95\textwidth]{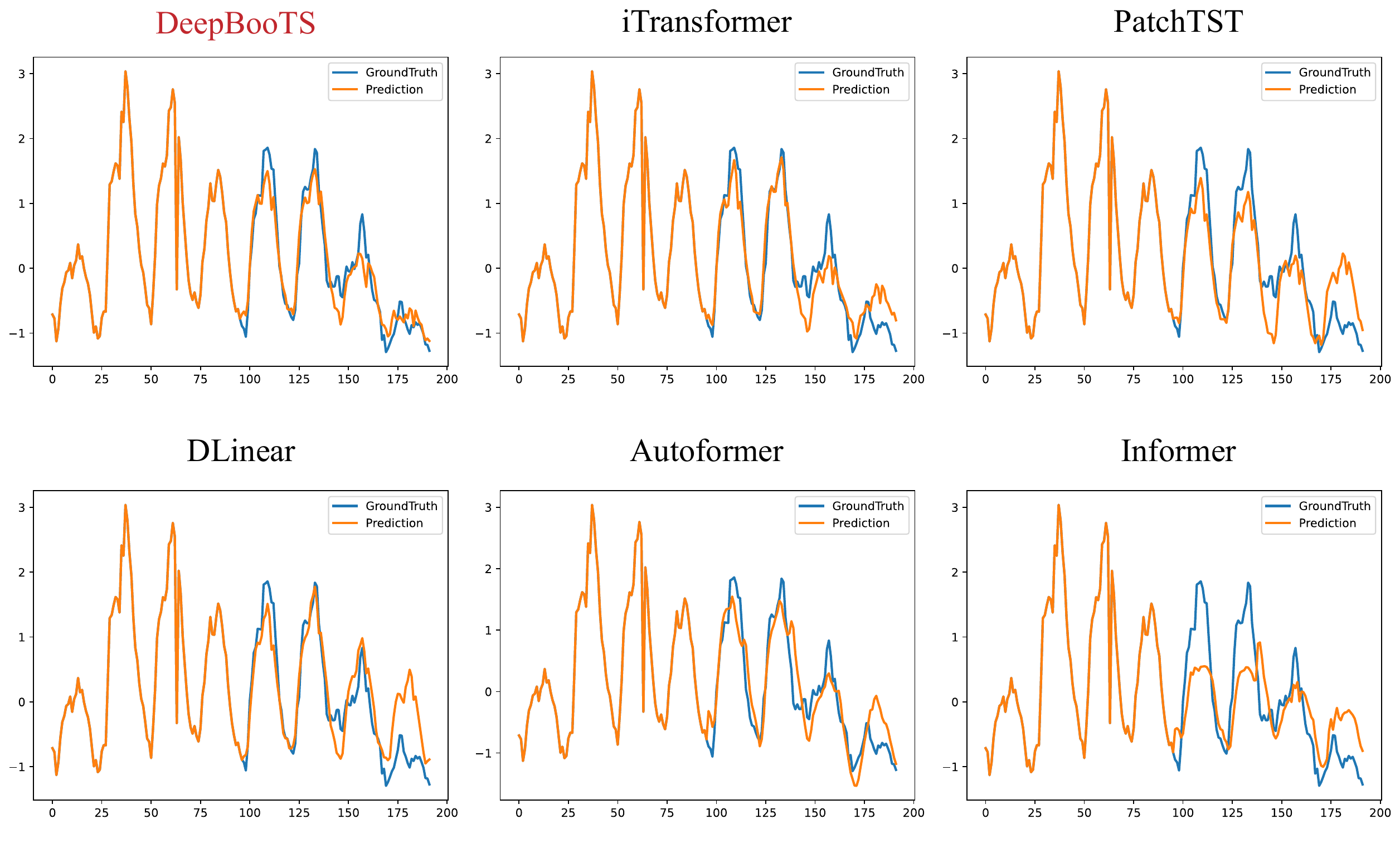}}
  \caption{Prediction cases from the Electricity dataset under the input-96-predict-96 setting.}
  \label{fig_elc} 
\end{figure*}

\begin{figure*}[!ht]
  \centering
  \centerline{\includegraphics[width=0.95\textwidth]{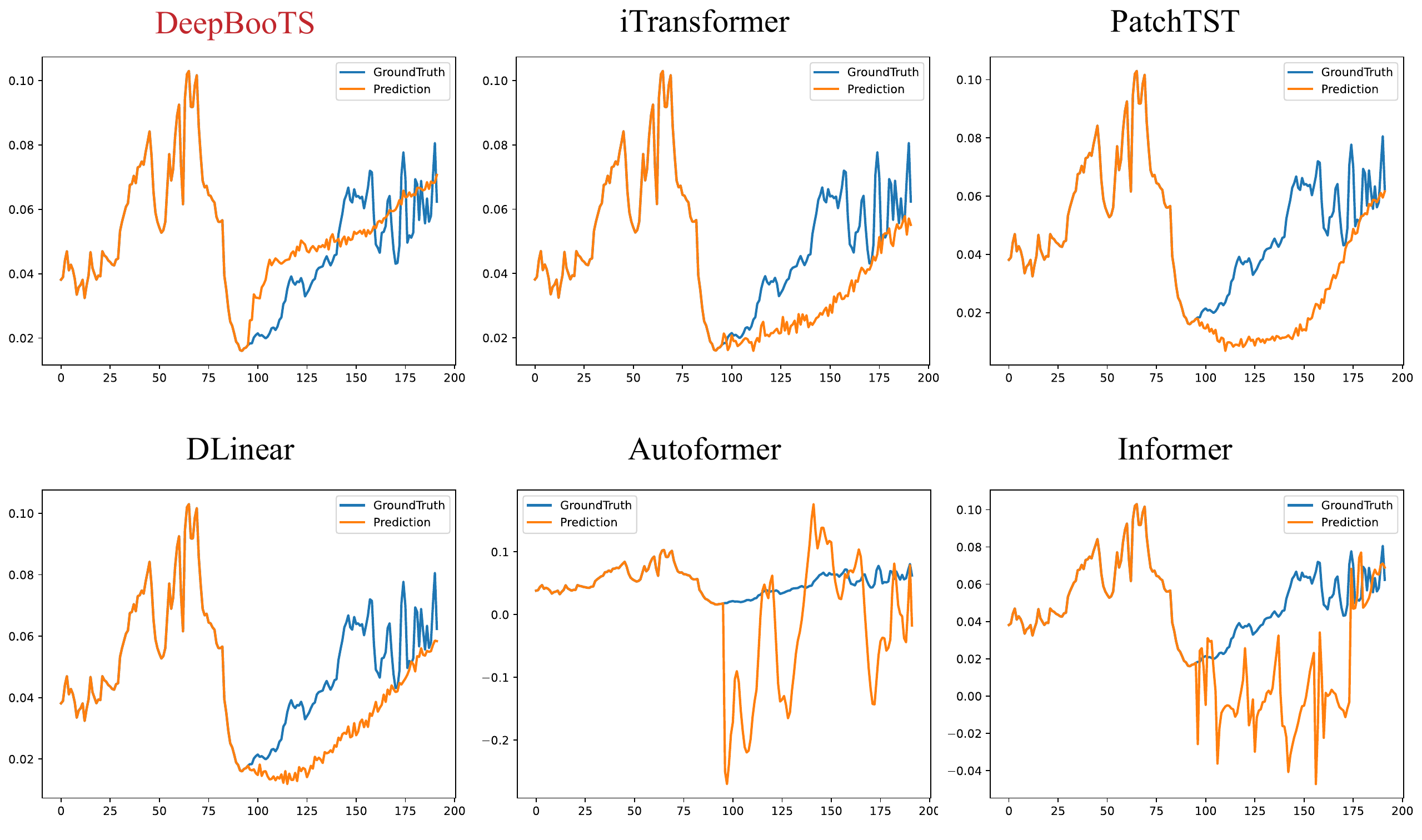}}
  \caption{Prediction cases from the Weather dataset under the input-96-predict-96 setting.}
  \label{fig_weather} 
\end{figure*}



\end{document}